\PassOptionsToPackage{table}{xcolor}
\documentclass{article}

\usepackage{iclr2027_conference,times}

\usepackage[utf8]{inputenc}
\usepackage[T1]{fontenc}
\usepackage[hypertexnames=false]{hyperref}
\usepackage{url}
\usepackage{booktabs}
\usepackage{amsmath,amssymb,amsthm,mathtools}
\usepackage{microtype}
\usepackage{xcolor}
\usepackage{graphicx}
\usepackage{tabularx}
\usepackage{array}
\usepackage{wrapfig}
\usepackage{needspace}
\usepackage{placeins}
\usepackage{flafter}
\usepackage{float}
\usepackage{algorithm}
\usepackage{algpseudocode}
\usepackage[most]{tcolorbox}
\usepackage{listings}
\definecolor{cardBlueF}{HTML}{6B94BD}\definecolor{cardBlueB}{HTML}{F1F6FA}
\definecolor{cardTealF}{HTML}{4C9A91}\definecolor{cardTealB}{HTML}{EFF8F6}
\definecolor{cardGoldF}{HTML}{C6A34B}\definecolor{cardGoldB}{HTML}{FCF8EA}
\definecolor{cardRoseF}{HTML}{C98286}\definecolor{cardRoseB}{HTML}{FCF1F2}
\definecolor{cardPurpleF}{HTML}{8A80A8}\definecolor{cardPurpleB}{HTML}{F5F2F8}
\definecolor{cardGrayF}{HTML}{A7B2BC}\definecolor{cardGrayB}{HTML}{F5F7F8}
\tcbset{ucoecard/.style={enhanced jigsaw,breakable,boxrule=0.5pt,arc=1.2pt,left=2mm,right=2mm,top=1.6mm,bottom=1.6mm,boxsep=0pt,
  before skip=6pt plus 2pt minus 1pt,after skip=6pt plus 2pt minus 1pt,fonttitle=\footnotesize\bfseries,fontupper=\footnotesize,
  coltitle=black,titlerule=0.4pt,toptitle=0.9mm,bottomtitle=0.9mm,pad at break*=1.5mm,parbox=false,lines before break=4}}
\newtcolorbox{promptcard}[1]{ucoecard,colframe=cardBlueF,colback=cardBlueB,colbacktitle=cardBlueB,title={#1}}
\newtcolorbox{successcard}[1]{ucoecard,colframe=cardTealF,colback=cardTealB,colbacktitle=cardTealB,title={#1}}
\newtcolorbox{diagcard}[1]{ucoecard,colframe=cardGoldF,colback=cardGoldB,colbacktitle=cardGoldB,title={#1}}
\newtcolorbox{failcard}[1]{ucoecard,colframe=cardRoseF,colback=cardRoseB,colbacktitle=cardRoseB,title={#1}}
\newtcolorbox{analysiscard}[1]{ucoecard,colframe=cardPurpleF,colback=cardPurpleB,colbacktitle=cardPurpleB,title={#1}}
\newtcolorbox{notecard}[1]{ucoecard,colframe=cardGrayF,colback=cardGrayB,colbacktitle=cardGrayB,title={#1}}
\lstdefinestyle{ucoetext}{basicstyle=\ttfamily\scriptsize,columns=fullflexible,keepspaces=true,breaklines=true,breakatwhitespace=false,
  breakindent=0pt,postbreak=\mbox{\textcolor{cardGrayF}{$\hookrightarrow$}\space},showstringspaces=false,aboveskip=3pt,belowskip=3pt,
  xleftmargin=0pt,frame=none,upquote=true}
\lstdefinestyle{ucoecode}{style=ucoetext}
\lstdefinestyle{ucoerem}{style=ucoetext,basicstyle=\ttfamily\scriptsize\color{cardRoseF}\bfseries}
\lstdefinestyle{ucoeadd}{style=ucoetext,basicstyle=\ttfamily\scriptsize\color{cardGoldF}\bfseries}

\definecolor{darkblue}{rgb}{0,0,0.5}
\definecolor{ucoerow}{HTML}{E2EFF9}
\hypersetup{colorlinks=true,citecolor=darkblue,linkcolor=darkblue,urlcolor=darkblue}

\newcommand{\cA}{\mathcal A}
\newcommand{\cB}{\mathcal B}
\newcommand{\cH}{\mathcal H}
\newcommand{\cV}{\mathcal V}

\newtheorem{proposition}{Proposition}
\newtheorem{theorem}{Theorem}

\title{Verifier--Harness Co-Evolution for Training-Free Self-Improvement on Open-Ended Tasks}

\newcommand{\arxivAuthorBlock}{%
\begin{minipage}[t]{\dimexpr\textwidth-2\tabcolsep\relax}
\centering\normalsize\bfseries
\makebox[\linewidth][s]{%
Tianyi Xu\textsuperscript{1}\thanks{\raggedright Equal contribution. Correspondence to Tianyi Xu and Huazheng Wang
(\href{mailto:xut2@oregonstate.edu}{xut2@oregonstate.edu},
\href{mailto:huazheng.wang@oregonstate.edu}{huazheng.wang@oregonstate.edu}).}\hfill
Yaolun Zhang\textsuperscript{1,4*}\hfill
Yiran Wu\textsuperscript{2,4*}\hfill
Joe Nguyen\textsuperscript{1}\hfill
Shengyu Dai\textsuperscript{3}}\\[4pt]
\makebox[\linewidth][s]{%
Saba Moeinizade\textsuperscript{3}\hfill
Zhenwen Shao\textsuperscript{3}\hfill
Qingyun Wu\textsuperscript{2,4}\hfill
Huazheng Wang\textsuperscript{1,4}}\par
\vspace{6pt}
{\normalfont\footnotesize
\makebox[\linewidth][s]{%
\textsuperscript{1}Oregon State University\hfill
\textsuperscript{2}Pennsylvania State University\hfill
\textsuperscript{3}Johnson \& Johnson\hfill
\textsuperscript{4}AG2 AI}\par}
\end{minipage}%
}
\author{\arxivAuthorBlock}
\hypersetup{%
  pdftitle={Verifier--Harness Co-Evolution for Training-Free Self-Improvement on Open-Ended Tasks},
  pdfauthor={Tianyi Xu, Yaolun Zhang, Yiran Wu, Joe Nguyen, Shengyu Dai, Saba Moeinizade, Zhenwen Shao, Qingyun Wu, Huazheng Wang}%
}

\iclrfinalcopy
\begin{document}
\raggedbottom

\maketitle
\vspace{-10pt}
\fancyhead{}
\renewcommand{\headrulewidth}{0pt}

\begin{abstract}
Training-free self-improvement lets an agent use feedback to revise the response it is about to
return and to update the harness state that shapes how it handles later tasks. On open-ended
tasks that feedback rarely comes from reference answers or executable tests, so the agent has to
rely on a model-based verifier. When the verifier is wrong, the current response can get worse,
and misleading experience can persist across the task stream. A verifier that never
changes also grows stale as the agent's responses improve and task demands shift. We introduce
\emph{Unified Co-Evolution} (UCoE), a training-free method that evolves an executable verifier
program together with the agent's harness state while all model parameters stay fixed. UCoE
synthesizes candidate verifier programs and tests them on response contrasts, pairs of responses
that differ in one declared property and have a known preferred side. It selects the candidate
whose discrimination on its weakest contrast family is highest, a quantity we call \emph{driving
capacity}. The selected verifier diagnoses one weakness in each draft, guides a single repair,
decides which response is served, and determines whether the accepted repair may enter the
harness state. As tasks complete, their diagnostics seed new candidate verifiers, which must beat
the current verifier on the same contrasts before adoption. Across six open-ended
benchmarks, UCoE averages $59.00\%$ preference against matched direct generation with Qwen3-8B
and $57.19\%$ with Qwen3.8-27B, margins of $9.00$ and $7.19$ percentage points. On three
verifiable benchmarks, the same method raises the mean from $36.06\%$ for direct generation to
$38.48\%$.
Code is available at \url{https://github.com/tianyi0216/Unified-Co-Evolution}.
\end{abstract}

\section{Introduction}

Self-improving agents use feedback in two ways. They revise the response they are about to
return \citep{selfrefine}, and they retain experience that shapes how they handle later tasks
\citep{reflexion,expel}. Recent work extends the second use to the prompt-level harness that
surrounds the model \citep{rhi2026}. In mathematics and programming, exact answers or executable
tests can ground this feedback, as in Figure~\ref{fig:motivation}a. Open-ended tasks offer no
such anchor. As Figure~\ref{fig:motivation}b shows, the agent sees only the public requirements
of a request and must judge correctness, coverage, relevance, and style on its own. Because the
same judgment decides which response is served and which experience enters the harness state, a
single mistaken judgment can harm the current task and then persist across the stream.

\begin{figure}[!t]
\centering
\includegraphics[width=1\linewidth]{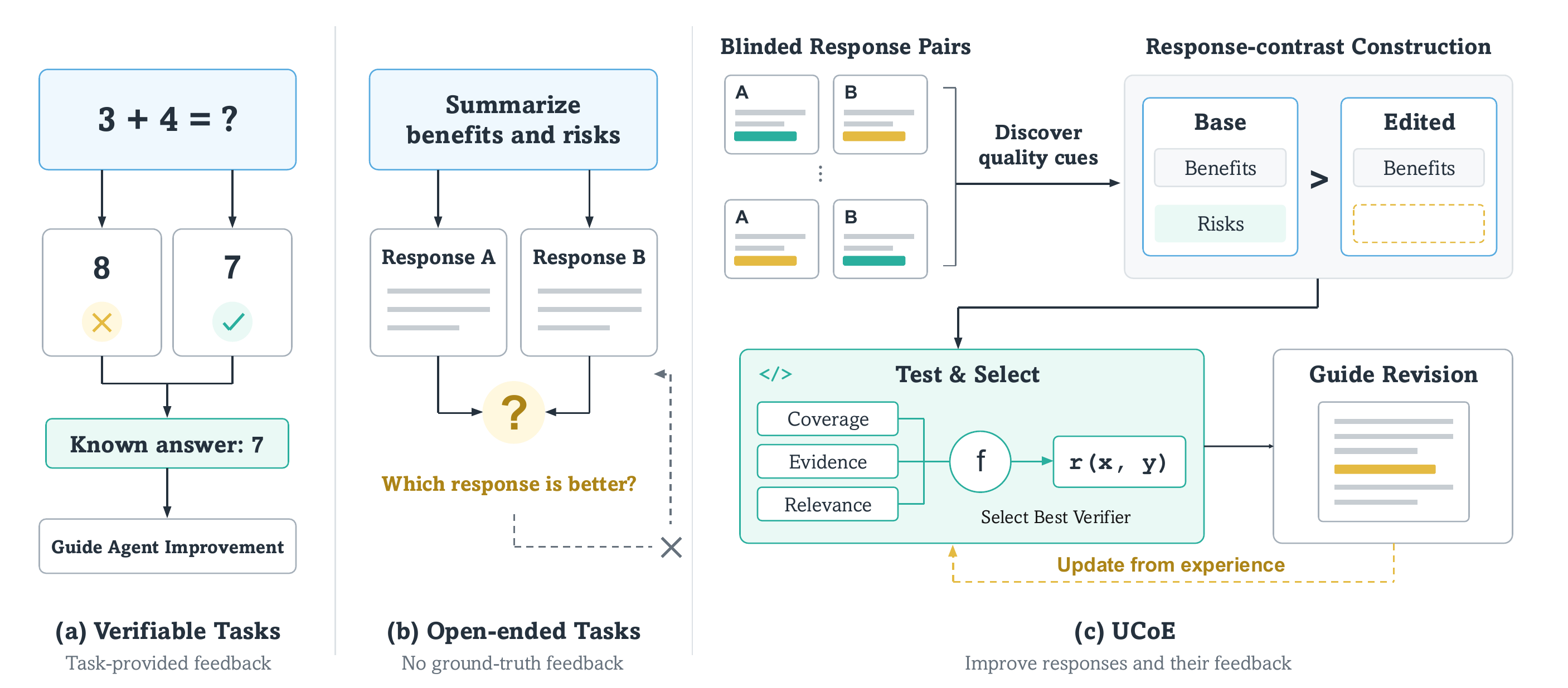}
\vspace{-20pt}
\caption{\textbf{Feedback for open-ended self-improvement.} (a) On verifiable tasks, a known
answer marks each response right or wrong and guides agent improvement. (b) On open-ended tasks
there is no ground truth, so the agent must judge for itself which response is better, and a
wrong judgment harms both the served response and the experience kept for later tasks. (c) UCoE
mines quality cues from blinded response pairs, builds response contrasts in which a base
response is preferred over an edited copy that lacks one property, tests candidate verifier
programs on them, and uses the selected verifier to guide revision while experience from
completed tasks updates it over time.}
\vspace{-16pt}
\label{fig:motivation}
\end{figure}

Existing methods obtain open-ended feedback from holistic LLM judges, natural-language rubrics,
executable reward programs, or learned verifiers. A holistic judge gives a convenient overall
score or preference, but the contribution of each quality dimension stays implicit
\citep{rar,rrd}, and judge accuracy alone is not a reliable predictor of downstream selection
gain \citep{verificationpayoff}. Rubrics make those dimensions explicit, yet generated criteria
can be incomplete, conflated, or redundant \citep{rrd}, so an aggregate score can still hide a
blind spot. Executable reward programs expose the evaluation path \citep{dreamrl,evors}, but
explicit structure does not guarantee behavior, since the selector a program induces still has
to be tested across failure families. Training-based methods learn generative reward models,
rubric generators, rubric-conditioned judges, or verifier capabilities within the policy
\citep{writingzero,rubricarm,genrubric,coverrl}, at the cost of a model-training stage and its
compute. Training-free pipelines that use prompted rubric-guided refinement or aggregate several
aspect judgments \citep{deepverifier,mav} avoid that cost but keep the verifier fixed during
deployment, where it can become stale as responses improve and task demands shift
\citep{dreamrl,evors,rqgm2026}.

To address these weaknesses, we introduce \emph{Unified Co-Evolution} (UCoE), a training-free
method that evolves an executable verifier program alongside the harness state with model parameters fixed. The program keeps every criterion explicit, contrast testing measures its behavior on each failure family, and evolution during deployment keeps it from going stale. UCoE synthesizes candidate verifier programs and tests them on response contrasts, pairs
of responses that differ in one targeted property with a declared preference
\citep{gardner2020contrast,sai2021perturbation}, grouped by property into families. It selects
the candidate with the highest \emph{driving capacity}, its discrimination on the family where it is
weakest, so that strength on easy properties cannot mask a blind spot. During deployment the
selected verifier diagnoses one weakness in the draft and guides one local repair. It then chooses
between draft and revision and decides whether the repair may enter the harness state.
Diagnostics from
completed tasks seed new candidates, which replace the current verifier only after meeting
the update criteria on the same contrasts. Figure~\ref{fig:motivation}c sketches this
loop.

We evaluate UCoE on six open-ended benchmarks and three benchmarks with verifiable endpoints
using Qwen3-8B and Qwen3.8-27B actors. UCoE has the highest preference against matched direct
generation of all compared feedback methods on every open-ended benchmark at both actor sizes,
averaging $59.00\%$ and $57.19\%$, and it improves every available native metric. On the
verifiable tasks it raises the mean from $36.06\%$ to $38.48\%$ without a loss on any benchmark.
Ablations attribute most of the gain to current-task repair, with verifier selection, verifier
evolution, and retained procedures each adding to it. Our contributions are:
\begin{itemize}
    \item We introduce UCoE, a training-free method that evolves an executable verifier program
    with the harness state, so feedback keeps pace with the agent it guides.
    \item We propose driving capacity, a verifier's discrimination on its weakest contrast family,
    and show that it bounds achievable selection gain, giving a safe criterion for verifier updates.
    \item We show state-of-the-art training-free self-improvement on nine benchmarks, with gains
    that carry over to GPT-5.4 actors and a Codex agent harness on SpreadsheetBench-2.
\end{itemize}

\section{Related work}

\noindent\textbf{Training-free self-improvement.}
Training-free methods improve the current response or retain non-parametric state that shapes
future behavior. Self-Refine revises responses within a task \citep{selfrefine}, while Reflexion,
ExpeL, Dynamic Cheatsheet, and A-MEM retain or organize reusable experience across trials and
tasks \citep{reflexion,expel,dynamiccheatsheet,amem}. ACE and Recursive Harness Self-Improvement
extend adaptation to the agent's context or prompt-level harness \citep{ace,rhi2026}. Test-time
scaling allocates additional inference computation to sampling and verification, including in
agentic workflows \citep{snell2025testtime,agenttesttime}. All of these methods take the feedback
or selection process as given. UCoE adapts that process too: its verifier program evolves
alongside the harness state, and together they determine which response is served and which
experience is kept.

\noindent\textbf{Open-ended rewards and verification.}
For tasks without exact answers, prior work constructs model-based evaluation signals from
generated rubrics, rubric anchors, recursively decomposed and reweighted criteria, or pairwise
generative rewards \citep{rar,rubricanchors,rrd,writingzero}. Other methods combine supplied
verifiers, elicit rubric-guided or aspect-specific judgments, or construct self-verifiable proxy
tasks \citep{fuse,deepverifier,mav,rlsvr}. CoNL evaluates critiques by the improvements they induce
\citep{conl}, and Eureka expresses feedback as executable reward code for simulated control
\citep{eureka}. The reliability of generated evaluators also needs testing: EvalResearchBench
measures how well agent-designed evaluators reproduce the model rankings of target benchmarks
\citep{zhang2026evalresearchbenchaiagentsdesign}. UCoE tests whole verifier programs, criteria and
aggregation included, on response contrasts with declared preference directions, and it lets the
diagnostics of completed tasks motivate later changes to the program.

\noindent\textbf{Co-evolution.}
Prior co-evolution methods update feedback mechanisms alongside the systems they guide.
Rubric-centered work co-evolves rubrics with responses or research policies, adjusts rubric
complexity during training, or jointly learns rubric generators, judges, and verifier capabilities
\citep{evorubric,drtulu,care,rubricarm,genrubric,coverrl}. DREAM-RL and EvoRS evolve executable
Reward-DAGs from rollout diagnostics or on-policy response traces \citep{dreamrl,evors}. Other work
pairs task generators with solvers, compares changing evaluators with fixed anchors, adapts
multi-agent organization at test time, or evolves programmatic annotators from labeled validation
data \citep{scope,rqgm2026,zhang2026evochamber,xu2026evopool}. Most of these methods rely on
parameter training, on-policy reward traces, or labeled anchors. UCoE keeps the co-evolutionary
view but moves all adaptation outside the model parameters. The criteria and aggregation of an
executable verifier program change together with the agent's harness state, diagnostics from
completed tasks suggest those changes, and response contrasts with declared preferences give a
common basis for judging them.

\section{Method}
\label{sec:method}

We introduce UCoE in this section. Figure~\ref{fig:algorithm} gives an overview of the method.
See App.~\ref{app:full-algorithm} for the complete algorithm.

\begin{figure}[!t]
\centering
\includegraphics[width=1\linewidth]{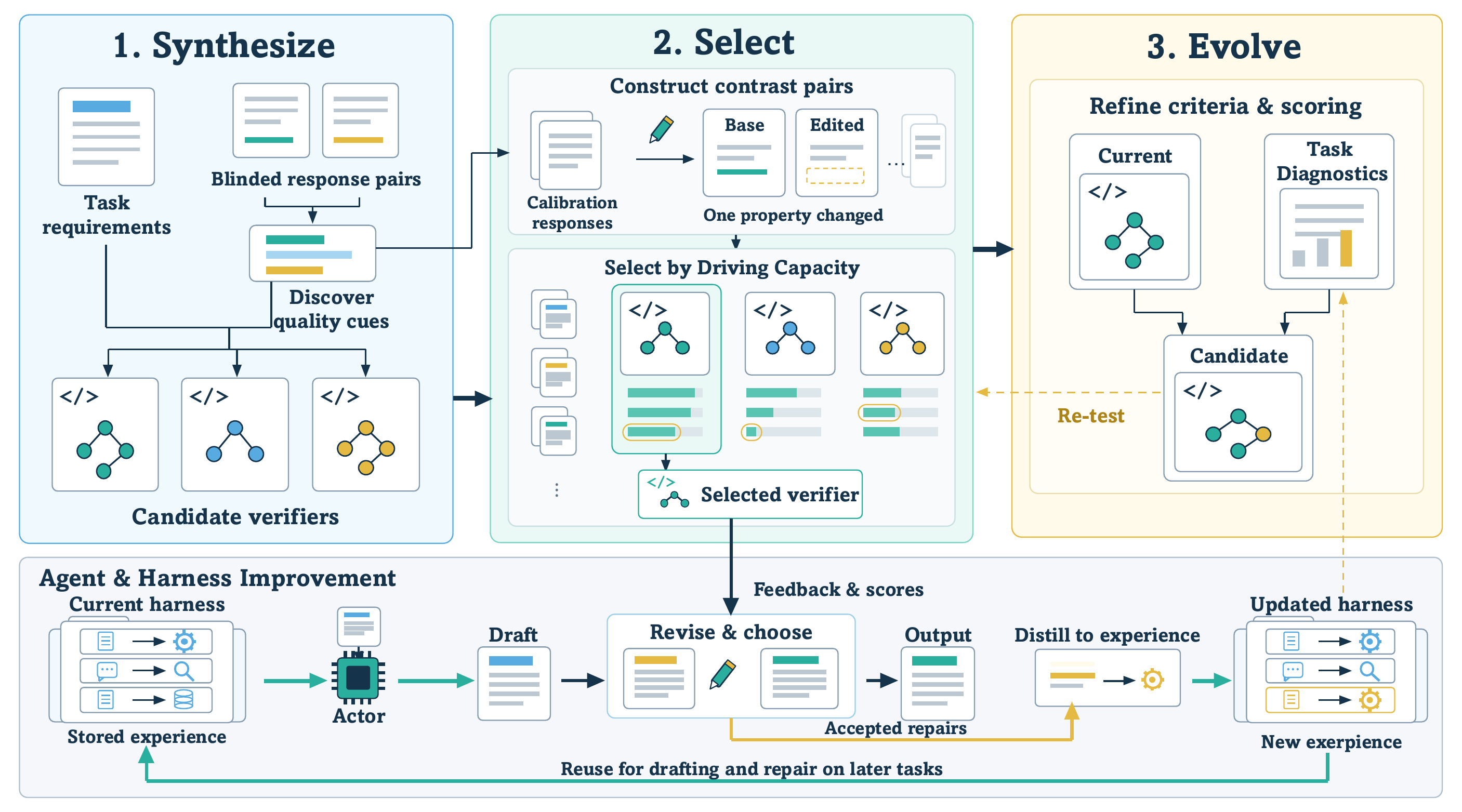}
\vspace{-20pt}
\caption{\textbf{UCoE workflow.} Top, the verifier side: (1) Synthesize proposes candidate
verifier programs from the task requirements and from quality cues discovered in blinded response
pairs. (2) Select edits calibration responses one property at a time to form response contrasts
and deploys the candidate that is strongest on its weakest family. (3) Evolve combines the
current verifier with diagnostics from completed tasks into a refined candidate, which is
re-tested on the same contrasts before it can replace the current verifier. Bottom, the agent
side: the fixed actor drafts with the current harness state, the deployed verifier guides one
revision and chooses between draft and revision, and accepted repairs are distilled into
experience that the updated harness state reuses on later tasks.}
\vspace{-16pt}
\label{fig:algorithm}
\end{figure}

\subsection{Problem setup}

Let $x_1,\ldots,x_T$ be an ordered task stream and $F_{\theta_0}$ a fixed actor. Each method
$\cA$ processes the same stream with its own harness state $H_t$, and no method changes any
model parameter. What $H_t$ contains, and how the actor reads it, depends on the agent system.
It may hold instructions, plans, tools, skills, or memory that shape later behavior. In the main
experiments it is a bounded procedural memory, and every adaptive method receives an
isolated instance that starts empty and follows the same capacity, retrieval, and update rules.

All methods follow the same per-task lifecycle. Given $H_t$, the actor first generates a draft
$y_t^0=F_{\theta_0}(x_t;H_t)$. The method then applies its own feedback and selection rules,
which may produce at most one revision $y_t^R$, and returns $y_t\in\{y_t^0,y_t^R\}$. Only after
$y_t$ has been returned does the method apply its harness update
$H_{t+1}=U_{\cA}(H_t,\tau_t^{\cA})$, where $\tau_t^{\cA}$ is the completed task trace. This
ordering lets feedback revise the current response while task-derived state affects only later
tasks.

Let $\mathcal I\subseteq\{1,\ldots,T\}$ denote the tasks included in final evaluation. We score
each method prequentially:
\begin{equation}
S_{\rm preq}(\cA)=\frac{1}{|\mathcal I|}\sum_{t\in\mathcal I}Q_t(x_t,y_t),
\qquad \theta_t=\theta_0,
\label{eq:prequential-objective}
\end{equation}
where $Q_t$ is the final evaluator's score for task $t$.
Throughout the stream, methods see public task requirements and their own model-generated
artifacts, never benchmark answers or final-evaluator feedback. The score is prequential because
each $y_t$ is produced from the state inherited from earlier tasks, before task $t$ contributes
to $H_{t+1}$.

\subsection{Verifier synthesis and contrast-based selection}
\label{sec:theory}

\noindent\textbf{Initial verifier synthesis.}
UCoE represents a verifier as a program that maps a task $x$ and a response $y$ to a vector of
criterion scores and a scalar reward:
\begin{equation}
\cV=(\Phi,A,D),\qquad
\mathbf{s}_{\cV}(x,y)=\bigl(\phi_1(x,y),\ldots,\phi_M(x,y)\bigr),\qquad
r_{\cV}(x,y)=A\!\left(\mathbf{s}_{\cV}(x,y)\right),
\label{eq:reward-interface}
\end{equation}
where $\Phi$ is the set of criterion scorers, $A$ combines their scores into a reward in which
higher is better, and $D$ turns the weakest participating criterion into repair feedback.
Semantic criteria are scored by a fixed language model that we call the verifier model, which
also proposes programs and diagnoses weaknesses. Evolution may add criteria to $\Phi$ and change
$A$. The diagnosis procedure, the program interpreter, and the parameters of the verifier model
never change.

Public task metadata states broad requirements but can miss domain-specific qualities that recur
in real responses. UCoE therefore begins with relation discovery. On a separate development set,
the actor and a larger frozen comparison model answer the same prompts, and the verifier model
reads each pair in random order with system identities hidden and turns differences that recur
across prompts into general criteria together with targeted edits that later serve contrast
construction. These fixed relations form
$\mathcal R$. Using the task metadata, the shared criteria, and $\mathcal R$, the verifier model
then proposes candidate programs. Each candidate specifies how to combine the criterion scores and
may add up to two criteria of its own. A lightweight validity check removes malformed or
non-executable programs, and the surviving pool $\mathcal K_0$ is scored on separate response
contrasts, where driving capacity picks the initial verifier.

\noindent\textbf{Response-contrast construction.}
Candidate verifiers need a common behavioral test that shows whether their scores respond to
intended changes in quality and which candidate tracks those changes most reliably. Arbitrary
response pairs differ in many ways at once, so UCoE instead constructs controlled contrasts that
isolate one property at a time \citep{gardner2020contrast,sai2021perturbation}. Each contrast
starts from a calibration response, written by the actor for a public prompt that is disjoint
from deployment, and applies one targeted edit that breaks a single property while leaving
everything else in place. The base response $y^+$ and the degraded response $y^-$ then differ only
in that property, and the declared preference for $y^+$ concerns that property alone. Three
general families cover omitted required content, unsupported claims, and irrelevant material.
Relation discovery adds benchmark-specific families under the same rule: a relation is kept only
if it recurs in at least three development prompts and can be expressed as one binary criterion
with one edit that breaks only that property. The resulting pairs form the response contrasts $\cB$ and are
split by source before any verifier is scored. The selection split ranks candidates, and the
held-out split later compares a candidate with the current verifier.
See App.~\ref{app:construction-details} for construction details.

\noindent\textbf{Driving capacity and selection.}
Static validation only establishes that a program runs. Choosing among valid programs at random
ignores how they behave on the contrasts, and ranking them by mean accuracy lets strong
performance on easy families hide a blind spot on a hard one. UCoE therefore ranks candidates by
their weakest family.

Each verifier $\cV$ induces a strict-margin selector $h_{\cV}=h_{\cV,\epsilon}$ on each
contrast, which picks $y^+$ over $y^-$ only when the reward of $y^+$
exceeds that of $y^-$ by more than $\epsilon$. A tie or a sub-margin lead counts as a failure. For
a contrast family $j$, let $p_j(h)$ be the probability that selector $h$ picks $y^+$, and define
its centered accuracy as
\begin{equation}
J_j(h)=2p_j(h)-1
=2\Pr[h(x,y^+,y^-)=y^+\mid j]-1.
\label{eq:family-separation}
\end{equation}
Chance-level selection gives $J_j=0$, and positive or negative values mean above- or below-chance
selection. The signed driving capacity of $\cV$ on the response contrasts $\cB$ is one quarter of
its worst family value, and the nonnegative driving capacity clips it at zero:
\begin{equation}
\widetilde{\mathcal C}_{\cB}(\cV)
\coloneqq\frac14\min_j J_j(h_{\cV}),
\qquad
\mathcal C_{\cB}(\cV)\coloneqq
\max\{0,\widetilde{\mathcal C}_{\cB}(\cV)\}.
\label{eq:constructed-driving-capacity}
\end{equation}
The factor of one quarter gives the quantity a covariance meaning. For valid contrasts, if one
of the two response positions is drawn uniformly at random, $J_j/4$ is the covariance between
the selector choosing that position and that position holding the targeted property, so driving
capacity is the smallest such selection covariance over the tested families. See Proposition~\ref{cor:pairwise-capacity} in
App.~\ref{app:pairwise-theory} for the precise statement.

The minimum also has a robustness meaning. Suppose the $m$ tested families are mixed with weights
$w\in\Delta_m=\{w:w_j\geq0,\sum_jw_j=1\}$, so that the selector succeeds with probability
$p_w(h_{\cV})=\sum_jw_jp_j(h_{\cV})$. The least favorable mixture puts all of its mass on a
weakest family, and Theorem~\ref{thm:robust-selection} in App.~\ref{app:finite-pool-theory} shows that
$\inf_{w\in\Delta_m}p_w(h_{\cV})=\min_jp_j(h_{\cV})=1/2+2\widetilde{\mathcal C}_{\cB}(\cV)$.
Maximizing signed driving capacity therefore maximizes the worst-case probability of selecting the
preferred response over every mixture of the tested families. UCoE optimizes the signed value
rather than the clipped one because clipping would make all below-chance candidates look alike.
Its empirical objective at checkpoint $k$, with $k=0$ denoting initialization, is
\begin{equation}
 \widehat\cV\in
 \operatorname*{arg\,max}_{\cV\in\mathcal K_k}
\min_j\widehat J_j(h_{\cV}),
\label{eq:maxmin-selection}
\end{equation}
where $\mathcal K_k$ is the candidate pool at that checkpoint and $\widehat J_j$ is centered
accuracy on the selection split. Proposition~\ref{prop:finite-selection} bounds how far the empirical
maximizer can fall short of the best candidate in the pool. The winner from $\mathcal K_0$ becomes
the initial verifier $\cV_0$. Later pools also contain the current verifier, and their winner must
meet the update criteria of Section~\ref{sec:verifier-lifecycle} before it replaces the current
verifier.

Driving capacity measures discrimination, and discrimination pays off only when the compared
responses differ in quality. Let $Z_i\in\{0,1\}$ mark whether response $i$ of a pair is adequate
under the final evaluation, let $S_h\in\{1,2\}$ be the index that $h$ selects, and let
$q=\Pr(Z_1\ne Z_2)$ be the fraction of pairs on which the choice matters. For $q>0$, the
mixed-pair centered accuracy $J_{\rm mix}(h)=2\Pr(Z_{S_h}=1\mid Z_1\ne Z_2)-1$ measures how often
the selector picks the adequate response when the two differ. Pairs of equal adequacy contribute
nothing, so for two selectors $h$ and $h'$ on the same fixed pairs,
\begin{equation}
\mathbb E[Z_{S_{h'}}-Z_{S_h}]=\frac q2
\{J_{\rm mix}(h')-J_{\rm mix}(h)\}.
\label{eq:mixed-pair-gain}
\end{equation}
Proposition~\ref{cor:opportunity-discrimination} records this decomposition: the actor supplies the
opportunity $q$, the selector supplies the ability $J_{\rm mix}$ to use it, and when $q=0$ no
selector can change expected adequacy. Whether gains on constructed contrasts carry over to the
actor's natural revisions depends on how well the contrast families represent the errors those
revisions contain.

\subsection{Response revision and co-evolution}
\label{sec:verifier-lifecycle}

\noindent\textbf{Response revision and selection.}
Let $\cV_k$ be the verifier active at task $t$. It scores the draft $y_t^0$, and from that fixed
score vector and its weakest participating criterion the diagnosis procedure returns a target
criterion $c_t$, localized evidence $w_t$, and a repair instruction $u_t$, written
$(c_t,w_t,u_t)=D_{\cV_k}(x_t,y_t^0)$. Conditioned on this feedback and on $H_t$, the fixed actor
makes one local repair while preserving unrelated content:
\begin{equation}
y_t^R=F_{\theta_0}(x_t,y_t^0,c_t,w_t,u_t;H_t).
\label{eq:ucoe-proposal}
\end{equation}
The same $\cV_k$ then scores the draft and the revision, and the revision is served only when its
reward lead exceeds a margin $\epsilon\geq0$:
\begin{equation}
y_t=\begin{cases}
y_t^R,&r_{\cV_k}(x_t,y_t^R)>r_{\cV_k}(x_t,y_t^0)+\epsilon,\\
y_t^0,&\text{otherwise}.
\end{cases}
\label{eq:revision-acceptance}
\end{equation}
Whenever the comparison cannot be made, because the feedback was unusable, the revision is
missing, or a score is invalid, and whenever the lead is a tie or falls below the margin, the
draft is served. We write $a_t\in\{0,1\}$ for this acceptance decision.

\noindent\textbf{Harness update.}
After the response is returned, a system-specific operator updates the harness state:
\begin{equation}
H_{t+1}=\operatorname{UpdateHarness}\bigl(H_t,x_t,y_t^0,y_t^R,c_t,w_t,
a_t\bigr).
\label{eq:behavior-update}
\end{equation}
Only an accepted repair, $a_t=1$, may add task-derived content to $H_{t+1}$. In our bounded
procedural-memory implementation, an accepted revision yields at most one generalized
trigger--action procedure once task-specific content has been removed, and a rejected revision
may update metadata only. See App.~\ref{app:protocol-settings} for the retrieval,
deduplication, and eviction rules.

\noindent\textbf{Verifier evolution.}
As task demands and $H_t$ change, the draft and revision pairs that the actor produces can drift
away from the contrasts on which the initial verifier was selected, and its criteria or
aggregation can go stale. Equation~\eqref{eq:mixed-pair-gain} makes the cost concrete: the
selection gain on the pairs the actor now produces depends on how well the current selector
discriminates on exactly those pairs. UCoE therefore revisits the verifier at scheduled
checkpoints, after $y_t$ and $H_{t+1}$ have been finalized. The verifier model synthesizes new
candidates from the current verifier's definitions and from unlabeled summaries of the completed
tasks, namely how often revisions were accepted, which criteria were repeatedly the weakest, and
how the harness state has grown. A candidate may append criteria or change the aggregation. After
validation, UCoE adds the current verifier to the pool and repeats the maximin selection of
Section~\ref{sec:theory} on the fixed contrast families. The winner replaces the current verifier
only if it meets the update criteria: it must improve worst-family discrimination on the selection
split, win more held-out contrasts than it loses against the current verifier, and stay within a
fixed regression tolerance on every existing family. Otherwise the current verifier is retained. Any
adoption takes effect on the next task. See App.~\ref{app:empirical-update} for the formal update
criteria and App.~\ref{app:implementation-details} for implementation details.

\section{Experimental setup}

\begin{table}[tb!]
\centering
\small
\setlength{\tabcolsep}{3pt}
\renewcommand{\arraystretch}{1.10}
\caption{\textbf{Open-ended preference against matched direct generation.} Each entry is the
tie-adjusted percentage of tasks on which a method's response is preferred to the Direct
Generation response of the same actor, as judged by Qwen3.5-397B-A17B in both presentation orders
with ties given half credit, so $50$ is parity and higher is better. All methods share the actor,
the task order, and the fixed-model, one-revision protocol. Overall is the unweighted mean over the
six benchmarks, and bold marks the best method in each column.}
\label{tab:cross-benchmark}
\resizebox{\linewidth}{!}{%
\begin{tabular}{lccccccc}
\toprule
\textbf{Method} & \shortstack{\textbf{Arena-}\\\textbf{Hard}} & \shortstack{\textbf{Writing}\\\textbf{Prompts}} & \textbf{WildBench} & \shortstack{\textbf{Writing}\\\textbf{Bench}} & \shortstack{\textbf{Multi}\\\textbf{Challenge}} & \shortstack{\textbf{Refine}\\\textbf{Bench}} & \textbf{Overall} \\
\midrule
\multicolumn{8}{@{}l}{\textbf{Qwen3-8B}} \\
Direct Generation & $50.00$ & $50.00$ & $50.00$ & $50.00$ & $50.00$ & $50.00$ & $50.00$ \\
LLM as a Judge & $50.60$ & $50.54$ & $46.34$ & $49.50$ & $38.87$ & $51.98$ & $47.97$ \\
RaR~\citep{rar} & $51.20$ & $50.54$ & $48.54$ & $51.90$ & $38.48$ & $51.26$ & $48.65$ \\
DREAM~\citep{dreamrl} & $45.60$ & $43.90$ & $43.12$ & $52.60$ & $31.05$ & $53.78$ & $45.01$ \\
SpyRL~\citep{rlsvr} & $51.10$ & $49.27$ & $35.74$ & $47.00$ & $33.98$ & $53.06$ & $45.03$ \\
\rowcolor{ucoerow}
\textbf{UCoE} & $\mathbf{58.30}$ & $\mathbf{62.79}$ & $\mathbf{57.42}$ &
$\mathbf{56.90}$ & $\mathbf{57.42}$ & $\mathbf{61.15}$ & $\mathbf{59.00}$ \\
\midrule
\multicolumn{8}{@{}l}{\textbf{Qwen3.8-27B}} \\
Direct Generation & $50.00$ & $50.00$ & $50.00$ & $50.00$ & $50.00$ & $50.00$ & $50.00$ \\
LLM as a Judge & $49.50$ & $48.00$ & $48.49$ & $50.30$ & $36.33$ & $48.74$ & $46.89$ \\
RaR~\citep{rar} & $51.60$ & $50.49$ & $47.56$ & $50.60$ & $43.36$ & $44.78$ & $48.07$ \\
DREAM~\citep{dreamrl} & $53.80$ & $59.96$ & $48.44$ & $52.10$ & $46.48$ & $48.38$ & $51.53$ \\
SpyRL~\citep{rlsvr} & $43.40$ & $58.84$ & $41.89$ & $48.60$ & $36.33$ & $42.99$ & $45.34$ \\
\rowcolor{ucoerow}
\textbf{UCoE} & $\mathbf{59.70}$ & $\mathbf{63.92}$ & $\mathbf{55.42}$ & $\mathbf{54.40}$ & $\mathbf{51.95}$ & $\mathbf{57.73}$ & $\mathbf{57.19}$ \\
\bottomrule
\end{tabular}}
\vspace{-20pt}
\end{table}

\noindent\textbf{Benchmarks and models.}
Our open-ended evaluation spans six benchmarks. Arena-Hard \citep{li2024arenahard} and WildBench
\citep{lin2024wildbench} test general assistance on challenging user requests. WritingPrompts
\citep{fan2018hierarchical} tests story generation, and WritingBench \citep{wu2025writingbench}
covers writing across diverse domains. MultiChallenge \citep{sirdeshmukh2025multichallenge}
targets multi-turn instruction following, and the Free-Form split of RefineBench
\citep{lee2025refinebenchevaluatingrefinementcapability} tests response refinement. To check
whether the same method transfers beyond model-judged evaluation, we add three benchmarks with
objective endpoints. IFBench tests compliance with verifiable instruction constraints
\citep{pyatkin2025ifbench}, Hard-MATH is our competition-mathematics stream drawn from MATH-500
and AIME 2022--2025 \citep{hendrycks2021math,lightman2024verify}, and BigCodeBench-Hard tests code
generation through execution \citep{zhuo2024bigcodebench}. We run Qwen3-8B and Qwen3.8-27B \citep{qwen3} as actors and use Qwen3.6-35B-A3B as the fixed verifier model for program synthesis, semantic
criterion scoring, and diagnosis. In relation discovery, the larger comparison model is
Qwen3.8-27B for the Qwen3-8B actor and Qwen3.5-397B-A17B for the Qwen3.8-27B actor.
See App.~\ref{app:benchmarks} for splits and sizes and App.~\ref{app:hyperparameters} for
default hyperparameters.

\noindent\textbf{Baselines.}
We compare UCoE with common baselines for open-ended self-improvement, chosen to cover different
forms of feedback. Direct Generation uses no feedback and provides the raw-response reference.
LLM as a Judge makes a fixed holistic judgment of response quality \citep{zheng2023judging}. RaR
scores responses against task-specific flat rubrics \citep{rar}. DREAM organizes dependent
criteria into a Reward-DAG \citep{dreamrl}. SpyRL derives feedback from a task-transformation
game with self-verifiable rewards \citep{rlsvr}. We adapt all four feedback baselines to the same
fixed-model, one-revision protocol: each method uses its own feedback to guide at most one
revision, scores the draft and the revision with its own signal, and returns the revision only
when it passes the method's acceptance rule. Stateful methods receive separate procedural memories
with the same capacity and update rules, and all implementations transfer the feedback mechanisms
without reproducing the original parameter-training pipelines.
See App.~\ref{app:baselines} and Table~\ref{tab:baseline-adaptations} for each baseline's
implementation and acceptance rule.

\noindent\textbf{Evaluation.}
Open-ended benchmarks do not share an exact notion of correctness, and their native scores are not
directly comparable, so our primary endpoint is tie-adjusted pairwise preference against the
matched Direct Generation response. This follows established pairwise evaluation protocols for
open-ended model outputs \citep{zheng2023judging,li2024arenahard} and places every method on the
same relative scale. Qwen3.5-397B-A17B judges each pair of full responses in both presentation
orders without receiving method identities. We merge the two judgments at the task level and
assign half credit to ties, so values above $50\%$ favor the evaluated method over Direct
Generation. We also report the native benchmark-specific metrics wherever they exist, together
with accuracy or Pass@1 on the verifiable tasks. See App.~\ref{app:statistical-tests} for
the full evaluation protocols.

\section{Results}
\label{sec:results}

\begin{table}[tb!]
\centering
\small
\setlength{\tabcolsep}{5pt}
\renewcommand{\arraystretch}{1.08}
\caption{\textbf{Verifiable tasks with Qwen3-8B.} Accuracy for IFBench and Hard-MATH,
Pass@1 for BigCodeBench-Hard.}
\label{tab:verifiable-performance}
\begin{tabular}{lcccc}
\toprule
\textbf{Method} & \textbf{IFBench} & \textbf{Hard-MATH} & \textbf{BigCodeBench-Hard} & \textbf{Overall} \\
\midrule
Direct Generation & $30.00$ & $56.54$ & $\mathbf{21.62}$ & $36.06$ \\
LLM as a Judge & $33.33$ & $55.24$ & $16.89$ & $35.15$ \\
RaR & $32.00$ & $57.33$ & $18.92$ & $36.08$ \\
DREAM & $28.67$ & $\mathbf{59.42}$ & $18.24$ & $35.44$ \\
SpyRL & $30.67$ & $58.12$ & $18.24$ & $35.68$ \\
\rowcolor{ucoerow}
\textbf{UCoE} & $\mathbf{34.67}$ & $59.16$ & $\mathbf{21.62}$ & $\mathbf{38.48}$ \\
\bottomrule
\end{tabular}
\vspace{-20pt}
\end{table}

\noindent\textbf{UCoE wins on every open-ended benchmark.} Table~\ref{tab:cross-benchmark} shows that UCoE is preferred
over matched direct generation in all twelve benchmark and actor settings, and that its
preference is the largest of any feedback method in every one of them. It averages $59.00\%$ with
Qwen3-8B and $57.19\%$ with Qwen3.8-27B, gains of $9.00$ and $7.19$ points over direct generation,
and the gains span general assistance, writing, multi-turn instruction following, and refinement.
The other feedback methods often fail to improve on direct generation. Even the best
baseline on each benchmark averages only $49.26\%$ at 8B and $51.60\%$ at 27B, and none exceeds
parity on WildBench at 8B. Every baseline receives feedback and one revision under the same
protocol, so the gap reflects the feedback signal itself rather than the opportunity to revise.
We also evaluate under each benchmark's native metric, and the same picture holds, with UCoE
improving over direct generation in every setting. See App.~\ref{app:benchmark-specific-results},
\ref{app:cost-analysis}, \ref{app:transfer-results}, \ref{app:statistical-results}, and
\ref{app:writingbench-natural-audit} for native-metric results, listed-price cost, backbone
transfer, pointwise intervals for Table~\ref{tab:cross-benchmark}, and an independent-critic
audit of the WritingBench revisions.

\noindent\textbf{UCoE also improves verifiable tasks.} The same lifecycle helps where correctness can be checked. In
Table~\ref{tab:verifiable-performance}, UCoE raises the three-benchmark mean from $36.06\%$ to
$38.48\%$. It improves IFBench by $4.67$ points and Hard-MATH by $2.62$ points while matching
direct generation on BigCodeBench-Hard. On code, every other feedback method lowers Pass@1 below
direct generation, and UCoE is the only method that preserves it, while on Hard-MATH DREAM
edges it by $0.26$ points.

\begin{wraptable}{r}{0.50\linewidth}
\vspace{-20pt}
\centering
\small
\setlength{\tabcolsep}{4pt}
\renewcommand{\arraystretch}{1.08}
\caption{\textbf{Arena-Hard ablations with Qwen3-8B.} Tie-adjusted preference against direct generation.}
\label{tab:ablation-summary}
\begin{tabularx}{\linewidth}{@{}Xr@{}}
\toprule
\textbf{Configuration} & \textbf{vs. direct \%} \\
\midrule
\rowcolor{ucoerow}
\textbf{UCoE} & $\mathbf{58.30}$ \\
Random initial verifier & $56.70$ \\
Fixed verifier & $56.60$ \\
Harness-state updates only & $50.80$ \\
No memory updates & $56.70$ \\
\bottomrule
\end{tabularx}
\vspace{-12pt}
\end{wraptable}

\noindent\textbf{Every component contributes.} Table~\ref{tab:ablation-summary} removes one component
of UCoE at a time on Arena-Hard, and every removal lowers preference. Removing current-task
repair, so that the verifier only updates the harness state, costs the most, from $58.30\%$ to
$50.80\%$. Replacing driving-capacity selection with a random initial verifier, freezing the
verifier instead of updating it at checkpoints, and disabling harness-state updates from accepted
repairs each lower preference to between $56.60\%$ and $56.70\%$. The ablations confirm each part
of the design: verifier-guided repair, verifier selection by driving capacity, verifier updates at
checkpoints, and harness-state updates from feedback all contribute. See
App.~\ref{app:hyperparameter-sensitivity} for the sensitivity study.

\Needspace{14\baselineskip}
\begin{wrapfigure}{r}{0.45\linewidth}
\vspace{-16pt}
\centering
\small
\includegraphics[width=\linewidth,trim=0 10 0 15,clip]{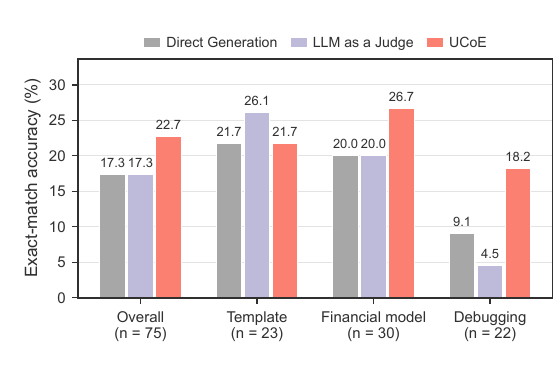}
\vspace{-15pt}
\caption{\textbf{UCoE inside the Codex agent harness on SpreadsheetBench-2.} Exact-match accuracy
on a 75-task stream, overall and by task category, with GPT-5.6-luna as the actor.}
\label{fig:ssb2-harness}
\vspace{-12pt}
\end{wrapfigure}
\noindent\textbf{UCoE transfers to a Codex agent harness.} The main experiments wrap a bare actor in a
prompt-level harness with procedural memory. To test whether UCoE also works inside a full
coding-agent harness, we run it in the Codex harness on SpreadsheetBench-2
\citep{zhu2026spreadsheetbench2evaluatingagents}, where the agent
edits real workbooks with its tools and the benchmark's checker scores each result after
recalculating every formula. The harness state here is the agent's own context, its tools, a
task skill file, and an experience block in the prompt, so the study also tests whether UCoE's
harness-state updates transfer beyond the procedural memory of the main experiments.
GPT-5.6-luna is the actor and GPT-5.6-sol the verifier model, and
Figure~\ref{fig:ssb2-harness} reports a 75-task stream with checkpoints after tasks 34 and 67. UCoE reaches
$22.7\%$ exact-match accuracy against $17.3\%$ for both Direct Generation
and LLM as a Judge. The gain comes from financial-model and debugging tasks, where the fixed
judge matches or trails direct generation, while on template tasks LLM as a Judge leads.
See App.~\ref{app:ssb2-harness} for details.

\begin{figure}[!htbp]
\centering
\begin{minipage}{0.84\linewidth}
\centering
\begin{minipage}[t]{0.485\linewidth}
\centering
\includegraphics[width=\linewidth]{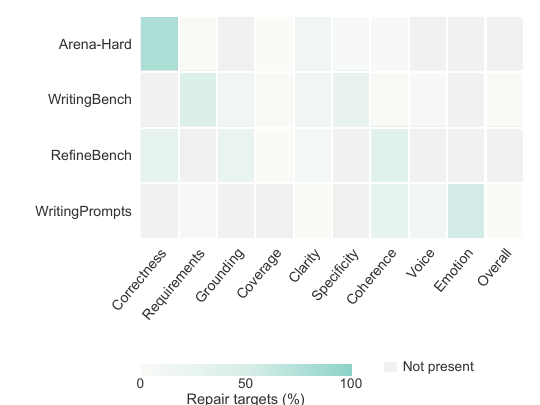}
\vspace{-2pt}
{\small\textbf{(a)} Repair-target distributions\par}
\end{minipage}\hfill
\begin{minipage}[t]{0.485\linewidth}
\centering
\includegraphics[width=\linewidth]{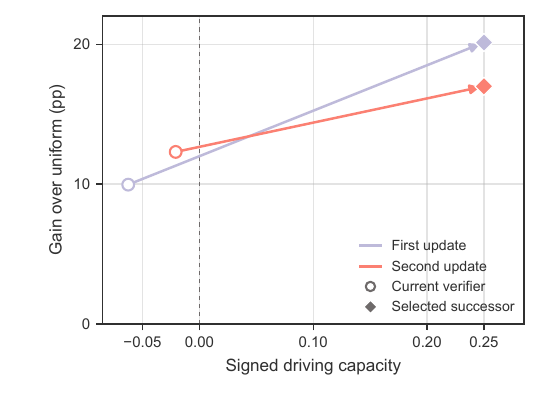}
\vspace{-2pt}
{\small\textbf{(b)} Verifier evolution\par}
\end{minipage}

\par\vspace{2pt}
\begin{minipage}[t]{0.485\linewidth}
\centering
\includegraphics[width=\linewidth]{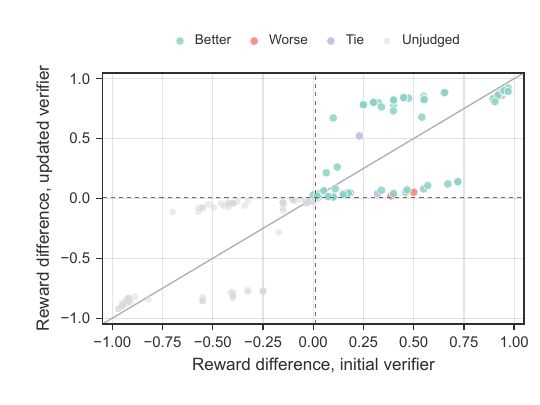}
\vspace{-2pt}
{\small\textbf{(c)} Fixed-proposal score shifts\par}
\end{minipage}\hfill
\begin{minipage}[t]{0.485\linewidth}
\centering
\includegraphics[width=\linewidth]{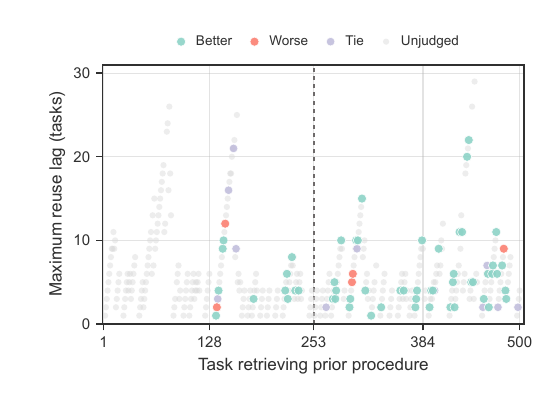}
\vspace{-2pt}
{\small\textbf{(d)} Procedure reuse\par}
\end{minipage}
\end{minipage}
\vspace{-5pt}
\caption{\textbf{Mechanism diagnostics.} (a) Share of repair targets per criterion group.
Gray marks groups absent from the deployed verifier. (b) Two verifier updates in WritingPrompts. Each arrow runs from the current verifier to the candidate
selected by driving capacity, in signed driving capacity against selection gain on frozen
response pairs. (c) Reward margins of 360 Arena-Hard proposals under the initial and updated
verifier. Dashed lines mark the acceptance margin, and colors give the external judgment for the
94 proposals the updated verifier selects. (d) Reuse lag of stored procedures at 381 Arena-Hard
tasks, colored by outcome. The dashed line marks the switch to mathematics
tasks.}
\label{fig:ucoe-insights}
\end{figure}
\noindent\textbf{Verifier updates and stored procedures behave as intended.} Figure~\ref{fig:ucoe-insights}(a) shows which criterion
the verifier targets when it diagnoses a draft. Correctness receives $78.9\%$ of Arena-Hard
repairs, requirements lead on WritingBench at $42.5\%$, coherence on RefineBench at $36.5\%$, and
emotion on WritingPrompts at $53.0\%$, so the deployed verifier directs repairs at the qualities each task
type demands. Figure~\ref{fig:ucoe-insights}(b) shows two verifier updates in WritingPrompts. Each candidate selected by driving capacity raises signed driving capacity
on the contrasts, from $-0.0625$ and $-0.0208$ to $0.25$, and also raises selection gain on
separate frozen response pairs, by $10.16$ and $4.69$ points, so gains on constructed contrasts
carry over to natural pairs. Figure~\ref{fig:ucoe-insights}(c) rescores 360 Arena-Hard proposals
under the initial and the updated verifier. The update widens some reward margins and narrows
others, which changes the set of accepted revisions, and among the 94 proposals it selects,
external judgments favor $73$ over the draft, $7$ favor the draft, and $14$ tie. The updated
verifier therefore accepts mostly real improvements. Figure~\ref{fig:ucoe-insights}(d) tracks
when stored procedures are retrieved. Of 639 retrieval events, 76 reuse a procedure after at
least ten intervening tasks, and on the 70 retrieving tasks with external judgments the served
revision beats the draft $54$ times, loses $5$, and ties $11$, so experience in the harness state
is reused long after it was stored and the tasks that reuse it are mostly judged better than
their drafts.

\FloatBarrier
\Needspace{8\baselineskip}
\section{Conclusion}

We introduced UCoE, a training-free framework in which an executable verifier program and the
harness state around a fixed actor evolve together. Response contrasts choose the verifier and
test every candidate that would replace it, and the verifier in turn shapes each response and the
experience the agent keeps. On six open-ended benchmarks, UCoE improves on matched direct
generation at both Qwen actor sizes, and on three verifiable benchmarks it raises the mean while
preserving Pass@1 on BigCodeBench-Hard. Repairing the current task contributes most of the gain,
verifier updates measurably change selection behavior, and procedures stored from accepted
repairs are reused many tasks later. Training-free self-improvement has so far adapted responses
and harness state around a fixed feedback process. UCoE extends that adaptation to the feedback
process itself, without touching model parameters. See App.~\ref{app:limitations} for
limitations.

\subsection*{AI use statement}

Generative AI tools assisted with writing assistance and polishing of the manuscript. The authors
take responsibility for the full content of this work.

\subsection*{Ethics statement}

This work does not recruit human subjects or collect new personal data. It evaluates language
models on existing research benchmarks. An evolving verifier can nevertheless reinforce its own
errors or biases through revision and persistent harness state, which may retain sensitive
information without appropriate filtering. Improved generation may also amplify misinformation or
other harmful content, while additional inference consumes computational resources. Response
contrasts do not establish safety, fairness, or privacy. Consequential deployments should therefore
use independent safety evaluation, human oversight, explicit retention and deletion rules, and
rollback mechanisms.

\subsection*{Reproducibility statement}

Algorithm~\ref{alg:ucoe} specifies the full lifecycle and update order. App.~\ref{app:additional-theory}
states the assumptions and gives complete proofs. App.~\ref{app:experimental-protocols}
documents benchmark splits, models, baseline implementations, hyperparameters, contrast
construction, verifier rules, and the procedural-memory implementation. App.~\ref{app:additional-results} reports native-metric results, cost, cross-backbone transfer,
hyperparameter sensitivity, the intervals behind the main tables, a natural-revision audit, and
qualitative examples.

\addtolength{\bibsep}{-1pt}
\bibliography{coevolve_reward}

@misc{rlsvr,
      title={From RLVR to RLSVR: Task Transformation Induces Self-Verifiable Rewards for Open-Ended LLM Self-Improvement}, 
      author={Qinsi Wang and Jing Shi and Huazheng Wang and Kun Wan and Yiran Wu and Bo Liu and Qingyun Wu and Hai Helen Li and Yiran Chen and Handong Zhao and Wentian Zhao},
      year={2026},
      eprint={2607.23802},
      archivePrefix={arXiv},
      primaryClass={cs.AI},
      url={https://arxiv.org/abs/2607.23802}, 
}

@misc{writingzero,
      title={Writing-Zero: Bridge the Gap Between Non-verifiable Tasks and Verifiable Rewards}, 
      author={Ruipeng Jia and Yunyi Yang and Yongbo Gai and Kai Luo and Shihao Huang and Jianhe Lin and Xiaoxi Jiang and Guanjun Jiang},
      year={2025},
      eprint={2506.00103},
      archivePrefix={arXiv},
      primaryClass={cs.CL},
      url={https://arxiv.org/abs/2506.00103}, 
}

@inproceedings{mav,
title={Multi-Agent Verification: Scaling Test-Time Compute with Multiple Verifiers},
author={Shalev Lifshitz and Sheila A. McIlraith and Yilun Du},
booktitle={Second Conference on Language Modeling},
year={2025},
url={https://openreview.net/forum?id=LriQ3NY9uL}
}

@inproceedings{
eureka,
title={Eureka: Human-Level Reward Design via Coding Large Language Models},
author={Yecheng Jason Ma and William Liang and Guanzhi Wang and De-An Huang and Osbert Bastani and Dinesh Jayaraman and Yuke Zhu and Linxi Fan and Anima Anandkumar},
booktitle={The Twelfth International Conference on Learning Representations},
year={2024},
url={https://openreview.net/forum?id=IEduRUO55F}
}

@inproceedings{deepverifier,
    title = "Inference-Time Scaling of Verification: Self-Evolving Deep Research Agents via Test-Time Rubric-Guided Verification",
    author = "Wan, Yuxuan  and
      Fang, Tianqing  and
      LI, Zaitang  and
      Huo, Yintong  and
      Wang, Wenxuan  and
      Mi, Haitao  and
      Yu, Dong  and
      Lyu, Michael R.",
    editor = "Liakata, Maria  and
      Moreira, Viviane P.  and
      Zhang, Jiajun  and
      Jurgens, David",
    booktitle = "Findings of the {A}ssociation for {C}omputational {L}inguistics: {ACL} 2026",
    month = jul,
    year = "2026",
    address = "San Diego, California, United States",
    publisher = "Association for Computational Linguistics",
    url = "https://aclanthology.org/2026.findings-acl.1243/",
    doi = "10.18653/v1/2026.findings-acl.1243",
    pages = "24822--24835",
    ISBN = "979-8-89176-395-1"}

@inproceedings{gardner2020contrast,
    title = "Evaluating Models' Local Decision Boundaries via Contrast Sets",
    author = "Gardner, Matt  and
      Artzi, Yoav  and
      Basmov, Victoria  and
      Berant, Jonathan  and
      Bogin, Ben  and
      Chen, Sihao  and
      Dasigi, Pradeep  and
      Dua, Dheeru  and
      Elazar, Yanai  and
      Gottumukkala, Ananth  and
      Gupta, Nitish  and
      Hajishirzi, Hannaneh  and
      Ilharco, Gabriel  and
      Khashabi, Daniel  and
      Lin, Kevin  and
      Liu, Jiangming  and
      Liu, Nelson F.  and
      Mulcaire, Phoebe  and
      Ning, Qiang  and
      Singh, Sameer  and
      Smith, Noah A.  and
      Subramanian, Sanjay  and
      Tsarfaty, Reut  and
      Wallace, Eric  and
      Zhang, Ally  and
      Zhou, Ben",
    editor = "Cohn, Trevor  and
      He, Yulan  and
      Liu, Yang",
    booktitle = "Findings of the Association for Computational Linguistics: EMNLP 2020",
    month = nov,
    year = "2020",
    address = "Online",
    publisher = "Association for Computational Linguistics",
    url = "https://aclanthology.org/2020.findings-emnlp.117/",
    doi = "10.18653/v1/2020.findings-emnlp.117",
    pages = "1307--1323"}

@inproceedings{sai2021perturbation,
    title = "Perturbation {C}heck{L}ists for Evaluating {NLG} Evaluation Metrics",
    author = "Sai, Ananya B.  and
      Dixit, Tanay  and
      Sheth, Dev Yashpal  and
      Mohan, Sreyas  and
      Khapra, Mitesh M.",
    editor = "Moens, Marie-Francine  and
      Huang, Xuanjing  and
      Specia, Lucia  and
      Yih, Scott Wen-tau",
    booktitle = "Proceedings of the 2021 Conference on Empirical Methods in Natural Language Processing",
    month = nov,
    year = "2021",
    address = "Online and Punta Cana, Dominican Republic",
    publisher = "Association for Computational Linguistics",
    url = "https://aclanthology.org/2021.emnlp-main.575/",
    doi = "10.18653/v1/2021.emnlp-main.575",
    pages = "7219--7234"}

@inproceedings{
selfrefine,
title={Self-Refine: Iterative Refinement with Self-Feedback},
author={Aman Madaan and Niket Tandon and Prakhar Gupta and Skyler Hallinan and Luyu Gao and Sarah Wiegreffe and Uri Alon and Nouha Dziri and Shrimai Prabhumoye and Yiming Yang and Shashank Gupta and Bodhisattwa Prasad Majumder and Katherine Hermann and Sean Welleck and Amir Yazdanbakhsh and Peter Clark},
booktitle={Thirty-seventh Conference on Neural Information Processing Systems},
year={2023},
url={https://openreview.net/forum?id=S37hOerQLB}
}

@inproceedings{
reflexion,
title={Reflexion: language agents with verbal reinforcement learning},
author={Noah Shinn and Federico Cassano and Ashwin Gopinath and Karthik R Narasimhan and Shunyu Yao},
booktitle={Thirty-seventh Conference on Neural Information Processing Systems},
year={2023},
url={https://openreview.net/forum?id=vAElhFcKW6}
}

@inproceedings{expel,
  title={Expel: Llm agents are experiential learners},
  author={Zhao, Andrew and Huang, Daniel and Xu, Quentin and Lin, Matthieu and Liu, Yong-Jin and Huang, Gao},
  booktitle={Proceedings of the AAAI Conference on Artificial Intelligence},
  volume={38},
  number={17},
  pages={19632--19642},
  year={2024}
}

@inproceedings{dynamiccheatsheet,
    title = "Dynamic Cheatsheet: Test-Time Learning with Adaptive Memory",
    author = "Suzgun, Mirac  and
      Yuksekgonul, Mert  and
      Bianchi, Federico  and
      Jurafsky, Dan  and
      Zou, James",
    editor = "Demberg, Vera  and
      Inui, Kentaro  and
      Marquez, Llu{\'i}s",
    booktitle = "Proceedings of the 19th Conference of the {E}uropean Chapter of the {A}ssociation for {C}omputational {L}inguistics (Volume 1: Long Papers)",
    month = mar,
    year = "2026",
    address = "Rabat, Morocco",
    publisher = "Association for Computational Linguistics",
    url = "https://aclanthology.org/2026.eacl-long.333/",
    doi = "10.18653/v1/2026.eacl-long.333",
    pages = "7080--7106",
    ISBN = "979-8-89176-380-7"}

@inproceedings{
amem,
title={A-Mem: Agentic Memory for {LLM} Agents},
author={Wujiang Xu and Zujie Liang and Kai Mei and Hang Gao and Juntao Tan and Yongfeng Zhang},
booktitle={The Thirty-ninth Annual Conference on Neural Information Processing Systems},
year={2025},
url={https://openreview.net/forum?id=FiM0M8gcct}
}

@inproceedings{
ace,
title={Agentic Context Engineering: Evolving Contexts for Self-Improving Language Models},
author={Qizheng Zhang and Changran Hu and Shubhangi Upasani and Boyuan Ma and Fenglu Hong and Vamsidhar Kamanuru and Jay Rainton and Chen Wu and Mengmeng Ji and Hanchen Li and Urmish Thakker and James Zou and Kunle Olukotun},
booktitle={The Fourteenth International Conference on Learning Representations},
year={2026},
url={https://openreview.net/forum?id=eC4ygDs02R}
}

@misc{verificationpayoff,
      title={When Does Verification Pay Off? A Closer Look at LLMs as Solution Verifiers}, 
      author={Jack Lu and Ryan Teehan and Jinran Jin and Mengye Ren},
      year={2026},
      eprint={2512.02304},
      archivePrefix={arXiv},
      primaryClass={cs.CL},
      url={https://arxiv.org/abs/2512.02304}, 
}

@misc{rrd,
      title={Rethinking Rubric Generation for Improving LLM Judge and Reward Modeling for Open-ended Tasks}, 
      author={William F. Shen and Xinchi Qiu and Chenxi Whitehouse and Lisa Alazraki and Shashwat Goel and Francesco Barbieri and Timon Willi and Akhil Mathur and Ilias Leontiadis},
      year={2026},
      eprint={2602.05125},
      archivePrefix={arXiv},
      primaryClass={cs.LG},
      url={https://arxiv.org/abs/2602.05125}, 
}

@misc{rubricanchors,
      title={Reinforcement Learning with Rubric Anchors}, 
      author={Zenan Huang and Yihong Zhuang and Guoshan Lu and Zeyu Qin and Haokai Xu and Tianyu Zhao and Ru Peng and Jiaqi Hu and Zhanming Shen and Xiaomeng Hu and Xijun Gu and Peiyi Tu and Jiaxin Liu and Wenyu Chen and Yuzhuo Fu and Zhiting Fan and Yanmei Gu and Yuanyuan Wang and Zhengkai Yang and Jianguo Li and Junbo Zhao},
      year={2025},
      eprint={2508.12790},
      archivePrefix={arXiv},
      primaryClass={cs.AI},
      url={https://arxiv.org/abs/2508.12790}, 
}

@inproceedings{
conl,
title={Conversation for Non-verifiable Learning: Self-Evolving Large Language Models through Meta-Evaluation},
author={Yuan Sui and Bryan Hooi},
booktitle={Forty-third International Conference on Machine Learning},
year={2026},
url={https://openreview.net/forum?id=Wba6w3pzbj}
}

@misc{fuse,
      title={FUSE: Ensembling Verifiers with Zero Labeled Data}, 
      author={Joonhyuk Lee and Virginia Ma and Sarah Zhao and Yash Nair and Asher Spector and Regev Cohen and Emmanuel J. Candès},
      year={2026},
      eprint={2604.18547},
      archivePrefix={arXiv},
      primaryClass={stat.ML},
      url={https://arxiv.org/abs/2604.18547}, 
}

@inproceedings{
rar,
title={Rubrics as Rewards: Reinforcement Learning Beyond Verifiable Domains},
author={Anisha Gunjal and Anthony Wang and Elaine Lau and Vaskar Nath and Yunzhong He and Bing Liu and Sean M. Hendryx},
booktitle={The Fourteenth International Conference on Learning Representations},
year={2026},
url={https://openreview.net/forum?id=c1bTcrDmt4}
}

@misc{evorubric,
      title={EvoRubric: Self-Evolving Rubric-Driven RL for Open-Ended Generation}, 
      author={Xin Guan and Xiaomeng Hu and Shen Huang and Zhenyi Wang and Bo Zhang and Zijian Li and Pengjun Xie and Bo Liu and Jiuxin Cao},
      year={2026},
      eprint={2605.29847},
      archivePrefix={arXiv},
      primaryClass={cs.CL},
      url={https://arxiv.org/abs/2605.29847}, 
}

@inproceedings{
care,
title={{CARE}: Curriculum-Aware Rubric Evolution for Open-Ended Text Generation},
author={Anonymous},
booktitle={Submitted to ACL Rolling Review - March 2026},
year={2026},
url={https://openreview.net/forum?id=jOlKMHT17D},
note={under review}
}

@inproceedings{
drtulu,
title={{DR} Tulu: Reinforcement Learning with Evolving Rubrics for Deep Research},
author={Rulin Shao and Akari Asai and Shannon Zejiang Shen and Hamish Ivison and Varsha Kishore and Jingming Zhuo and Xinran Zhao and Molly Park and Samuel G. Finlayson and David Sontag and Tyler Murray and Sewon Min and Pradeep Dasigi and Luca Soldaini and Faeze Brahman and Wen-tau Yih and Tongshuang Wu and Luke Zettlemoyer and Yoon Kim and Hannaneh Hajishirzi and Pang Wei Koh},
booktitle={Forty-third International Conference on Machine Learning},
year={2026},
url={https://openreview.net/forum?id=97NEP1pyS3}
}

@inproceedings{
rubricarm,
title={Alternating Reinforcement Learning for Rubric-Based Reward Modeling in Non-Verifiable {LLM} Post-Training},
author={Ran Xu and Tianci Liu and Zihan Dong and Tony Yu and Ilgee Hong and Carl Yang and Linjun Zhang and Tuo Zhao and Haoyu Wang},
booktitle={Forty-third International Conference on Machine Learning},
year={2026},
url={https://openreview.net/forum?id=3SDg8dtkS8}
}

@misc{scope,
      title={SCOPE: Self-Play via Co-Evolving Policies for Open-Ended Tasks}, 
      author={Wai-Chung Kwan and Aryo Pradipta Gema and Joshua Ong Jun Leang and Pasquale Minervini},
      year={2026},
      eprint={2605.31433},
      archivePrefix={arXiv},
      primaryClass={cs.CL},
      url={https://arxiv.org/abs/2605.31433}, 
}

@inproceedings{
dreamrl,
title={{DREAM}-{RL}: Dynamic Reward-System Evolution with an Agentic Meta-Judge for Reinforcement Learning on Open-Ended Tasks},
author={Anonymous},
booktitle={Submitted to ACL Rolling Review - May 2026},
year={2026},
url={https://openreview.net/forum?id=9idmRtG6M5},
note={under review}
}

@misc{agenttesttime,
      title={Scaling Test-time Compute for LLM Agents}, 
      author={King Zhu and Hanhao Li and Siwei Wu and Tianshun Xing and Dehua Ma and Xiangru Tang and Minghao Liu and Jian Yang and Jiaheng Liu and Yuchen Eleanor Jiang and Changwang Zhang and Chenghua Lin and Jun Wang and Ge Zhang and Wangchunshu Zhou},
      year={2025},
      eprint={2506.12928},
      archivePrefix={arXiv},
      primaryClass={cs.AI},
      url={https://arxiv.org/abs/2506.12928}, 
}

@inproceedings{snell2025testtime,
  title={Scaling LLM test-time compute optimally can be more effective than scaling parameters for reasoning},
  author={Snell, Charlie and Lee, Jaehoon and Xu, Kelvin and Kumar, Aviral},
  booktitle={International Conference on Learning Representations},
  volume={2025},
  pages={10131--10165},
  year={2025}
}

@inproceedings{
li2024arenahard,
title={From Crowdsourced Data to High-quality Benchmarks: Arena-Hard and Benchbuilder Pipeline},
author={Tianle Li and Wei-Lin Chiang and Evan Frick and Lisa Dunlap and Tianhao Wu and Banghua Zhu and Joseph E. Gonzalez and Ion Stoica},
booktitle={Forty-second International Conference on Machine Learning},
year={2025},
url={https://openreview.net/forum?id=KfTf9vFvSn}
}

@inproceedings{lin2024wildbench,
  title={Wildbench: Benchmarking llms with challenging tasks from real users in the wild},
  author={Lin, Bill Yuchen and Deng, Yuntian and Chandu, Khyathi and Ravichander, Abhilasha and Pyatkin, Valentina and Dziri, Nouha and Le Bras, Ronan and Choi, Yejin},
  booktitle={International Conference on Learning Representations},
  volume={2025},
  pages={47852--47870},
  year={2025}
}

@inproceedings{sirdeshmukh2025multichallenge,
    title = "{M}ulti{C}hallenge: A Realistic Multi-Turn Conversation Evaluation Benchmark Challenging to Frontier {LLM}s",
    author = "Deshpande, Kaustubh  and
      Sirdeshmukh, Ved  and
      Mols, Johannes Baptist  and
      Jin, Lifeng  and
      Hernandez-Cardona, Ed-Yeremai  and
      Lee, Dean  and
      Kritz, Jeremy  and
      Primack, Willow E.  and
      Yue, Summer  and
      Xing, Chen",
    editor = "Che, Wanxiang  and
      Nabende, Joyce  and
      Shutova, Ekaterina  and
      Pilehvar, Mohammad Taher",
    booktitle = "Findings of the Association for Computational Linguistics: ACL 2025",
    month = jul,
    year = "2025",
    address = "Vienna, Austria",
    publisher = "Association for Computational Linguistics",
    url = "https://aclanthology.org/2025.findings-acl.958/",
    doi = "10.18653/v1/2025.findings-acl.958",
    pages = "18632--18702",
    ISBN = "979-8-89176-256-5"}

@inproceedings{
wu2025writingbench,
title={WritingBench: A Comprehensive Benchmark for Generative Writing},
author={Yuning Wu and Jiahao Mei and Ming Yan and Chenliang Li and Shaopeng Lai and Yuran Ren and Wang Zijia and Ji Zhang and Mengyue Wu and Qin Jin and Fei Huang},
booktitle={The Thirty-ninth Annual Conference on Neural Information Processing Systems Datasets and Benchmarks Track},
year={2025},
url={https://openreview.net/forum?id=Pkskg9drDQ}
}

@inproceedings{
pyatkin2025ifbench,
title={Generalizing Verifiable Instruction Following},
author={Valentina Pyatkin and Saumya Malik and Victoria Graf and Hamish Ivison and Shengyi Huang and Pradeep Dasigi and Nathan Lambert and Hannaneh Hajishirzi},
booktitle={The Thirty-ninth Annual Conference on Neural Information Processing Systems Datasets and Benchmarks Track},
year={2025},
url={https://openreview.net/forum?id=yfYgwjj5F8}
}

@inproceedings{fan2018hierarchical,
  title={Hierarchical neural story generation},
  author={Fan, Angela and Lewis, Mike and Dauphin, Yann},
  booktitle={Proceedings of the 56th Annual Meeting of the Association for Computational Linguistics (Volume 1: Long Papers)},
  pages={889--898},
  year={2018}
}

@inproceedings{lee2025refinebenchevaluatingrefinementcapability,
title={RefineBench: Evaluating Refinement Capability of Language Models via Checklists},
author={Young-Jun Lee and Seungone Kim and Byung-Kwan Lee and Minkyeong Moon and Yechan Hwang and Jong Myoung Kim and Graham Neubig and Sean Welleck and Ho-Jin Choi},
booktitle={The Fourteenth International Conference on Learning Representations},
year={2026},
url={https://openreview.net/forum?id=GYJFJz9Dy5}
}

@inproceedings{hendrycks2021math,
title={Measuring Mathematical Problem Solving With the {MATH} Dataset},
author={Dan Hendrycks and Collin Burns and Saurav Kadavath and Akul Arora and Steven Basart and Eric Tang and Dawn Song and Jacob Steinhardt},
booktitle={Thirty-fifth Conference on Neural Information Processing Systems Datasets and Benchmarks Track (Round 2)},
year={2021},
url={https://openreview.net/forum?id=7Bywt2mQsCe}
}

@inproceedings{lightman2024verify,
title={Let's Verify Step by Step},
author={Hunter Lightman and Vineet Kosaraju and Yuri Burda and Harrison Edwards and Bowen Baker and Teddy Lee and Jan Leike and John Schulman and Ilya Sutskever and Karl Cobbe},
booktitle={The Twelfth International Conference on Learning Representations},
year={2024},
url={https://openreview.net/forum?id=v8L0pN6EOi}
}

@inproceedings{zhuo2024bigcodebench,
title={BigCodeBench: Benchmarking Code Generation with Diverse Function Calls and Complex Instructions},
author={Terry Yue Zhuo and Vu Minh Chien and Jenny Chim and Han Hu and Wenhao Yu and Ratnadira Widyasari and Imam Nur Bani Yusuf and Haolan Zhan and Junda He and Indraneil Paul and Simon Brunner and Chen GONG and James Hoang and Armel Randy Zebaze and Xiaoheng Hong and Wen-Ding Li and Jean Kaddour and Ming Xu and Zhihan Zhang and Prateek Yadav and Naman Jain and Alex Gu and Zhoujun Cheng and Jiawei Liu and Qian Liu and Zijian Wang and David Lo and Binyuan Hui and Niklas Muennighoff and Daniel Fried and Xiaoning Du and Harm de Vries and Leandro Von Werra},
booktitle={The Thirteenth International Conference on Learning Representations},
year={2025},
url={https://openreview.net/forum?id=YrycTjllL0}
}

@article{zheng2023judging,
  title={Judging llm-as-a-judge with mt-bench and chatbot arena},
  author={Zheng, Lianmin and Chiang, Wei-Lin and Sheng, Ying and Zhuang, Siyuan and Wu, Zhanghao and Zhuang, Yonghao and Lin, Zi and Li, Zhuohan and Li, Dacheng and Xing, Eric and others},
  journal={Advances in neural information processing systems},
  volume={36},
  pages={46595--46623},
  year={2023}
}

@misc{rhi2026,
      title={Recursive Harness Self-Improvement}, 
      author={Hyunin Lee and Jinglue Xu and Jeffrey Seely and Donghyun Lee and Matei Zaharia and Yujin Tang},
      year={2026},
      eprint={2607.15524},
      archivePrefix={arXiv},
      primaryClass={cs.LG},
      url={https://arxiv.org/abs/2607.15524}, 
}

@misc{rqgm2026,
      title={The Red Queen G\"odel Machine: Co-Evolving Agents and Their Evaluators}, 
      author={Alex Iacob and Andrej Jovanović and William F. Shen and Daniel Burkhardt and Meghdad Kurmanji and Nurbek Tastan and Lorenzo Sani and Niccolò Alberto Elia Venanzi and Ambroise Odonnat and Zeyu Cao and Bill Marino and Xinchi Qiu and Nicholas D. Lane},
      year={2026},
      eprint={2606.26294},
      archivePrefix={arXiv},
      primaryClass={cs.LG},
      url={https://arxiv.org/abs/2606.26294}, 
}

@article{price1970,
  title={Selection and covariance},
  author={Price, George R.},
  journal={Nature},
  volume={227},
  number={5257},
  pages={520--521},
  year={1970}
}

@article{youden1950,
  title={Index for rating diagnostic tests},
  author={Youden, William J},
  journal={Cancer},
  volume={3},
  number={1},
  pages={32--35},
  year={1950},
  publisher={Wiley Online Library}
}

@article{hoeffding1963,
  title={Probability inequalities for sums of bounded random variables},
  author={Hoeffding, Wassily},
  journal={Journal of the American statistical association},
  volume={58},
  number={301},
  pages={13--30},
  year={1963},
  publisher={Taylor \& Francis}
}

@article{mcnemar1947,
  title={Note on the sampling error of the difference between correlated proportions or percentages},
  author={McNemar, Quinn},
  journal={Psychometrika},
  volume={12},
  number={2},
  pages={153--157},
  year={1947},
  publisher={Cambridge University Press \& Assessment}
}

@inproceedings{
sagawa2020dro,
title={Distributionally Robust Neural Networks},
author={Shiori Sagawa* and Pang Wei Koh* and Tatsunori B. Hashimoto and Percy Liang},
booktitle={International Conference on Learning Representations},
year={2020},
url={https://openreview.net/forum?id=ryxGuJrFvS}
}

@misc{zhang2026evochamber,
      title={EVOCHAMBER: Test-Time Co-evolution of Multi-Agent System at Individual, Team, and Population Scales}, 
      author={Yaolun Zhang and Tianyi Xu and Shengyu Dai and Zhenwen Shao and Qingyun Wu and Huazheng Wang},
      year={2026},
      eprint={2605.11136},
      archivePrefix={arXiv},
      primaryClass={cs.AI},
      url={https://arxiv.org/abs/2605.11136}, 
}

@misc{xu2026evopool,
      title={EvoPool: Evolutionary Programmatic Annotation for Label-Efficient Specialized Supervision}, 
      author={Tianyi Xu and Yaolun Zhang and Xuan Ouyang and Huazheng Wang},
      year={2026},
      eprint={2606.01617},
      archivePrefix={arXiv},
      primaryClass={cs.CL},
      url={https://arxiv.org/abs/2606.01617}, 
}

@misc{evors,
      title={EvoRS: On-Policy Self-Evolution of Reward Systems for Open-Ended Reinforcement Learning}, 
      author={Weiyuan Li and Aili Chen and Xintao Wang and Yikai Zhang and Qingqing Dong and Jinghan Xu and Hongru Hou and Wenxuan Zhao and Chengkun Lang and Jun Gao and Yuanli Guo and Hongcheng Guo and Yanghua Xiao and Deqing Yang},
      year={2026},
      eprint={2609.12459},
      archivePrefix={arXiv},
      primaryClass={cs.AI},
      url={https://arxiv.org/abs/2609.12459}, 
}

@misc{genrubric,
      title={GenRubric: Self-Evolving Rubric Generation for Scalable LLM Evaluation}, 
      author={Yifan Chen and Haitao Li and Qingyao Ai and Fengbin Zhu and Tat-Seng Chua and Min Zhang and Yiqun Liu},
      year={2026},
      eprint={2608.29856},
      archivePrefix={arXiv},
      primaryClass={cs.CL},
      url={https://arxiv.org/abs/2608.29856}, 
}

@misc{coverrl,
      title={CoVerRL: Breaking the Consensus Trap in Label-Free Reasoning via Generator-Verifier Co-Evolution}, 
      author={Teng Pan and Yuchen Yan and Zixuan Wang and Ruiqing Zhang and Guiyang Hou and Wenqi Zhang and Weiming Lu and Jun Xiao and Yongliang Shen},
      year={2026},
      eprint={2603.17775},
      archivePrefix={arXiv},
      primaryClass={cs.CL},
      url={https://arxiv.org/abs/2603.17775}, 
}

@misc{zhu2026spreadsheetbench2evaluatingagents,
      title={SpreadsheetBench 2: Evaluating Agents on End-to-End Business Spreadsheet Workflows}, 
      author={Jian Zhu and Yuzheng Zhang and Zeyao Ma and Bohan Zhang and Armin Schoepf and Daniel Woloch and Peter Yiliu Wang and Guangyu Robert Yang and Samuel Jacob and Siddharth Nagisetty and Abhiram Chundru and Jean Lin and Spencer Mateega and Jing Zhang},
      year={2026},
      eprint={2606.29955},
      archivePrefix={arXiv},
      primaryClass={cs.SE},
      url={https://arxiv.org/abs/2606.29955}, 
}

@article{holm1979,
  title={A simple sequentially rejective multiple test procedure},
  author={Holm, Sture},
  journal={Scandinavian journal of statistics},
  pages={65--70},
  year={1979},
  publisher={JSTOR}
}

@misc{qwen3,
      title={Qwen3 Technical Report}, 
      author={An Yang and Anfeng Li and Baosong Yang and Beichen Zhang and Binyuan Hui and Bo Zheng and Bowen Yu and Chang Gao and Chengen Huang and Chenxu Lv and Chujie Zheng and Dayiheng Liu and Fan Zhou and Fei Huang and Feng Hu and Hao Ge and Haoran Wei and Huan Lin and Jialong Tang and Jian Yang and Jianhong Tu and Jianwei Zhang and Jianxin Yang and Jiaxi Yang and Jing Zhou and Jingren Zhou and Junyang Lin and Kai Dang and Keqin Bao and Kexin Yang and Le Yu and Lianghao Deng and Mei Li and Mingfeng Xue and Mingze Li and Pei Zhang and Peng Wang and Qin Zhu and Rui Men and Ruize Gao and Shixuan Liu and Shuang Luo and Tianhao Li and Tianyi Tang and Wenbiao Yin and Xingzhang Ren and Xinyu Wang and Xinyu Zhang and Xuancheng Ren and Yang Fan and Yang Su and Yichang Zhang and Yinger Zhang and Yu Wan and Yuqiong Liu and Zekun Wang and Zeyu Cui and Zhenru Zhang and Zhipeng Zhou and Zihan Qiu},
      year={2025},
      eprint={2505.09388},
      archivePrefix={arXiv},
      primaryClass={cs.CL},
      url={https://arxiv.org/abs/2505.09388}, 
}

@misc{openai2025gpt5,
  title={{GPT-5} System Card},
  author={{OpenAI}},
  year={2025},
  howpublished={\url{https://openai.com/index/gpt-5-system-card/}},
  note={Model documentation for the GPT-5 series, including the GPT-5.4 and GPT-5.6 models used in this work}
}

@misc{openai2025codex,
  title={Codex {CLI}: Lightweight coding agent that runs in your terminal},
  author={{OpenAI}},
  year={2025},
  howpublished={\url{https://github.com/openai/codex}}
}

@misc{openpyxl,
  title={openpyxl: A {Python} library to read/write {Excel} 2010 xlsx/xlsm files},
  author={Gazoni, Eric and Clark, Charlie},
  year={2024},
  howpublished={\url{https://openpyxl.readthedocs.io}}
}

@misc{libreoffice,
  title={{LibreOffice}},
  author={{The Document Foundation}},
  year={2024},
  howpublished={\url{https://www.libreoffice.org}}
}

@misc{zhang2026evalresearchbenchaiagentsdesign,
  title={{EvalResearchBench}: Can {AI} Agents Design Their Own Evaluations?},
  author={Yaolun Zhang and Tianyi Xu and Yujie Zhao and Jishen Zhao and Qingyun Wu and Huazheng Wang},
  year={2026},
  eprint={2610.04184},
  archivePrefix={arXiv},
  primaryClass={cs.AI},
  url={https://arxiv.org/abs/2610.04184}
}
\bibliographystyle{iclr2027_conference}

\clearpage
\appendix

\section{Limitations}
\label{app:limitations}

UCoE measures driving capacity on declared families of local edits, so passing the response
contrasts does not guarantee reliable discrimination on natural revisions, on interacting
errors, or on task properties that no family covers. Broader contrasts with independently checked
preference directions would give verifier selection a stronger test. The deployed
verifier also carries a coupled risk: because it diagnoses weaknesses, selects the served
response, and decides which repairs enter the harness state, one mistaken judgment can accept a
harmful repair and store a misleading procedure. Independent checks or rollback rules would
reduce that risk. Finally, open-ended evaluation still relies on model judges, and repeated
semantic scoring makes UCoE more expensive than the simpler feedback baselines, so validation
with diverse evaluators or humans and cheaper verifier execution are the next steps.

\section{Algorithm}
\label{app:full-algorithm}

Algorithm~\ref{alg:ucoe} gives the complete lifecycle. Initialization runs at $t=0$, and every
later checkpoint synthesizes candidates from the diagnostics of completed tasks only. $H$ denotes
the harness state, $E$ the diagnostic history, and $\mathcal R$ the fixed relation inventory
mined from the development pairs $\mathcal P_{\rm dev}$. The main experiments instantiate $H$
as a bounded procedural memory, and the agent-harness study of App.~\ref{app:ssb2-harness}
instantiates it as the context of a coding agent.

\begin{algorithm}[tb]
\caption{Unified Co-Evolution}
\label{alg:ucoe}
\footnotesize
\begin{algorithmic}[1]
\Require Public requirements $R$, optional development response pairs $\mathcal P_{\rm dev}$, actor $F_{\theta_0}$, verifier model, initial harness state $H_{\rm init}$, and task stream $x_{1:T}$
\Require Calibration source reservoir $\mathcal S$ and checkpoint set $\mathcal T$
\Require Candidate budget $L$, contrast families, comparison budgets, margin $\epsilon$, harness-update rules
\Ensure Returned responses $y_{1:T}$, harness state $H$, and verifier-decision history
\State $H\gets H_{\rm init}$, $E\gets\varnothing$, and $\cV\gets\cV_\bot$
\State $\mathcal R\gets\operatorname{MineRelations}(\mathcal P_{\rm dev})$ if supplied; otherwise $\mathcal R\gets\varnothing$
\For{$t=0,\ldots,T$}
    \If{$t>0$}
        \State $y_t^0\gets F_{\theta_0}(x_t;H)$ through the configured harness interface
        \State $(y_t,y_t^R,a_t,c_t,w_t,u_t)\gets(y_t^0,\bot,0,\bot,\bot,\bot)$
        \If{$\cV\neq\cV_\bot$}
            \State $(c_t,w_t,u_t)\gets D_{\cV}(x_t,y_t^0)$
            \State $y_t^R\gets F_{\theta_0}(x_t,y_t^0,c_t,w_t,u_t;H)$
            \If{$r_{\cV}(x_t,y_t^R)>r_{\cV}(x_t,y_t^0)+\epsilon$}
                \State $(y_t,a_t)\gets(y_t^R,1)$
            \EndIf
        \EndIf
        \State Return $y_t$; subsequent updates derived from task $t$ affect only later tasks
        \If{$\cV\neq\cV_\bot$}
            \State $H\gets\operatorname{UpdateHarness}(H,x_t,y_t^0,y_t^R,c_t,w_t,a_t)$
            \Comment{new state content only if $a_t=1$}
        \EndIf
        \State Append the completed task and available verifier diagnostics to $E$
    \EndIf
    \If{$t=0$ or $t\in\mathcal T$}
        \State $d\gets\operatorname{Summarize}(E)$, with $d=\varnothing$ at initialization
        \State $\mathcal K\gets\operatorname{GenerateVerifiers}(R,\mathcal R,d,L)$ using the verifier model
        \State Include $\cV$ in $\mathcal K$ if $\cV\neq\cV_\bot$ and discard non-executable candidates
        \State Freeze all candidate programs and the scoring, diagnosis, and selection conventions
        \If{$\mathcal K\neq\varnothing$}
            \State $(B_{\rm sel},B_{\rm val})\gets\operatorname{Contrasts}(\mathcal S,\mathcal R)$ \Comment{source-disjoint within checkpoint}
            \State $\widehat\cV\in\arg\max_{\cV'\in\mathcal K}\operatorname{round}_4(\min_j\widehat J_{{\rm sel},j}(h_{\cV'}))$
            \State Fix $\widehat\cV$ before evaluating any required held-out comparisons
            \If{$\cV=\cV_\bot$}
                \State $b\gets1$ \Comment{adopt the initial selected candidate}
            \Else
                \State $b\gets\operatorname{UpdateCriteria}(\widehat\cV,\cV,B_{\rm sel},B_{\rm val})$ \Comment{Eq.~\eqref{eq:empirical-admission}}
            \EndIf
            \If{$b=1$}
                \State $\cV\gets\widehat\cV$ \Comment{affects only tasks after $t$}
            \EndIf
        \EndIf
        \State Record the selection, held-out comparison, and adopt-or-retain decision
    \EndIf
\EndFor
\State Finalize all responses, verifier decisions, and harness states before final evaluation
\State \Return $(y_{1:T},H,E)$ and the verifier-decision history
\end{algorithmic}
\end{algorithm}

The update criteria in $\operatorname{UpdateCriteria}$ require a strict gain in worst-family
accuracy on the selection split, more held-out contrasts won than lost against the current
verifier, and no regression beyond the tolerance on any existing family, and any failed or
undefined comparison retains the current verifier. Equation~\eqref{eq:empirical-admission} states
them exactly. While no verifier is deployed, UCoE serves the draft and leaves the harness state
unchanged, but later checkpoints still run. Calibration data and final-evaluator feedback never
enter deployment. The reference procedure draws fresh comparison sources at each checkpoint.

\section{Theoretical analysis}
\label{app:additional-theory}

We analyze the selectors induced by UCoE's verifier programs on fixed response-pair
distributions. The results explain the normalization of driving capacity, the worst-family
selection objective, and the guarantees of the empirical update rule. Hats denote sample
estimates, and population bounds require the sampling assumptions stated below.

\subsection{Pairwise discrimination and selection gain}
\label{app:pairwise-theory}

For a task $x$, let $y_c$ be the proposed response and $y_r$ the reference response.
With a fixed margin $\epsilon\geq0$, the verifier induces
\begin{equation}
h_{\cV,\epsilon}(x;y_c,y_r)=
\begin{cases}
y_c,&r_{\cV}(x,y_c)>r_{\cV}(x,y_r)+\epsilon,\\
y_r,&\text{otherwise}.
\end{cases}
\label{eq:ordered-selector}
\end{equation}
An undefined comparison returns $y_r$. Contrasts use $(y_c,y_r)=(y^+,y^-)$,
whereas deployment uses $(y_c,y_r)=(y^R,y^0)$. We write $h_{\cV}$ for this selector
at the fixed margin. For family $j$, define
$p_j(h)=\Pr(h\text{ selects }y^+\mid j)$ and $J_j(h)=2p_j(h)-1$.
Uniform random choice gives $J_j=0$. A verifier that always assigns equal scores has
$J_j=-1$, since every tie fails the strict-margin test.

For the covariance interpretation \citep{price1970}, write $(Y_1,Y_2)=(y^+,y^-)$ and let
$S_h\in\{1,2\}$ be the selected position. Let $U_i\in\{0,1\}$ indicate whether $Y_i$
satisfies the targeted property. A valid contrast must satisfy $(U_1,U_2)=(1,0)$.
This is a semantic assumption about the edited property, which eligibility checks alone
cannot establish. Draw
$I\sim\operatorname{Unif}\{1,2\}$ independently of the contrast, its family, and all scoring
and selection randomness. Set $G_h=\mathbf 1[S_h=I]$ and $U=U_I$.
The auxiliary index does not change the response order or the selector.

\Needspace{8\baselineskip}
\begin{proposition}[Pairwise selection covariance]
\label{cor:pairwise-capacity}
For valid targeted contrasts and the independent index defined above,
\begin{equation}
\operatorname{Cov}(G_h,U\mid j)=\frac14J_j(h).
\label{eq:contrast-covariance}
\end{equation}
Thus the signed driving capacity $\widetilde{\mathcal C}_{\cB}$ in
Eq.~\eqref{eq:constructed-driving-capacity} is the minimum selection covariance over
the tested families. Its nonnegative part is $\mathcal C_{\cB}$.
\end{proposition}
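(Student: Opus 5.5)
Fix a family $j$ and condition on it throughout. The proof is a direct computation of the three moments in $\operatorname{Cov}(G_h,U\mid j)=\mathbb E[G_hU\mid j]-\mathbb E[G_h\mid j]\,\mathbb E[U\mid j]$. It uses only the validity assumption $(U_1,U_2)=(1,0)$ and the independence of the auxiliary index $I$ from the contrast, the family, and the selector's randomness. No earlier result beyond the definitions of $p_j$ and $J_j$ is needed.

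\emph{Computing $\mathbb E[U\mid j]$.} Validity gives $U=U_I=\mathbf 1[I=1]$. Since $I$ is uniform on $\{1,2\}$ and independent of $j$, we get $\mathbb E[U\mid j]=1/2$.

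\emph{Computing $\mathbb E[G_h\mid j]$.} Since $G_h=\mathbf 1[S_h=I]$ and $I$ is independent of $S_h$ given $j$, condition on $S_h$. Whatever value $S_h$ takes, $\Pr(I=S_h\mid S_h,j)=1/2$, so $\mathbb E[G_h\mid j]=1/2$. This holds even when the selector defaults to $y_r=y^-$ on ties or undefined comparisons, because $S_h$ is still a well-defined element of $\{1,2\}$.

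\emph{Computing $\mathbb E[G_hU\mid j]$.} We have $G_hU=\mathbf 1[S_h=I,\,I=1]=\mathbf 1[S_h=1]\mathbf 1[I=1]$. Independence then gives $\mathbb E[G_hU\mid j]=\tfrac12\Pr(S_h=1\mid j)=\tfrac12 p_j(h)$, since position $1$ holds $y^+$.

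Combining the three moments,
\[
\operatorname{Cov}(G_h,U\mid j)=\tfrac12p_j(h)-\tfrac14=\tfrac14(2p_j(h)-1)=\tfrac14J_j(h).
\]
Taking the minimum over $j$ and comparing with Eq.~\eqref{eq:constructed-driving-capacity} shows that $\widetilde{\mathcal C}_{\cB}(\cV)=\min_j\operatorname{Cov}(G_{h_\cV},U\mid j)$. Clipping at zero gives $\mathcal C_{\cB}$.

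The only delicate point is justifying the factorizations. This requires that $I$ be independent of $S_h$ jointly with the family label, which is exactly how $I$ is drawn. It also requires that validity hold almost surely within each family, so that $U$ is a function of $I$ alone. I would state both uses explicitly, noting that without validity $\mathbb E[U\mid j]$ and the cross term would involve the joint law of $(U_1,U_2)$ and the identity would fail.
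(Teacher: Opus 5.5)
Your proposal is correct and follows the paper's proof essentially step for step. Both arguments use validity and the independence of $I$ to compute $\mathbb E[U\mid j]=\tfrac12$, $\mathbb E[G_h\mid j]=\tfrac12$, and $\mathbb E[G_hU\mid j]=\tfrac12p_j(h)$, then substitute into the covariance and take the minimum over families.
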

\begin{proof}
Fix a family $j$. Since $(U_1,U_2)=(1,0)$, we have $U=1$ exactly when $I=1$.
The index $I$ is uniform and independent of $S_h$ conditional on $j$, so
\begin{align*}
\mathbb E[U\mid j]
&=\Pr(I=1\mid j)=\frac12,\\
\mathbb E[G_h\mid j]
&=\sum_{s=1}^{2}\Pr(I=s,S_h=s\mid j)
=\frac12\sum_{s=1}^{2}\Pr(S_h=s\mid j)=\frac12,\\
\mathbb E[G_hU\mid j]
&=\Pr(I=1,S_h=1\mid j)
=\frac12\Pr(S_h=1\mid j)=\frac12p_j(h).
\end{align*}
The last line uses $G_hU=1$ exactly when the sampled position and the selected position
are both 1. Substituting these expectations into the definition of covariance gives
\begin{align*}
\operatorname{Cov}(G_h,U\mid j)
&=\mathbb E[G_hU\mid j]-\mathbb E[G_h\mid j]\,\mathbb E[U\mid j]\\
&=\frac{p_j(h)}{2}-\frac12\cdot\frac12
=\frac{2p_j(h)-1}{4}=\frac14J_j(h).
\end{align*}
For $h=h_{\cV}$, taking the minimum over the tested families yields
\[
\min_j\operatorname{Cov}(G_h,U\mid j)
=\frac14\min_jJ_j(h)=\widetilde{\mathcal C}_{\cB}(\cV).
\]
Clipping this minimum at zero gives $\mathcal C_{\cB}(\cV)$.
\end{proof}

The auxiliary retention rates are
$\Pr(G_h=1\mid U=1,j)=p_j(h)$ and
$\Pr(G_h=1\mid U=0,j)=1-p_j(h)$, whose difference is Youden's index
$J_j(h)$ \citep{youden1950}. Because contrasts always place the preferred response in the
proposed position and count ties and reversed rankings alike as failures, these rates measure
discrimination on the tested property, not the acceptance of natural revisions.

To describe selection gain on general pairs, fix a distribution of
$(x,Y_1,Y_2,Z_1,Z_2)$, where $Z_i\in\{0,1\}$ is an evaluation-only adequacy label
for $Y_i$. Each selector chooses a position $S_h\in\{1,2\}$ on the same pairs.
Let $M=\mathbf 1[Z_1\ne Z_2]$, $q=\Pr(M=1)$, and, for $q>0$, define
$J_{\rm mix}(h)=2\Pr(Z_{S_h}=1\mid M=1)-1$.

\begin{proposition}[Selection gain on fixed response pairs]
\label{cor:opportunity-discrimination}
For any two selectors $h,h'$ on this pair distribution with $q>0$,
\begin{equation}
\mathbb E[Z_{S_{h'}}-Z_{S_h}]
=\frac q2\{J_{\rm mix}(h')-J_{\rm mix}(h)\}.
\label{eq:opportunity-discrimination}
\end{equation}
If $q=0$, the expected adequacy difference is zero.
\end{proposition}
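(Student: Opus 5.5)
The identity comes from splitting on whether the pair is mixed. With $M=\mathbf 1[Z_1\ne Z_2]$, the law of total expectation gives
\[
\mathbb E[Z_{S_{h'}}-Z_{S_h}]=\mathbb E[(Z_{S_{h'}}-Z_{S_h})(1-M)]+\mathbb E[(Z_{S_{h'}}-Z_{S_h})M].
\]

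\textbf{Equal-adequacy pairs.} The first term vanishes pointwise. When $Z_1=Z_2$, we have $Z_{S_h}=Z_1=Z_{S_{h'}}$ whatever positions the two selectors choose. This already settles the case $q=0$: then $M=0$ almost surely, so only this term remains, and the expected difference is zero.

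\textbf{Mixed pairs.} For $q>0$, write the second term as
\[
q\,\bigl\{\Pr(Z_{S_{h'}}=1\mid M=1)-\Pr(Z_{S_h}=1\mid M=1)\bigr\}.
\]
This uses $\mathbb E[Z_{S_h}M]=\Pr(Z_{S_h}=1,M=1)=q\Pr(Z_{S_h}=1\mid M=1)$, and the same for $h'$. Inverting the definition of $J_{\rm mix}$ gives
\[
\Pr(Z_{S_h}=1\mid M=1)=\tfrac12\{1+J_{\rm mix}(h)\}.
\]
Substituting for both selectors, the constants $\tfrac12$ cancel in the difference, and the second term becomes $\tfrac q2\{J_{\rm mix}(h')-J_{\rm mix}(h)\}$. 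This is Eq.~\eqref{eq:opportunity-discrimination}.

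\textbf{Main subtlety.} There is no real obstacle, but one measure-theoretic point needs care. Both selectors must be evaluated on the same joint distribution of $(x,Y_1,Y_2,Z_1,Z_2)$, with any selector randomness included in the probability space. This matters because $S_h$ and $S_{h'}$ may be random and dependent on the pair. The argument only uses linearity of expectation and the pointwise identity on $\{M=0\}$, so no independence between the selectors is required. All the $Z$ variables are bounded, so every expectation is finite.
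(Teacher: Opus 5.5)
Your proposal is correct and follows essentially the same route as the paper's proof: both use that the adequacy difference vanishes pointwise on equal-adequacy pairs, so $Z_{S_{h'}}-Z_{S_h}=M(Z_{S_{h'}}-Z_{S_h})$, which settles $q=0$. For $q>0$ both condition on $M=1$ and substitute $\Pr(Z_{S_h}=1\mid M=1)=\tfrac12\{1+J_{\rm mix}(h)\}$ for each selector.
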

\begin{proof}
When $M=0$, we have $Z_1=Z_2$, so $Z_{S_{h'}}-Z_{S_h}=0$ regardless of which
positions the selectors choose. Thus
$Z_{S_{h'}}-Z_{S_h}=M(Z_{S_{h'}}-Z_{S_h})$. If $q=0$, this difference is zero
almost surely, proving the second claim.

For $q>0$, binary adequacy and the definition of $J_{\rm mix}$ give
\[
\mathbb E[Z_{S_h}\mid M=1]
=\Pr(Z_{S_h}=1\mid M=1)=\frac{1+J_{\rm mix}(h)}{2},
\]
and the same equality holds for $h'$. Conditioning on $M=1$ and substituting both
expressions, we obtain
\begin{align*}
\mathbb E[Z_{S_{h'}}-Z_{S_h}]
&=\mathbb E\!\left[M(Z_{S_{h'}}-Z_{S_h})\right]\\
&=q\,\mathbb E[Z_{S_{h'}}-Z_{S_h}\mid M=1]\\
&=q\left\{\mathbb E[Z_{S_{h'}}\mid M=1]-\mathbb E[Z_{S_h}\mid M=1]\right\}\\
&=q\left\{\frac{1+J_{\rm mix}(h')}{2}-\frac{1+J_{\rm mix}(h)}{2}\right\}\\
&=\frac q2\left\{J_{\rm mix}(h')-J_{\rm mix}(h)\right\}.
\end{align*}
\end{proof}

The factor $q$ measures how often selection can matter, and the change in $J_{\rm mix}$
measures how well the new selector uses those opportunities. The identity is about binary
adequacy on a fixed pair distribution, so carrying it to contrast accuracy or to the deployed
stream, where diagnosis, proposals, and harness state all move the pairs, needs further
assumptions.

\subsection{Worst-family selection over a finite verifier pool}
\label{app:finite-pool-theory}

Let $m\geq1$ be the number of contrast families. For population statements, $\cB$ denotes
their fixed distributions, and probabilities include any scoring randomness.
We write $\widetilde{\mathcal C}_{\cB}(h)=\frac14\min_jJ_j(h)$ and
$\mathcal C_{\cB}(h)=\max\{0,\widetilde{\mathcal C}_{\cB}(h)\}$ for the capacities
of an induced selector. A family mixture has weights
$w\in\Delta_m=\{w\in[0,1]^m:\sum_jw_j=1\}$ and success probability
$p_w(h)=\sum_jw_jp_j(h)$.

\begin{theorem}[Robust interpretation of driving capacity]
\label{thm:robust-selection}
For every selector $h$,
\begin{equation}
\inf_{w\in\Delta_m}p_w(h)
=\min_jp_j(h)
=\frac12+2\widetilde{\mathcal C}_{\cB}(h).
\label{eq:capacity-mixture}
\end{equation}
Maximizing signed driving capacity over a nonempty finite pool therefore maximizes the
worst-case success probability over all mixtures of the tested families.
\end{theorem}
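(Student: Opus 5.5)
The plan is to prove the two equalities in Eq.~\eqref{eq:capacity-mixture} separately and then read off the pool statement.

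\textbf{First equality.} Fix $h$ and write $p_j=p_j(h)$ for $j=1,\ldots,m$. The map $w\mapsto p_w(h)=\sum_jw_jp_j$ is linear on the simplex $\Delta_m$, which is compact, so the infimum is attained at a vertex. I will prove this directly by two inequalities.

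For the lower bound, take any $w\in\Delta_m$. Since $w_j\geq0$ and $\sum_jw_j=1$,
\[
p_w(h)=\sum_jw_jp_j\geq\sum_jw_j\min_kp_k=\min_kp_k .
\]
For the upper bound, let $j^\ast\in\arg\min_jp_j$. This index exists because $m\geq1$ and the set of families is finite. Take $w=e_{j^\ast}\in\Delta_m$. Then $p_w(h)=p_{j^\ast}=\min_jp_j$.

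Together these give $\inf_{w}p_w(h)=\min_jp_j(h)$, and the infimum is attained.

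\textbf{Second equality.} By the definition $J_j=2p_j-1$, we have $p_j=\tfrac12(1+J_j)$. The map $J\mapsto\tfrac12(1+J)$ is increasing affine, so it commutes with $\min$:
\[
\min_jp_j=\tfrac12+\tfrac12\min_jJ_j .
\]
Since $\widetilde{\mathcal C}_{\cB}(h)=\tfrac14\min_jJ_j(h)$, it follows that $\tfrac12\min_jJ_j=2\widetilde{\mathcal C}_{\cB}(h)$. This gives the stated value $\tfrac12+2\widetilde{\mathcal C}_{\cB}(h)$.

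\textbf{Pool statement.} Let $\mathcal K$ be a nonempty finite pool. Applying the identity to each $h_{\cV}$ with $\cV\in\mathcal K$ gives
\[
\inf_w p_w(h_{\cV})=\tfrac12+2\widetilde{\mathcal C}_{\cB}(\cV).
\]
The map $c\mapsto\tfrac12+2c$ is strictly increasing. The two objectives are therefore order-equivalent over $\mathcal K$, and their sets of maximizers coincide. Both maxima exist because the pool is finite and nonempty.

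It is worth noting that this argument needs the signed capacity, not the clipped one. Clipping would tie all candidates with negative $\widetilde{\mathcal C}_{\cB}$ and break the order equivalence.

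\textbf{Expected obstacle.} There is no real obstacle in this argument. The only points needing care are that the infimum is attained, which follows from the finite-family vertex argument, and that the monotone affine reparametrization preserves $\min$ and $\arg\max$. Scoring randomness plays no role, since everything is stated at the level of the fixed probabilities $p_j(h)$.
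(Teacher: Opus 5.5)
Your proof is correct and follows the paper's argument essentially step for step: the lower bound from nonnegative weights summing to one, attainment at the point mass on a minimizing family, the affine rewrite $p_j=\tfrac12(1+J_j)$ giving $\tfrac12+2\widetilde{\mathcal C}_{\cB}(h)$, and strict monotonicity to transfer maximizers over the finite pool. Your remark that clipping would tie all below-chance candidates also matches the paper's own comment after the proof.
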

\begin{proof}
Fix a selector $h$ and choose
$j^\star\in\operatorname*{arg\,min}_{1\leq j\leq m}p_j(h)$.
For any $w\in\Delta_m$, the weights are nonnegative and sum to one, so
\begin{align*}
p_w(h)
&=\sum_{j=1}^{m}w_jp_j(h)
\geq\sum_{j=1}^{m}w_jp_{j^\star}(h)\\
&=p_{j^\star}(h)\sum_{j=1}^{m}w_j
=p_{j^\star}(h).
\end{align*}
Thus every mixture has success probability at least $\min_jp_j(h)$.
This bound is attained by the weights $w^\star_{j^\star}=1$ and
$w^\star_j=0$ for $j\neq j^\star$, which belong to $\Delta_m$ and satisfy
\[
p_{w^\star}(h)=p_{j^\star}(h)=\min_jp_j(h).
\]
Combining the lower bound with this attaining mixture gives
$\inf_{w\in\Delta_m}p_w(h)=\min_jp_j(h)$.

Using $J_j(h)=2p_j(h)-1$ and the definition of signed driving capacity,
\begin{align*}
\min_jp_j(h)
&=\min_j\frac{1+J_j(h)}{2}\\
&=\frac12+\frac12\min_jJ_j(h)\\
&=\frac12+2\left(\frac14\min_jJ_j(h)\right)
=\frac12+2\widetilde{\mathcal C}_{\cB}(h).
\end{align*}
The last expression is strictly increasing in signed driving capacity. Therefore, over any
nonempty finite selector pool, signed driving capacity and worst-case mixture success
probability have the same maximizers.
\end{proof}

This is the group maximin structure used in distributionally robust optimization
\citep{sagawa2020dro}: it covers changes in family weights while each family's distribution
stays fixed. Maximizing the signed capacity also maximizes its nonnegative part while still
separating candidates that clipping would assign the same zero value.

\paragraph{Sampling assumptions.}
Fix a nonempty finite selector pool $\cH$ and family sample sizes $n_j\geq1$ before
drawing selection sources. Each source used in family $j$ contributes one comparison to that family. For a fixed
$h$ and $j$, assume the outcomes
\[
X_{ijh}=2\mathbf 1[h\text{ selects }y^+\text{ on source }i\text{ in family }j]-1,
\qquad i=1,\ldots,n_j,
\]
are independent across sources. Define
$\widehat J_j(h)=n_j^{-1}\sum_iX_{ijh}$ and assume
$n_j^{-1}\sum_i\mathbb E[X_{ijh}]=J_j(h)$. This allows fixed stratification with the
corresponding population mixture. Every selector is evaluated on all declared comparisons
with a predeclared outcome for missing or invalid scores. Candidate-dependent removal of
comparisons is excluded. Different families may share sources, and different selectors may
share scores. For history-generated pools, these assumptions hold conditional on the history
and synthesis information fixed before sampling.

\begin{proposition}[Finite-pool maximin selection]
\label{prop:finite-selection}
Under these assumptions, for $\delta\in(0,1)$ let
\begin{equation}
\rho_\delta=\max_j\sqrt{\frac{2}{n_j}\log\frac{2|\cH|m}{\delta}}.
\label{eq:finite-epsilon}
\end{equation}
With probability at least $1-\delta$,
\begin{equation}
\sup_{h\in\cH,\,1\leq j\leq m}|\widehat J_j(h)-J_j(h)|\leq\rho_\delta.
\label{eq:uniform-j-event}
\end{equation}
On this event, any $\widehat h\in\cH$ whose empirical worst-family score is within
$\tau\geq0$ of the maximum satisfies
\begin{equation}
\min_jJ_j(\widehat h)
\geq
\underbrace{\max_{h\in\cH}\min_jJ_j(h)}_{\substack{\text{best population}\\\text{score in the pool}}}
-\underbrace{2\rho_\delta}_{\substack{\text{sampling}\\\text{error bound}}}
-\underbrace{\tau}_{\substack{\text{optimization}\\\text{error}}}.
\label{eq:finite-robust-regret}
\end{equation}
Its signed or nonnegative capacity is at most $\rho_\delta/2+\tau/4$ below the best
corresponding capacity in the pool. Exact maximization has $\tau=0$.
Maximizing scores rounded to four decimals gives $\tau\leq10^{-4}$.
\end{proposition}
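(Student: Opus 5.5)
The plan has three steps: a Hoeffding bound for each pair $(h,j)$, a union bound over all pairs, and then a deterministic comparison argument on the resulting good event.

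\textbf{Step 1: per-pair concentration.} Fix $h\in\cH$ and a family $j$. The outcomes $X_{ijh}$, $i=1,\ldots,n_j$, are independent and take values in $[-1,1]$, a range of width $2$. By the sampling assumptions their average mean is exactly $J_j(h)$; we only need the average, not identical distribution. Hoeffding's inequality \citep{hoeffding1963} for independent bounded variables then gives
\[
\Pr\bigl(|\widehat J_j(h)-J_j(h)|>s\bigr)\le 2\exp\bigl(-n_js^2/2\bigr).
\]
Set $s=\rho_\delta$. Since $\rho_\delta\ge\sqrt{(2/n_j)\log(2|\cH|m/\delta)}$, we get $n_j\rho_\delta^2/2\ge\log(2|\cH|m/\delta)$, so the tail is at most $\delta/(|\cH|m)$.

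\textbf{Step 2: union bound.} Take a union bound over the $|\cH|m$ pairs $(h,j)$. This yields the uniform event \eqref{eq:uniform-j-event} with probability at least $1-\delta$. The union bound needs no independence across selectors or families, so shared sources and shared scores are harmless. The pool is fixed before sampling, conditioning on the history if needed. That is the only place the "no candidate-dependent removal" assumption enters: it keeps each $\widehat J_j(h)$ an average over a fixed index set.

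\textbf{Step 3: deterministic comparison on the event.} Write $\widehat\Psi(h)=\min_j\widehat J_j(h)$ and $\Psi(h)=\min_jJ_j(h)$. The minimum over $j$ is $1$-Lipschitz in the sup norm, so $|\widehat\Psi(h)-\Psi(h)|\le\rho_\delta$ for every $h$. Let $h^\star$ maximize $\Psi$ over the finite pool. Then
\[
\Psi(\widehat h)\ge\widehat\Psi(\widehat h)-\rho_\delta\ge\widehat\Psi(h^\star)-\tau-\rho_\delta\ge\Psi(h^\star)-2\rho_\delta-\tau,
\]
which is \eqref{eq:finite-robust-regret}.

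Dividing by $4$ gives the signed-capacity bound, since $\widetilde{\mathcal C}_\cB=\Psi/4$, so the gap is at most $\rho_\delta/2+\tau/4$. For the nonnegative capacity, note that $a\mapsto\max\{0,a\}$ is monotone and $1$-Lipschitz. Applying it to $\widetilde{\mathcal C}_\cB(\widehat h)\ge\widetilde{\mathcal C}_\cB(h^\star)-(\rho_\delta/2+\tau/4)$ gives $\mathcal C_\cB(\widehat h)\ge\mathcal C_\cB(h^\star)-(\rho_\delta/2+\tau/4)$. Here $h^\star$ also maximizes $\mathcal C_\cB$, because clipping is monotone.

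\textbf{Rounding.} Exact maximization is $\tau=0$. For the rounded rule, suppose $\widehat h$ maximizes $\operatorname{round}_4\widehat\Psi$, with rounding to the nearest multiple of $10^{-4}$. Rounding is monotone, and it cannot merge two values separated by more than one grid step. So $\widehat\Psi(\widehat h)$ is within one grid step, $10^{-4}$, of $\max\widehat\Psi$, hence $\tau\le10^{-4}$.

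\textbf{Main obstacle.} The step that needs care is checking the Hoeffding constant. Hoeffding's bound is $2\exp(-2n^2s^2/\sum(b_i-a_i)^2)$ with $b_i-a_i=2$, which gives $2\exp(-n_js^2/2)$ and matches the $\sqrt{2/n_j}$ in \eqref{eq:finite-epsilon}. It also requires the non-identical-mean version, since the assumptions allow stratified sources. Everything else is bookkeeping.
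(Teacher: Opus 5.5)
Your proof is correct and follows the paper's argument step for step: a per-pair Hoeffding bound with range $2$, a union bound over the $|\cH|m$ selector--family pairs, the $1$-Lipschitz property of the minimum over families, the three-link chain through $h^\star$, and monotone $1$-Lipschitz clipping for the nonnegative capacity. Your rounding step is the contrapositive of the paper's chain $\widehat F(\widehat h)\geq\overline F(\widehat h)-\eta\geq\overline F(h)-\eta\geq\widehat F(h)-2\eta$ with $\eta=5\times10^{-5}$, and the bound $|\operatorname{round}_4(x)-x|\leq\eta$ is exactly what justifies your unproved claim that rounding cannot merge or reverse two values more than $10^{-4}$ apart.
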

\begin{proof}
Fix a selector $h\in\cH$ and a family $j$. The sampling assumptions give independence
across sources and $\mathbb E[\widehat J_j(h)]=J_j(h)$. Since
$X_{ijh}\in[-1,1]$, Hoeffding's inequality \citep{hoeffding1963} yields, for $u>0$,
\[
\Pr\!\left(|\widehat J_j(h)-J_j(h)|>u\right)
\leq 2\exp\!\left(-\frac{2n_j^2u^2}{\sum_{i=1}^{n_j}(1-(-1))^2}\right)
=2\exp\!\left(-\frac{n_ju^2}{2}\right).
\]
By the definition of $\rho_\delta$, each family satisfies
$n_j\rho_\delta^2/2\geq\log(2|\cH|m/\delta)$. Substituting $u=\rho_\delta$ gives
\[
\Pr\!\left(|\widehat J_j(h)-J_j(h)|>\rho_\delta\right)
\leq 2\exp\!\left(-\log\frac{2|\cH|m}{\delta}\right)
=\frac{\delta}{|\cH|m}.
\]
A union bound over all $|\cH|m$ selector--family pairs then gives
\begin{align*}
\Pr\!\left(\sup_{h\in\cH,\,1\leq j\leq m}
|\widehat J_j(h)-J_j(h)|>\rho_\delta\right)
&\leq\sum_{h\in\cH}\sum_{j=1}^{m}
\Pr\!\left(|\widehat J_j(h)-J_j(h)|>\rho_\delta\right)\\
&\leq |\cH|m\,\frac{\delta}{|\cH|m}=\delta.
\end{align*}
This proves Eq.~\eqref{eq:uniform-j-event} without requiring independence between
different selectors or families.

Work on this uniform event. For every $h$ and $j$,
$\widehat J_j(h)-\rho_\delta\leq J_j(h)\leq\widehat J_j(h)+\rho_\delta$.
Taking minima over $j$ gives
\[
\min_j\widehat J_j(h)-\rho_\delta
\leq\min_jJ_j(h)
\leq\min_j\widehat J_j(h)+\rho_\delta.
\]
Thus each empirical minimum differs from its population minimum by at most $\rho_\delta$,
even if the minimizing families differ. Choose
$h^\star\in\operatorname*{arg\,max}_{h\in\cH}\min_jJ_j(h)$, which exists because
$\cH$ is nonempty and finite. The assumed empirical optimization accuracy means
\[
\min_j\widehat J_j(\widehat h)
\geq\max_{h\in\cH}\min_j\widehat J_j(h)-\tau
\geq\min_j\widehat J_j(h^\star)-\tau.
\]
Combining these inequalities yields
\begin{align*}
\min_jJ_j(\widehat h)
&\geq\min_j\widehat J_j(\widehat h)-\rho_\delta\\
&\geq\min_j\widehat J_j(h^\star)-\rho_\delta-\tau\\
&\geq\min_jJ_j(h^\star)-2\rho_\delta-\tau\\
&=\max_{h\in\cH}\min_jJ_j(h)-2\rho_\delta-\tau,
\end{align*}
which proves Eq.~\eqref{eq:finite-robust-regret}.

Dividing by four and using the definition of signed driving capacity gives
\[
0\leq\widetilde{\mathcal C}_{\cB}(h^\star)
-\widetilde{\mathcal C}_{\cB}(\widehat h)
\leq\frac{2\rho_\delta+\tau}{4}
=\frac{\rho_\delta}{2}+\frac{\tau}{4}.
\]
For any $a\geq b$, the positive-part map satisfies
$0\leq\max\{0,a\}-\max\{0,b\}\leq a-b$.
Apply this with $a=\widetilde{\mathcal C}_{\cB}(h^\star)$ and
$b=\widetilde{\mathcal C}_{\cB}(\widehat h)$. Since $h^\star$ also maximizes
the nonnegative capacity, the same upper bound holds for its gap to the best in the pool.

Finally, write $\widehat F(h)=\min_j\widehat J_j(h)$ and
$\overline F(h)=\operatorname{round}_4(\widehat F(h))$.
Rounding to four decimals gives
$|\overline F(h)-\widehat F(h)|\leq\eta$, where $\eta=5\times10^{-5}$.
If $\widehat h$ maximizes $\overline F$, then for every $h\in\cH$,
\begin{align*}
\widehat F(\widehat h)
&\geq\overline F(\widehat h)-\eta\\
&\geq\overline F(h)-\eta\\
&\geq\widehat F(h)-2\eta.
\end{align*}
Taking the maximum over $h$ shows that the empirical optimization loss is at most
$2\eta=10^{-4}$, proving the stated rounding bound.
\end{proof}

\paragraph{Interpretation.}
Equation~\eqref{eq:finite-robust-regret} bounds the selected verifier's population
worst-family score relative to the best candidate in the fixed pool. The sampling term
$2\rho_\delta$ allows the selected candidate's empirical score to overestimate its true
score by $\rho_\delta$ and the population optimum's empirical score to underestimate its
true score by another $\rho_\delta$. The term $\tau$ accounts for approximate empirical
maximization, including rounding. Writing $n_{\min}=\min_jn_j$, we have
$\rho_\delta=\sqrt{(2/n_{\min})\log(2|\cH|m/\delta)}$, so the least-sampled family
controls this uniform bound. The guarantee is stated on the fixed contrast distributions.

\paragraph{Comparison with the current verifier.}
Equation~\eqref{eq:finite-robust-regret} compares the selected candidate with the
population optimum in the pool. At an update checkpoint, we can instead compare a
candidate $h'$ with the current selector $h_k$, assuming both belong to $\cH$.
Let their observed, unrounded gain in empirical worst-family score be
\[
g=\min_j\widehat J_j(h')-\min_j\widehat J_j(h_k).
\]
On the uniform event of Eq.~\eqref{eq:uniform-j-event}, the candidate's population
score is at least its empirical score minus $\rho_\delta$, whereas the current
selector's population score is at most its empirical score plus $\rho_\delta$.
Subtracting these two bounds gives
\begin{align*}
\min_jJ_j(h')-\min_jJ_j(h_k)
&\geq\left(\min_j\widehat J_j(h')-\rho_\delta\right)
-\left(\min_j\widehat J_j(h_k)+\rho_\delta\right)\\
&=g-2\rho_\delta.
\end{align*}
Dividing by four converts the score difference into a signed-capacity difference:
\begin{equation}
\widetilde{\mathcal C}_{\cB}(h')-\widetilde{\mathcal C}_{\cB}(h_k)
\geq\frac{g-2\rho_\delta}{4}.
\label{eq:capacity-selection-gap}
\end{equation}
Thus $g$ is the improvement measured on the selection sample, and $2\rho_\delta$
accounts for estimation error on both sides of the comparison. No optimization term
$\tau$ is needed here because $g$ compares these two selectors directly, regardless
of how the candidate was chosen within the pool. If $g>2\rho_\delta$, the bound is
strictly positive and certifies an increase in population signed driving capacity on
the uniform event. When $0<g\leq2\rho_\delta$ the sample shows a gain that the bound
cannot certify, and small family budgets make it uninformative. This threshold is a
sufficient condition for the guarantee only. UCoE adopts by the empirical selection gain
and the held-out comparisons of the next subsection.

\subsection{Evolution under the empirical verifier-update rule}
\label{app:empirical-update}

At a checkpoint, UCoE includes the current verifier in the selection pool, chooses a
candidate, and fixes it before evaluating held-out contrasts. Let $h$ and $h'$ be the
current and selected candidate selectors. The family inventory stays fixed.
We first establish what adoption guarantees on the observed comparison sets, then
bound the population capacity change under additional sampling assumptions.
The empirical results below assume common comparison sets with finite scores for both
selectors, so all paired counts and family accuracies use the same denominators.

Let $n_j\geq1$ now denote the held-out count in family $j$, with $N=\sum_jn_j$.
For held-out comparison $i$ in family $j$, define
$C_{ij}(h)=\mathbf 1[h\text{ selects }y^+]$,
$\widehat J_{{\rm val},j}(h)=2n_j^{-1}\sum_iC_{ij}(h)-1$, and
$\widehat p_{\rm pool}(h)=N^{-1}\sum_{j,i}C_{ij}(h)$.
Thus $\widehat p_{\rm pool}$ weights each contrast equally, giving family $j$ weight
$n_j/N$, and it is an equal-family average only when the family counts are equal.
The selection-split estimate $\widehat J_{{\rm sel},j}$ is defined in the same way
using that split's comparisons. Write
\begin{align}
W&=\sum_{j,i}\mathbf 1[C_{ij}(h')=1,C_{ij}(h)=0],
&L&=\sum_{j,i}\mathbf 1[C_{ij}(h')=0,C_{ij}(h)=1],\notag\\
\widehat\Delta J_j&=\widehat J_{{\rm val},j}(h')-\widehat J_{{\rm val},j}(h),
&\widehat\Gamma_{\rm sel}&=\min_j\widehat J_{{\rm sel},j}(h')
-\min_j\widehat J_{{\rm sel},j}(h).
\label{eq:empirical-update-statistics}
\end{align}
Thus $W$ and $L$ count candidate-only and current-only successes. Let
$\overline\Gamma_{\rm sel}$ be the difference between the two selection minima after
each is rounded to four decimals. For these complete comparisons, the update rule is
\begin{equation}
h_{k+1}=
\begin{cases}
h',&\underbrace{\overline\Gamma_{\rm sel}>0}_{\text{selection gain}},\quad
    \underbrace{W>L}_{\text{held-out gain}},\quad
    \underbrace{\min_j\widehat\Delta J_j\geq-\kappa}_{\text{bounded family regression}},\\
h,&\text{otherwise},
\end{cases}
\qquad \kappa=0.10.
\label{eq:empirical-admission}
\end{equation}
The first condition requires a strict increase in the selection split's worst-family
score. The held-out conditions require more candidate-only than current-only successes
and limit the decrease in every family's centered accuracy to $\kappa$.
A missing required family check retains $h$. Initialization adopts a maximin winner
without this paired test, since there is no current verifier.

For $s\in\{{\rm sel},{\rm val}\}$, define
$\widehat{\widetilde{\mathcal C}}_s(h)=\frac14\min_j\widehat J_{s,j}(h)$ and
$\widehat{\mathcal C}_s(h)=\max\{0,\widehat{\widetilde{\mathcal C}}_s(h)\}$.

\begin{proposition}[Empirical gain and bounded family regression]
\label{prop:empirical-evolution}
Every candidate adopted under Eq.~\eqref{eq:empirical-admission} satisfies
\begin{align}
\widehat{\widetilde{\mathcal C}}_{\rm sel}(h')
-\widehat{\widetilde{\mathcal C}}_{\rm sel}(h)
&=\frac14\widehat\Gamma_{\rm sel}>0,
\label{eq:empirical-selection-capacity}\\
\widehat p_{\rm pool}(h')-\widehat p_{\rm pool}(h)
&=\frac{W-L}{N}>0,
\label{eq:discordant-gain}\\
\widehat{\widetilde{\mathcal C}}_{\rm val}(h')
-\widehat{\widetilde{\mathcal C}}_{\rm val}(h)
&\geq-\frac{\kappa}{4}.
\label{eq:empirical-capacity-regression}
\end{align}
The last bound also holds for the nonnegative held-out capacity.
\end{proposition}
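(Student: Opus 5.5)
The plan is to verify each of the three displayed claims directly from the definitions in Eq.~\eqref{eq:empirical-update-statistics} and the three adoption conditions of Eq.~\eqref{eq:empirical-admission}. No probabilistic argument is needed, since everything is an identity or inequality on the observed, common comparison sets.

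For Eq.~\eqref{eq:empirical-selection-capacity}, I will use the definition $\widehat{\widetilde{\mathcal C}}_{\rm sel}(h)=\frac14\min_j\widehat J_{{\rm sel},j}(h)$. This immediately gives the difference of capacities as $\frac14\widehat\Gamma_{\rm sel}$. The only subtlety is strict positivity, because adoption tests the rounded difference $\overline\Gamma_{\rm sel}>0$, not the unrounded one.

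I will argue as follows. Each empirical accuracy is $2k/n_j-1$ for integer $k$. Rounding to four decimals is a monotone (nondecreasing) map. Hence $\operatorname{round}_4(a)>\operatorname{round}_4(b)$ forces $a>b$: if $a\le b$, monotonicity would give $\operatorname{round}_4(a)\le\operatorname{round}_4(b)$. So $\overline\Gamma_{\rm sel}>0$ implies $\widehat\Gamma_{\rm sel}>0$. This monotonicity step is the one place needing care, and I expect it to be the main (minor) obstacle, mostly in stating the rounding convention as nondecreasing.

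For Eq.~\eqref{eq:discordant-gain}, I will write
\[
\widehat p_{\rm pool}(h')-\widehat p_{\rm pool}(h)=N^{-1}\sum_{j,i}\{C_{ij}(h')-C_{ij}(h)\}.
\]
Each summand lies in $\{-1,0,1\}$: it equals $+1$ exactly on candidate-only successes, $-1$ exactly on current-only successes, and $0$ on concordant pairs. The sum is therefore $W-L$, which is positive by the held-out condition $W>L$, using the common denominator $N$ assumed for complete comparisons.

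For Eq.~\eqref{eq:empirical-capacity-regression}, let $j_0$ attain $\min_j\widehat J_{{\rm val},j}(h')$. Then
\[
\min_j\widehat J_{{\rm val},j}(h')=\widehat J_{{\rm val},j_0}(h')=\widehat J_{{\rm val},j_0}(h)+\widehat\Delta J_{j_0}\ge\min_j\widehat J_{{\rm val},j}(h)-\kappa,
\]
using the bounded-regression condition $\min_j\widehat\Delta J_j\ge-\kappa$. Dividing by four gives the signed claim.

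For the nonnegative capacity, I will use the fact that the positive-part map is nondecreasing and 1-Lipschitz. Setting $a=\widehat{\widetilde{\mathcal C}}_{\rm val}(h')$ and $b=\widehat{\widetilde{\mathcal C}}_{\rm val}(h)$, the bound $a\ge b-\kappa/4$ gives $\max\{0,a\}\ge\max\{0,b-\kappa/4\}\ge\max\{0,b\}-\kappa/4$, which completes the proof.
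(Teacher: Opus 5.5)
Your proposal is correct and follows essentially the same route as the paper: monotonicity of four-decimal rounding gives the strict selection gain, the discordant-pair count gives the pooled-rate identity, a per-family bound passed through the minimum gives the held-out capacity bound, and monotonicity together with the Lipschitz property of the positive part handles the clipped capacity. The differences are cosmetic. You evaluate at a minimizing family $j_0$ of $h'$, whereas the paper bounds every family and then takes the minimum; the two arguments are equivalent. Your remark that each accuracy has the form $2k/n_j-1$ is harmless but not needed.
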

\begin{proof}
Suppose $h'$ is adopted, so all three conditions in Eq.~\eqref{eq:empirical-admission}
hold. Rounding to four decimals is nondecreasing. If the candidate's unrounded
selection minimum were no larger than the current selector's, its rounded minimum
could not be larger either. Thus $\overline\Gamma_{\rm sel}>0$ implies
$\widehat\Gamma_{\rm sel}>0$. Substituting the definition of empirical signed capacity,
\begin{align*}
\widehat{\widetilde{\mathcal C}}_{\rm sel}(h')
-\widehat{\widetilde{\mathcal C}}_{\rm sel}(h)
&=\frac14\left\{\min_j\widehat J_{{\rm sel},j}(h')
-\min_j\widehat J_{{\rm sel},j}(h)\right\}\\
&=\frac14\widehat\Gamma_{\rm sel}>0,
\end{align*}
which proves Eq.~\eqref{eq:empirical-selection-capacity}.

For the pooled success rate, each difference $C_{ij}(h')-C_{ij}(h)$ is $+1$ on
a candidate-only success, $-1$ on a current-only success, and $0$ when the selectors
both succeed or both fail. Summing over the common held-out comparisons therefore gives
\begin{align*}
\widehat p_{\rm pool}(h')-\widehat p_{\rm pool}(h)
&=\frac1N\sum_{j=1}^{m}\sum_{i=1}^{n_j}
\{C_{ij}(h')-C_{ij}(h)\}\\
&=\frac{W-L}{N}>0,
\end{align*}
where the strict inequality follows from $W>L$. This proves Eq.~\eqref{eq:discordant-gain}.

For the held-out capacity, the regression condition gives, for every family $j$,
\[
\widehat J_{{\rm val},j}(h')
\geq\widehat J_{{\rm val},j}(h)-\kappa
\geq\min_{\ell}\widehat J_{{\rm val},\ell}(h)-\kappa.
\]
The last expression is independent of $j$, so taking the minimum on the left yields
\[
\min_j\widehat J_{{\rm val},j}(h')
\geq\min_j\widehat J_{{\rm val},j}(h)-\kappa.
\]
Subtracting the current minimum and dividing by four proves
Eq.~\eqref{eq:empirical-capacity-regression}, even if the weakest family changes.
To obtain the nonnegative-capacity bound, write
$a=\widehat{\widetilde{\mathcal C}}_{\rm val}(h)$ and
$b=\widehat{\widetilde{\mathcal C}}_{\rm val}(h')$.
We have $b\geq a-\kappa/4$, and hence
\[
\max\{0,b\}\geq\max\{0,a-\kappa/4\}
\geq\max\{0,a\}-\kappa/4.
\]
The first inequality uses monotonicity of the positive-part map, and the second uses
that reducing its argument by $\kappa/4$ reduces its value by at most $\kappa/4$.
Subtracting $\max\{0,a\}$ proves the remaining claim.
\end{proof}

\paragraph{Interpretation.}
The three conclusions concern different quantities: signed capacity strictly increases
on the selection split, pooled success rate strictly increases on the held-out split,
and held-out capacity may decrease within the allowed tolerance. Since $J_j=2p_j-1$,
$\kappa=0.10$ permits at most a five-percentage-point decrease in any family's
held-out success rate and a $0.025$ decrease in held-out capacity.
For example, on two equally sized held-out families, success rates changing from
$(0.80,0.60)$ to $(0.90,0.56)$ raise pooled success from $0.70$ to $0.73$ while
lowering signed capacity from $0.05$ to $0.03$. Both held-out conditions pass,
although adoption still requires the separate selection condition.
These are exact statements on the observed comparison sets and need no independence
assumption.

\paragraph{Population interpretation.}
Condition on the history, synthesis, and selection information so that $h,h'$ are fixed.
Suppose validation uses fresh sources independent of that information, with fixed family
counts $n_j$. Within each family, assume the paired outcomes
$(C_{ij}(h),C_{ij}(h'))$ are independent across sources, with one comparison per source.
Different families may share sources. All probabilities and expectations below are
conditional on this pre-validation information.

Let $\Delta J_j=J_j(h')-J_j(h)$ be the population change in family $j$, and define
\[
Y_{ij}=2\{C_{ij}(h')-C_{ij}(h)\}\in\{-2,0,2\},
\qquad
\widehat\Delta J_j=\frac1{n_j}\sum_{i=1}^{n_j}Y_{ij}.
\]
Assume $n_j^{-1}\sum_i\mathbb E[Y_{ij}]=\Delta J_j$, so the empirical change
has the stated population mean. Because $Y_{ij}\in[-2,2]$ and the paired outcomes
are independent across sources, one-sided Hoeffding gives, for any $u>0$,
\begin{align*}
\Pr\!\left(\Delta J_j<\widehat\Delta J_j-u\right)
&=\Pr\!\left(\widehat\Delta J_j-\Delta J_j>u\right)\\
&\leq\exp\!\left(-\frac{2n_j^2u^2}{\sum_{i=1}^{n_j}(2-(-2))^2}\right)
=\exp\!\left(-\frac{n_ju^2}{8}\right).
\end{align*}
For $\delta\in(0,1)$, choose
\[
b_j=\sqrt{\frac8{n_j}\log\frac m\delta}.
\]
Substituting $u=b_j$ bounds the failure probability for family $j$ by
$\exp(-n_jb_j^2/8)=\delta/m$. A union bound over the $m$ families gives
\begin{align*}
\Pr\!\left(\exists j:\Delta J_j<\widehat\Delta J_j-b_j\right)
&\leq\sum_{j=1}^{m}
\Pr\!\left(\Delta J_j<\widehat\Delta J_j-b_j\right)\\
&\leq\sum_{j=1}^{m}\frac\delta m=\delta.
\end{align*}
Thus, with conditional probability at least $1-\delta$, every population family
change is at least its observed change minus $b_j$. Only one fixed selector pair is
compared on validation, so this union bound ranges over families, not the candidate pool.

On this event, for every $j$,
\[
J_j(h')=J_j(h)+\Delta J_j
\geq\min_{\ell}J_{\ell}(h)+\min_{\ell}\Delta J_{\ell}.
\]
Taking the minimum over $j$ on the left and rearranging yields
\[
\min_jJ_j(h')-\min_jJ_j(h)
\geq\min_j\Delta J_j
\geq\min_j(\widehat\Delta J_j-b_j).
\]
The first inequality need not be an equality, since the weakest family can change
between selectors. Dividing by four gives the capacity bound, so on the same event
\begin{align}
\Delta J_j&\geq\widehat\Delta J_j-b_j
\quad\text{for every }j,\notag\\
\widetilde{\mathcal C}_{\cB}(h')-\widetilde{\mathcal C}_{\cB}(h)
&\geq\frac14\min_j(\widehat\Delta J_j-b_j).
\label{eq:paired-update-capacity}
\end{align}
If the candidate is adopted, $\widehat\Delta J_j\geq-\kappa$ for every $j$, hence
\[
\min_j(\widehat\Delta J_j-b_j)
\geq\min_j(-\kappa-b_j)=-\kappa-\max_jb_j.
\]
Substituting this into Eq.~\eqref{eq:paired-update-capacity} gives
\[
\widetilde{\mathcal C}_{\cB}(h')-\widetilde{\mathcal C}_{\cB}(h)
\geq
-\underbrace{\frac\kappa4}_{\substack{\text{allowed empirical}\\\text{capacity regression}}}
-\underbrace{\frac{\max_jb_j}{4}}_{\substack{\text{sampling}\\\text{error bound}}}.
\]
The bound limits population regression on this event without certifying a gain, and it
holds for one checkpoint conditional on pre-validation information. The update criteria
use only the empirical quantities and never these confidence bounds.

\section{Experimental settings}
\label{app:experimental-protocols}

\subsection{Benchmarks and dataset splits}
\label{app:benchmarks}

Table~\ref{tab:benchmark-statistics} lists every benchmark with its task type, domain, and
number of evaluated tasks. The six open-ended benchmarks cover general assistance, story and
domain writing, multi-turn instruction following, and response refinement, and the verifiable
benchmarks add instruction compliance, competition mathematics, code generation, and the
spreadsheet stream of the agent-harness study. Every experiment fixes the task order before
deployment, builds response contrasts from calibration sources disjoint from deployment, and
never exposes benchmark answers or final-evaluator feedback to an adaptive method. 

\begin{table}[tb!]
\centering
\small
\setlength{\tabcolsep}{4pt}
\caption{Benchmarks, task types, domains, and evaluated tasks.}
\label{tab:benchmark-statistics}
\begin{tabularx}{\linewidth}{@{}llXr@{}}
\toprule
Benchmark & Task type & Domain & Evaluated tasks \\
\midrule
Arena-Hard & Open-ended & Challenging user requests to a general assistant & 500 \\
WritingPrompts & Open-ended & Story generation from writing prompts & 1,024 \\
WildBench & Open-ended & Real-user requests to a general assistant & 1,024 \\
WritingBench & Open-ended & Writing across diverse professional and creative domains & 500 \\
MultiChallenge & Open-ended & Multi-turn conversations with instruction constraints & 128 \\
RefineBench & Open-ended & Free-form refinement of an existing response & 278 \\
IFBench & Verifiable & Verifiable instruction constraints & 150 \\
Hard-MATH & Verifiable & Competition mathematics from MATH-500 and AIME 2022 to 2025 & 382 \\
BigCodeBench-Hard & Verifiable & Code generation checked by execution, Instruct split & 148 \\
SpreadsheetBench-2 & Verifiable & Business spreadsheet workflows in the Codex agent harness & 75 \\
\bottomrule
\end{tabularx}
\end{table}

\subsection{Harness-state instantiation}
\label{app:protocol-settings}

In the main experiments the harness state is a bounded procedural memory that sits between the
actor and its prompt. Each adaptive method owns an isolated instance that starts empty and
follows the same rules for capacity, retrieval, distillation, deduplication, and eviction, so
methods differ only in the feedback that decides what enters it. An entry is a procedure with a
trigger of at most 10 words and an action of at most 14 words, linked to the criterion and
quality dimension that motivated it. When a method accepts a repair, a distillation call turns
the draft, the accepted revision, and the feedback into one such procedure and strips
task-specific entities and answer content, so a task can add at most one procedure, and only
after its response has been returned. A rejected revision updates usage statistics only. Before
each draft and revision, retrieval ranks stored procedures by criterion match, then by dimension
match, then by their record of accepted and rejected uses, and returns at most two procedures
within 64 words, which the actor sees as general guidelines in its prompt. Near duplicates are
removed at a normalized token-set Jaccard overlap of 0.75, and when the global capacity of 12
procedures or the per-dimension capacity of 2 is exceeded, the procedure with the weakest
acceptance record is evicted, with ties broken by age. Table~\ref{tab:hyperparameters} lists
these capacities. 

\subsection{Baseline implementations}
\label{app:baselines}

We compare feedback mechanisms inside one fixed-model, one-revision lifecycle rather than
reproducing the parameter-training pipelines of RaR, DREAM, or SpyRL.
Table~\ref{tab:baseline-adaptations} gives each method's feedback interface and revision
decision. In every revision-enabled baseline, the method scores the draft with its own signal,
produces one repair from its own feedback, rescores the revision with the same signal, and
serves the revision only when its normalized reward rises by more than $0.01$. All methods share the
revision prompt, which provides the task, the draft, the method's native feedback, and the
retrieved procedures, and asks the actor to preserve correct material and return only the
revised response. Each adaptive baseline also owns an isolated procedural memory under the rules
of App.~\ref{app:protocol-settings}. No method
sees final-evaluator feedback during deployment.

\begin{table}[tb!]
\centering
\footnotesize
\setlength{\tabcolsep}{3pt}
\renewcommand{\arraystretch}{1.06}
\caption{Training-free baseline interfaces used in our experiments.}
\label{tab:baseline-adaptations}
\begin{tabularx}{\linewidth}{@{}>{\raggedright\arraybackslash}p{1.55cm}>{\raggedright\arraybackslash}X>{\raggedright\arraybackslash}X@{}}
\toprule
\textbf{Method} & \textbf{Feedback and parsed output} & \textbf{Revision decision} \\
\midrule
Direct Generation & No feedback call. The shared draft is served unchanged. & No revision and no procedural memory. \\
LLM as a Judge & A fixed prompt reads the task and response. It returns \texttt{\{"success": true|false\}} and a separate actionable critique, \texttt{\{"critique": "..."\}}. & Binary reward; a revision is accepted only for a $0\!\rightarrow\!1$ change. \\
RaR & The task produces 7--15 criteria labeled Essential, Important, Optional, or Pitfall. Grading returns one Boolean per criterion as \texttt{\{"passes": [...]\}}; failed criteria form the feedback. & Reward is the passed-criterion fraction. Acceptance requires a gain above $0.01$. \\
DREAM & The task produces 3--6 weighted criterion nodes and a root threshold as \texttt{\{"nodes": [...], "root": \{"rule": "weighted\_mean", "tau": $t$\}\}}. The three lowest-scoring nodes form the feedback. & Reward is the weighted node mean. Acceptance requires a gain above $0.01$. \\
SpyRL & A detector compares responses generated from the full task and a partially masked task. Its vote counts yield a role-normalized civilian advantage; the counts and advantage form the feedback. & The revision plays a second game. It is accepted when its advantage improves by more than $0.01$. \\
\bottomrule
\end{tabularx}
\end{table}

\subsection{Default hyperparameters}
\label{app:hyperparameters}

Table~\ref{tab:hyperparameters} gives the default UCoE hyperparameters on Arena-Hard. The
Qwen3-8B and Qwen3.8-27B runs share every value, including the checkpoint schedule and the
contrast quotas.

\begin{table}[tb!]
\centering
\small
\setlength{\tabcolsep}{4pt}
\renewcommand{\arraystretch}{1.08}
\caption{Default UCoE hyperparameters on Arena-Hard, shared by the Qwen3-8B and Qwen3.8-27B
runs.}
\label{tab:hyperparameters}
\begin{tabularx}{\linewidth}{@{}lXl@{}}
\toprule
Component & Hyperparameter & Value \\
\midrule
Decoding & Generation / revision temperature & $0.7$ / $0.7$ \\
 & Scoring / synthesis / distillation temperature & $0$ / $0$ / $0$ \\
\addlinespace
Verifier synthesis & Initial candidates & 2 \\
 & New candidates per checkpoint & 1 \\
 & Maximum new criteria per candidate & 2 \\
\addlinespace
Response contrasts & Reference contrast families & 3 \\
 & Selection sources per family & 24 \\
 & Held-out sources per family & 96 \\
 & Selection / held-out sources for the relation family & 12 / 24 \\
\addlinespace
Response revision & Revision attempts per task & 1 \\
 & Acceptance margin $\epsilon$ & $0.01$ \\
\addlinespace
Procedural memory & Total capacity & 12 \\
 & Maximum entries per dimension & 2 \\
 & Maximum retrieved procedures & 2 \\
 & New procedures per accepted revision & At most 1 \\
 & Token-Jaccard deduplication threshold & $0.75$ \\
\addlinespace
Verifier updates & Checkpoints & After tasks 128 and 256 \\
 & Regression tolerance $\kappa$ on any existing family & $0.10$ \\
\bottomrule
\end{tabularx}
\end{table}

\subsection{Contrast-set construction}
\label{app:construction-details}

\paragraph{Sources and splits.}
A contrast pairs a base response $y^+$ with a degraded copy $y^-$ of the same response, so the
two differ in exactly one declared property and the preference for $y^+$ concerns that property
only. Base responses are the actor's own answers to public calibration prompts that are disjoint
from the deployment stream. Each source prompt is assigned to the selection split or the held-out
split before any verifier is scored, every contrast built from a source stays in that split, and
a source contributes at most one contrast per family. Each checkpoint draws fresh sources, and the two splits stay disjoint within every
checkpoint. Benchmark answers and final-evaluator judgments are never used.

\paragraph{Reference families.}
Three transformations apply to every benchmark. Omission removes the two most information-dense
sentences, ranked by content-word and numerical density, or the two densest nonempty lines of a
code-dominated response. Unsupported-claim injection inserts a sentence that attributes an
opposing conclusion to an external review the task never provided. Irrelevant insertion prepends
an unrelated passage and checklist while keeping the original answer intact. A source is eligible
for omission only with at least five sentences or lines and for claim injection only with at
least two sentences, an edit that leaves the response empty or unchanged is discarded, and
construction walks the preassigned source order until each family reaches its quota. Story
generation replaces these edits with narrative edits of the same kind, such as inserting a
contradiction, swapping sentences across the story, truncating the ending, or shuffling sentence
order.

\paragraph{Relation families.}
Relation discovery adds benchmark-specific families under one rule. The actor and a larger frozen
comparison model answer the same development prompts, the verifier model reads each pair in
random order with system identities hidden and names atomic, task-general differences, and a
relation is kept only if it recurs in at least three prompts and can be written as one binary
criterion with one edit that breaks that property alone. Each retained relation then follows the
same source, split, and eligibility rules as the reference families. When a relation family
cannot fill its quota, the shortfall is recorded and the run continues without backfilling.

\paragraph{Scoring candidates.}
Every candidate verifier scores the same contrasts, with transformation names and preference
directions hidden from the verifier model. A candidate succeeds on a contrast only if
$r_{\cV}(x,y^+)>r_{\cV}(x,y^-)+\epsilon$, and a missing or nonfinite score counts as a failure.
Selection maximizes the minimum family centered accuracy on the selection split, breaking ties
by mean accuracy, then by aggregation simplicity, then by fixed candidate order. The selected
candidate and the current verifier are then compared on the same held-out contrasts, and neither
may change once held-out scoring has begun.

\subsection{Implementation details}
\label{app:implementation-details}

\paragraph{Compute resources.}
\label{app:compute-resources}
For the main experiments we serve Qwen3-8B, Qwen3.8-27B, Qwen3.6-35B-A3B, and
Qwen3.5-397B-A17B through the Tinker API, and we run the WritingBench Critic-7B evaluator in
bfloat16 on a single H100. For the other experiments we access GPT-5.4, GPT-5.4-mini, and GPT-4.1
through the OpenAI API \citep{openai2025gpt5}, run GPT-5.6-luna and GPT-5.6-sol through the Codex app
server \citep{openai2025codex}, and serve
Qwen3-14B on a cluster of four A100s. Every experiment records actor, verifier-model, and
final-evaluator calls separately.

\paragraph{Verifier representation and synthesis.}
Beyond the interface of Section~\ref{sec:theory}, each criterion carries explicit descriptions of
what earns a score of 0 and of 1, mechanical criteria such as repetition or length are computed
directly, and semantic criteria are scored by the verifier model. The aggregation is one
expression over criterion scores that may use arithmetic, minima, maxima, order statistics, and
conditional dependencies and must stay within $[0,1]$. A static validator checks that every
referenced criterion is defined, that every new criterion affects the output, and that probe
score vectors produce finite values in range, and it discards candidates that fail. A checkpoint
candidate inherits the current verifier's definitions before it appends criteria or changes the
aggregation.

\paragraph{Scoring and diagnosis.}
The verifier model scores semantic criteria with deterministic decoding, and candidates that
share a criterion reuse the same measurement, so comparing several candidates costs little more
than scoring one. The diagnosis procedure takes the weakest criterion that participates in the
aggregation, passes its definition, its score, and the response to the verifier model, and
receives a localized diagnosis and one short repair instruction without any rescoring. The
revision prompt then carries the draft, the diagnosis, the repair instruction, and the retrieved
procedures, and asks for the minimal fix that preserves everything else.

\paragraph{Verifier updates.}
\label{app:verifier-update}
At a checkpoint, UCoE summarizes the most recent completed tasks, up to 128 of them, into the
revision-acceptance rate, the criteria that were most often weakest, and the size of the harness
state, and the verifier model synthesizes one candidate from this summary and the current
verifier's definitions. The candidate and the current verifier are scored on that checkpoint's
response contrasts as in App.~\ref{app:construction-details}, and the update criteria of
Equation~\eqref{eq:empirical-admission} decide adoption. A failed or undefined comparison retains
the current verifier, an adopted candidate first affects the next task, and confidence bounds
never enter the decision.

\subsection{Evaluation}
\label{app:statistical-tests}

\paragraph{Preference score.}
The primary open-ended endpoint is tie-adjusted pairwise preference against the matched Direct
Generation response of the same actor. For every task, Qwen3.5-397B-A17B compares the two full
responses in both presentation orders, with deterministic decoding and without method
identities, and the two judgments are merged into one task-level outcome for the evaluated
method, a win, a tie, or a loss, with identical responses counted as ties. With $W$, $T$, and
$L$ the numbers of tasks won, tied, and lost over the evaluated set,
\begin{equation}
\mathrm{Pref}=100\cdot\frac{W+\tfrac12 T}{W+T+L},
\label{eq:preference}
\end{equation}
so a tie receives half credit and $50\%$ is parity.

\paragraph{Benchmark-specific metrics.}
Table~\ref{tab:benchmark-metrics} lists the native metric of each benchmark and the model or
program that scores it. We follow each benchmark's own scoring procedure, and where it relies on
a judge model we use the model named in the table with deterministic decoding.

\begin{table}[tb!]
\centering
\small
\setlength{\tabcolsep}{4pt}
\renewcommand{\arraystretch}{1.08}
\caption{Native metric and scorer for each benchmark.}
\label{tab:benchmark-metrics}
\begin{tabularx}{\linewidth}{@{}l>{\raggedright\arraybackslash}X>{\raggedright\arraybackslash}X@{}}
\toprule
Benchmark & Native metric & Scorer \\
\midrule
Arena-Hard & Style-adjusted win rate against the o3-mini-2025-01-31 anchor, both orders, five-level $w_3$ weighting & GPT-4.1 (gpt-4.1-2025-04-14) \\
WritingPrompts & None & \\
WildBench & Mean rating from 1 to 10 with the full-response template & Qwen3.5-397B-A17B \\
WritingBench & Critic score from 1 to 10 & Critic-7B \\
MultiChallenge & Hidden-rubric conversation accuracy & Qwen3.5-397B-A17B \\
RefineBench & Checklist accuracy & Qwen3.5-397B-A17B with the benchmark checklist \\
IFBench & Strict instruction accuracy & Programmatic verifiers \\
Hard-MATH & Answer accuracy & Answer match against the reference \\
BigCodeBench-Hard & Pass@1 & Execution harness \\
SpreadsheetBench-2 & Exact match of the returned workbook & Checker with LibreOffice recalculation \\
\bottomrule
\end{tabularx}
\end{table}

\subsection{Agent-harness study on SpreadsheetBench-2}
\label{app:ssb2-harness}

This section gives the details of the agent-harness study reported in Section~\ref{sec:results}: the harness and models, what the harness state is in this setting, the benchmark, its checker, and
the two methods we compare against, and the verifier programs.

\paragraph{Harness and models.}
This study runs UCoE inside the Codex coding-agent harness \citep{openai2025codex} instead of the prompt-level harness
of the main experiments. The actor is GPT-5.6-luna at medium reasoning effort and the verifier
model is GPT-5.6-sol at extra-high reasoning effort, both served through the Codex app server
with fixed parameters. Every actor and verifier call is a fresh Codex session in an isolated task
directory, with the Codex tools: a shell that runs Python with openpyxl \citep{openpyxl}, file editing confined to
the task directory, and a helper that recalculates a copy of a workbook. The sandbox allows
writes only inside that directory, blocks network access, and requests no approvals. The
directory also holds the task instruction, the input workbook, and a task skill file that fixes
the workflow for every method: write the completed workbook with openpyxl, use real formulas where
the task implies calculations, never modify the input, and treat nothing outside the directory
as part of the task.

\paragraph{Harness state.}
In this harness the state $H_t$ is the agent's working context: the Codex tools and sandbox,
the task skill file, and an experience block that the runner injects into the actor's drafting
and revision prompts. UCoE evolves only the experience block. It is a bounded list of at most 12
trigger and procedure entries over five dimensions, requirements, formulas, references,
preservation, and numbers, with at most 3 entries per dimension and 2,400 characters in total.
Entries are distilled from accepted repairs, and the oldest entries are dropped when the block
exceeds its size. The tools and the skill file stay fixed, so the study tests whether UCoE's
harness-state updates transfer from the procedural memory of the main experiments to the
context of a coding agent.

\paragraph{Benchmark and evaluation.}
SpreadsheetBench-2 \citep{zhu2026spreadsheetbench2evaluatingagents} provides business
spreadsheet tasks in three categories, Template, Financial Model, and Debugging, and the stream
used here has 75 tasks. The benchmark's checker recalculates the returned workbook with LibreOffice \citep{libreoffice}
and compares it cell by cell with the reference workbook, so a task counts as correct only on an
exact match. The agent never sees the reference workbook, the checker, or any evaluator feedback.
We compare UCoE with Direct Generation, which is the same harness, skill file, and actor
without feedback or harness state, and with LLM as a Judge, which adds a fixed holistic judge that
guides one revision. All three use the same tasks and inputs and are scored with the same checker.

\paragraph{Verifier programs.}
Every candidate verifier keeps a fixed catalogue of base criteria: requirements implemented,
formulas correct and consistent, references pointing at the intended cells, unrelated content
preserved, values numerically plausible, no recalculated error values in the changed region, and
the candidate differing from the input. Requirements, formulas, references, and numerical
plausibility are scored by the verifier model in a scoring session that may open the workbook
with openpyxl, recalculate an evaluation copy, diff it against the input, list formula
precedents, and count error cells. Preservation is checked by a tool diff against the input, and
the error-value and changed-workbook checks are deterministic. A candidate may append at most two
criteria of the same shape and chooses the aggregation. Scoring uses evidence that the runner
recalculates and binds to the candidate workbook, and the diagnosis session receives the frozen
scores and returns one localized repair instruction.

\section{Additional results}
\label{app:additional-results}

\subsection{Benchmark-specific results}
\label{app:benchmark-specific-results}

\begin{table}[tb!]
\centering
\scriptsize
\setlength{\tabcolsep}{2pt}
\renewcommand{\arraystretch}{1.08}
\caption{\textbf{Benchmark-specific metrics for Qwen3-8B and Qwen3.8-27B.} Overall is the unweighted
mean after scaling the two 1-to-10 ratings to 0 to 100. Bold marks column maxima.}
\label{tab:qwen-benchmark-outcomes}
\resizebox{\linewidth}{!}{%
\begin{tabular}{@{}lrrrrrrrrrrrr@{}}
\toprule
& \multicolumn{6}{c}{\textbf{Qwen3-8B}} &
\multicolumn{6}{c}{\textbf{Qwen3.8-27B}} \\
\cmidrule(lr){2-7}\cmidrule(l){8-13}
\textbf{Method} &
\shortstack{\textbf{Arena-}\\\textbf{Hard}} &
\shortstack{\textbf{Wild}\\\textbf{Bench}} &
\shortstack{\textbf{Writing}\\\textbf{Bench}} &
\shortstack{\textbf{Multi}\\\textbf{Challenge}} &
\shortstack{\textbf{Refine}\\\textbf{Bench}} &
\textbf{Overall} &
\shortstack{\textbf{Arena-}\\\textbf{Hard}} &
\shortstack{\textbf{Wild}\\\textbf{Bench}} &
\shortstack{\textbf{Writing}\\\textbf{Bench}} &
\shortstack{\textbf{Multi}\\\textbf{Challenge}} &
\shortstack{\textbf{Refine}\\\textbf{Bench}} &
\textbf{Overall} \\
& \textbf{\%} & \textbf{1 to 10} & \textbf{1 to 10} & \textbf{\%} & \textbf{\%} & \textbf{mean} &
\textbf{\%} & \textbf{1 to 10} & \textbf{1 to 10} & \textbf{\%} & \textbf{\%} & \textbf{mean} \\
\midrule
Direct Generation & $7.0$ & $5.78$ & $6.908$ & $33.59$ & $43.26$ & $42.15$ &
$51.6$ & $7.68$ & $8.179$ & $56.25$ & $67.44$ & $66.78$ \\
LLM as a Judge & $8.1$ & $5.70$ & $6.931$ & $\mathbf{39.84}$ & $42.85$ & $43.42$ &
$53.5$ & $7.84$ & $8.206$ & $53.12$ & $68.10$ & $67.04$ \\
RaR & $7.8$ & $5.73$ & $6.972$ & $35.94$ & $45.02$ & $43.16$ &
$57.0$ & $7.68$ & $\mathbf{8.238}$ & $51.56$ & $67.95$ & $67.14$ \\
DREAM & $7.1$ & $5.57$ & $\mathbf{6.992}$ & $29.69$ & $44.16$ & $41.31$ &
$57.1$ & $7.76$ & $8.223$ & $\mathbf{57.81}$ & $68.14$ & $68.58$ \\
SpyRL & $7.2$ & $5.25$ & $6.948$ & $32.03$ & $41.76$ & $40.59$ &
$50.9$ & $7.59$ & $8.178$ & $54.69$ & $\mathbf{69.00}$ & $66.45$ \\
\rowcolor{ucoerow}
\textbf{UCoE} & $\mathbf{10.2}$ & $\mathbf{5.97}$ & $6.980$ & $38.28$ & $\mathbf{45.67}$ & $\mathbf{44.73}$ &
$\mathbf{60.9}$ & $\mathbf{7.95}$ & $8.215$ & $\mathbf{57.81}$ & $68.99$ & $\mathbf{69.87}$ \\
\bottomrule
\end{tabular}}
\end{table}

Table~\ref{tab:qwen-benchmark-outcomes} reports each benchmark's native metric for the same runs
as Table~\ref{tab:cross-benchmark}, with WritingPrompts omitted because it has no native metric.
UCoE improves over Direct Generation on every metric at both actor sizes and has the highest
overall mean in both blocks, $44.73$ against $43.42$ for the next method at 8B and $69.87$
against $68.58$ at 27B. The largest gains are on Arena-Hard, where the style-adjusted score rises
from $7.0$ to $10.2$ at 8B and from $51.6$ to $60.9$ at 27B, and UCoE also leads WildBench at
both sizes, RefineBench at 8B, and ties for the lead on MultiChallenge at 27B. The rubric-based
baselines score slightly higher on WritingBench Critic, LLM as a Judge leads MultiChallenge at
8B, and SpyRL edges RefineBench at 27B by $0.01$, so the native metrics confirm the preference
results without giving UCoE a uniform lead over every baseline.
App.~\ref{app:writingbench-natural-audit} audits the WritingBench revisions with Critic-7B
directly.

\subsection{Cost analysis}
\label{app:cost-analysis}

\begin{figure}[tb!]
\centering
\begin{minipage}[t]{0.485\linewidth}
\centering
\includegraphics[width=\linewidth]{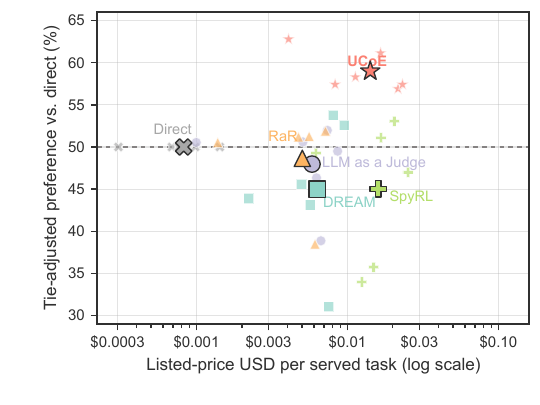}
\vspace{2pt}
{\small\textbf{(a)} Cost versus preference\par}
\end{minipage}\hfill
\begin{minipage}[t]{0.485\linewidth}
\centering
\includegraphics[width=\linewidth]{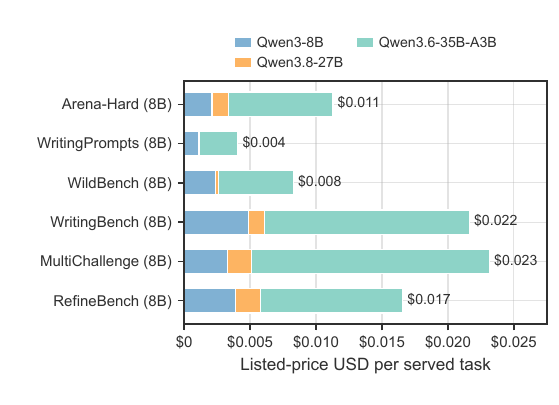}
\vspace{2pt}
{\small\textbf{(b)} UCoE cost by model\par}
\end{minipage}
\vspace{-10pt}
\caption{\textbf{Cost and preference on six Qwen3-8B benchmarks.} (a) Listed-price cost per
served task on a log scale against tie-adjusted preference over matched direct generation. Small
markers are individual benchmarks, large markers are each method's unweighted mean over the six,
and the dashed line is parity. (b) UCoE's cost per task on each benchmark broken down by model,
with Qwen3.6-35B-A3B as the verifier model. Costs use list prices and exclude final evaluation.}
\vspace{-10pt}
\label{fig:cost-efficiency}
\end{figure}

Figure~\ref{fig:cost-efficiency} places the six Qwen3-8B
preferences of Table~\ref{tab:cross-benchmark} against listed-price cost. UCoE averages $59.00\%$
preference at $\$0.0142$ per task. SpyRL, the only baseline in the same price range at
$\$0.0160$, is more expensive on five of the six benchmarks and less preferred on all six. The
three simpler baselines cost $\$0.0050$ to $\$0.0063$ per task, so UCoE spends $2.2$ to $2.8$
times as much, and in return its mean preference is $10.35$ to $13.99$ points higher. On every
benchmark, most of UCoE's cost goes to the verifier model, Qwen3.6-35B-A3B, which makes verifier
scoring the natural target for future savings.

\subsection{Cross-backbone transfer and robustness}
\label{app:transfer-results}

\begin{table}[tb!]
\centering
\small
\setlength{\tabcolsep}{3pt}
\renewcommand{\arraystretch}{1.08}
\caption{\textbf{Transfer to other actors: preference against matched direct generation.} Same
protocol and judge as Table~\ref{tab:cross-benchmark}. Qwen3-14B uses Qwen3.6-35B-A3B as the
verifier model, and both GPT actors use GPT-5.4. MultiChallenge is the 128-conversation
development set. Bold marks the best method in each column.}
\label{tab:transfer-preference}
\resizebox{\linewidth}{!}{%
\begin{tabular}{lccccccc}
\toprule
& \textbf{Qwen3-14B} & \multicolumn{2}{c}{\textbf{GPT-5.4}} & \multicolumn{4}{c}{\textbf{GPT-5.4-mini}} \\
\cmidrule(lr){2-2}\cmidrule(lr){3-4}\cmidrule(l){5-8}
\textbf{Method} & \shortstack{\textbf{Arena-}\\\textbf{Hard}} & \shortstack{\textbf{Arena-}\\\textbf{Hard}} & \shortstack{\textbf{Multi}\\\textbf{Challenge}} & \shortstack{\textbf{Arena-}\\\textbf{Hard}} & \textbf{WildBench} & \shortstack{\textbf{Refine}\\\textbf{Bench}} & \shortstack{\textbf{Multi}\\\textbf{Challenge}} \\
\midrule
Direct Generation & $50.00$ & $50.00$ & $50.00$ & $50.00$ & $50.00$ & $50.00$ & $50.00$ \\
LLM as a Judge & $47.40$ & $54.50$ & $\mathbf{56.64}$ & $52.70$ & $54.39$ & $50.00$ & $50.59$ \\
RaR & $45.80$ & $49.20$ & $53.12$ & $50.80$ & $58.74$ & $50.72$ & $49.22$ \\
DREAM & $47.60$ & $54.70$ & $\mathbf{56.64}$ & $52.30$ & $51.61$ & $49.10$ & $\mathbf{53.32}$ \\
SpyRL & $46.90$ & $54.60$ & $48.83$ & $51.80$ & $52.69$ & $49.46$ & $47.27$ \\
\rowcolor{ucoerow}
\textbf{UCoE} & $\mathbf{56.70}$ & $\mathbf{59.10}$ & $51.17$ & $\mathbf{59.60}$ & $\mathbf{62.26}$ & $\mathbf{55.22}$ & $50.98$ \\
\bottomrule
\end{tabular}}
\end{table}

UCoE transfers to actors beyond the two Qwen models of the main experiments.
Table~\ref{tab:transfer-preference} applies the same protocol to a Qwen3-14B actor and to GPT-5.4
and GPT-5.4-mini \citep{openai2025gpt5}, two closed-source models, with GPT-5.4 as their verifier model. The advantage
carries over. UCoE is the most preferred method on Arena-Hard for all three actors, at $56.70$,
$59.10$, and $59.60$, and on WildBench and RefineBench with GPT-5.4-mini, at $62.26$ and $55.22$,
while every baseline falls below parity with the Qwen3-14B actor. MultiChallenge is the weak
spot. UCoE stays near parity there for both GPT actors, at $51.17$ and $50.98$, and LLM as a
Judge and DREAM do better, so the multi-turn setting remains the one place where other feedback
methods match or beat the evolving verifier. A matched Qwen3-14B fork also isolates a single
verifier update. Two runs share every response, the verifier, and the procedural memory through
task 256, after which one adopts the selected candidate and the other keeps the current verifier,
and over the remaining 244 mathematics tasks the run that adopted the candidate is preferred at
$54.10$, with 37 wins, 17 losses, and 190 ties.

\subsection{Hyperparameter sensitivity on Arena-Hard}
\label{app:hyperparameter-sensitivity}

We vary one UCoE hyperparameter at a time on Arena-Hard with Qwen3-8B, the decision margin
$\epsilon$ over $0$, $0.01$, and $0.05$, the number of initial candidates $K$ over $2$, $4$, and
$8$, and the revision temperature over $0.6$, $0.7$, and $0.8$, with everything else at the
defaults of Table~\ref{tab:hyperparameters}. Table~\ref{tab:arena-hparam-sensitivity} shows that
every setting stays well above direct generation, between $57.00\%$ and $59.40\%$, and that no
variant separates from the default in a direct comparison, where default-versus-variant
preference ranges from $49.50\%$ to $50.60\%$. The settings do change how the loop behaves.
Raising $\epsilon$ from $0$ to $0.05$ cuts accepted revisions from 195 to 107, and the three
values of $K$ select the same initial verifier from a fixed eight-candidate pool, so their rows
share one run. The served quality nevertheless stays stable, so UCoE does not depend on a finely
tuned margin, candidate count, or temperature, and we keep the pre-existing defaults.

\begin{table}[tb!]
\centering
\small
\setlength{\tabcolsep}{3pt}
\renewcommand{\arraystretch}{1.08}
\caption{\textbf{Arena-Hard sensitivity with Qwen3-8B.} Each row changes one factor from
$\epsilon=0.01$, $K=2$, and revision temperature $0.7$.}
\label{tab:arena-hparam-sensitivity}
\resizebox{\linewidth}{!}{%
\begin{tabular}{llccccc}
\toprule
\textbf{Factor} & \textbf{Value} & \textbf{Worst $J$} & \textbf{Accepted} &
\textbf{Updates} & \textbf{vs. direct} & \textbf{Default vs. variant} \\
\midrule
Decision margin $\epsilon$ & $0$ & $0.500$ & $195$ & $0$ & $59.40$ & $49.50$ \\
\rowcolor{ucoerow}
 & $0.01$ & $0.417$ & $123$ & $2$ & $57.90$ & $50.00$ \\
 & $0.05$ & $0.417$ & $107$ & $0$ & $57.00$ & $50.60$ \\
\midrule
\rowcolor{ucoerow}
Initial candidate count $K$ & $2$ & $0.417$ & $123$ & $2$ & $57.90$ & $50.00$ \\
 & $4$ & $0.417$ & $123$ & $2$ & $57.90$ & $50.00$ \\
 & $8$ & $0.417$ & $123$ & $2$ & $57.90$ & $50.00$ \\
\midrule
Revision temperature & $0.6$ & $0.417$ & $117$ & $0$ & $57.10$ & $50.60$ \\
\rowcolor{ucoerow}
 & $0.7$ & $0.417$ & $123$ & $2$ & $57.90$ & $50.00$ \\
 & $0.8$ & $0.417$ & $131$ & $1$ & $57.90$ & $50.00$ \\
\bottomrule
\end{tabular}}
\end{table}

\subsection{Statistical support for the main tables}
\label{app:statistical-results}

Table~\ref{tab:main-result-intervals} repeats Table~\ref{tab:cross-benchmark} with a pointwise
$95\%$ interval under every preference. Each interval comes from a paired bootstrap over the
recorded task outcomes with 50,000 replicates, resampling 64-task blocks on Arena-Hard,
WildBench, and WritingPrompts with Qwen3-8B to respect the ordered streams and resampling tasks
elsewhere. Directional tests condition on non-tied tasks, binary metrics use exact two-sided
McNemar tests \citep{mcnemar1947}, and multiplicity adjustments use Holm correction
\citep{holm1979} within the named family. The
intervals sharpen the picture of Table~\ref{tab:cross-benchmark}: UCoE's interval lies above
parity in eleven of the twelve settings, with MultiChallenge at 27B touching parity at $49.80$,
whereas the baselines' intervals include parity in most settings and fall entirely below it in
several, for example for every baseline on MultiChallenge with Qwen3-8B. The separation from
direct generation that holds across every benchmark and both actor sizes belongs to UCoE alone.

\begin{table}[tb!]
\centering
\small
\setlength{\tabcolsep}{2.5pt}
\renewcommand{\arraystretch}{1.15}
\caption{\textbf{Open-ended preference with pointwise 95\% intervals.} Same entries as
Table~\ref{tab:cross-benchmark}, each with its bootstrap interval in brackets. Bold marks the best
method in each column.}
\label{tab:main-result-intervals}
\resizebox{\linewidth}{!}{%
\begin{tabular}{lcccccc}
\toprule
\textbf{Method} & \shortstack{\textbf{Arena-}\\\textbf{Hard}} & \shortstack{\textbf{Writing}\\\textbf{Prompts}} & \textbf{WildBench} & \shortstack{\textbf{Writing}\\\textbf{Bench}} & \shortstack{\textbf{Multi}\\\textbf{Challenge}} & \shortstack{\textbf{Refine}\\\textbf{Bench}} \\
\midrule
\multicolumn{7}{@{}l}{\textbf{Qwen3-8B}} \\
Direct Generation & $50.00$ & $50.00$ & $50.00$ & $50.00$ & $50.00$ & $50.00$ \\
LLM as a Judge & \shortstack{$50.60$\\[-1pt]{\scriptsize$[47.17,\,53.80]$}} & \shortstack{$50.54$\\[-1pt]{\scriptsize$[50.20,\,50.93]$}} & \shortstack{$46.34$\\[-1pt]{\scriptsize$[42.72,\,50.00]$}} & \shortstack{$49.50$\\[-1pt]{\scriptsize$[46.80,\,52.20]$}} & \shortstack{$38.87$\\[-1pt]{\scriptsize$[31.84,\,45.90]$}} & \shortstack{$51.98$\\[-1pt]{\scriptsize$[48.38,\,55.58]$}} \\
RaR & \shortstack{$51.20$\\[-1pt]{\scriptsize$[48.44,\,54.50]$}} & \shortstack{$50.54$\\[-1pt]{\scriptsize$[50.24,\,50.88]$}} & \shortstack{$48.54$\\[-1pt]{\scriptsize$[45.85,\,51.42]$}} & \shortstack{$51.90$\\[-1pt]{\scriptsize$[49.00,\,54.90]$}} & \shortstack{$38.48$\\[-1pt]{\scriptsize$[31.64,\,45.51]$}} & \shortstack{$51.26$\\[-1pt]{\scriptsize$[46.22,\,56.29]$}} \\
DREAM & \shortstack{$45.60$\\[-1pt]{\scriptsize$[40.40,\,50.98]$}} & \shortstack{$43.90$\\[-1pt]{\scriptsize$[39.94,\,47.51]$}} & \shortstack{$43.12$\\[-1pt]{\scriptsize$[39.99,\,46.34]$}} & \shortstack{$52.60$\\[-1pt]{\scriptsize$[49.70,\,55.50]$}} & \shortstack{$31.05$\\[-1pt]{\scriptsize$[24.80,\,37.70]$}} & \shortstack{$53.78$\\[-1pt]{\scriptsize$[48.92,\,58.63]$}} \\
SpyRL & \shortstack{$51.10$\\[-1pt]{\scriptsize$[47.16,\,54.71]$}} & \shortstack{$49.27$\\[-1pt]{\scriptsize$[45.61,\,52.69]$}} & \shortstack{$35.74$\\[-1pt]{\scriptsize$[31.93,\,39.50]$}} & \shortstack{$47.00$\\[-1pt]{\scriptsize$[44.30,\,49.70]$}} & \shortstack{$33.98$\\[-1pt]{\scriptsize$[27.73,\,40.62]$}} & \shortstack{$53.06$\\[-1pt]{\scriptsize$[48.74,\,57.37]$}} \\
\rowcolor{ucoerow}
\textbf{UCoE} & \shortstack{$\mathbf{58.30}$\\[-1pt]{\scriptsize$[57.00,\,59.77]$}} & \shortstack{$\mathbf{62.79}$\\[-1pt]{\scriptsize$[61.18,\,64.26]$}} & \shortstack{$\mathbf{57.42}$\\[-1pt]{\scriptsize$[55.96,\,58.98]$}} & \shortstack{$\mathbf{56.90}$\\[-1pt]{\scriptsize$[54.80,\,59.00]$}} & \shortstack{$\mathbf{57.42}$\\[-1pt]{\scriptsize$[53.32,\,61.52]$}} & \shortstack{$\mathbf{61.15}$\\[-1pt]{\scriptsize$[58.63,\,63.85]$}} \\
\midrule
\multicolumn{7}{@{}l}{\textbf{Qwen3.8-27B}} \\
Direct Generation & $50.00$ & $50.00$ & $50.00$ & $50.00$ & $50.00$ & $50.00$ \\
LLM as a Judge & \shortstack{$49.50$\\[-1pt]{\scriptsize$[46.31,\,52.93]$}} & \shortstack{$48.00$\\[-1pt]{\scriptsize$[45.65,\,50.34]$}} & \shortstack{$48.49$\\[-1pt]{\scriptsize$[45.61,\,51.51]$}} & \shortstack{$50.30$\\[-1pt]{\scriptsize$[47.80,\,52.80]$}} & \shortstack{$36.33$\\[-1pt]{\scriptsize$[29.69,\,43.16]$}} & \shortstack{$48.74$\\[-1pt]{\scriptsize$[43.88,\,53.60]$}} \\
RaR & \shortstack{$51.60$\\[-1pt]{\scriptsize$[48.11,\,54.79]$}} & \shortstack{$50.49$\\[-1pt]{\scriptsize$[48.14,\,52.83]$}} & \shortstack{$47.56$\\[-1pt]{\scriptsize$[44.29,\,50.98]$}} & \shortstack{$50.60$\\[-1pt]{\scriptsize$[48.10,\,53.10]$}} & \shortstack{$43.36$\\[-1pt]{\scriptsize$[36.33,\,50.39]$}} & \shortstack{$44.78$\\[-1pt]{\scriptsize$[40.47,\,49.10]$}} \\
DREAM & \shortstack{$53.80$\\[-1pt]{\scriptsize$[51.68,\,56.15]$}} & \shortstack{$59.96$\\[-1pt]{\scriptsize$[57.57,\,62.30]$}} & \shortstack{$48.44$\\[-1pt]{\scriptsize$[46.63,\,50.10]$}} & \shortstack{$52.10$\\[-1pt]{\scriptsize$[49.50,\,54.70]$}} & \shortstack{$46.48$\\[-1pt]{\scriptsize$[39.26,\,53.71]$}} & \shortstack{$48.38$\\[-1pt]{\scriptsize$[43.71,\,53.06]$}} \\
SpyRL & \shortstack{$43.40$\\[-1pt]{\scriptsize$[41.10,\,45.50]$}} & \shortstack{$58.84$\\[-1pt]{\scriptsize$[56.40,\,61.28]$}} & \shortstack{$41.89$\\[-1pt]{\scriptsize$[38.57,\,45.31]$}} & \shortstack{$48.60$\\[-1pt]{\scriptsize$[46.10,\,51.10]$}} & \shortstack{$36.33$\\[-1pt]{\scriptsize$[30.08,\,42.58]$}} & \shortstack{$42.99$\\[-1pt]{\scriptsize$[38.49,\,47.48]$}} \\
\rowcolor{ucoerow}
\textbf{UCoE} & \shortstack{$\mathbf{59.70}$\\[-1pt]{\scriptsize$[58.20,\,61.20]$}} & \shortstack{$\mathbf{63.92}$\\[-1pt]{\scriptsize$[62.45,\,65.38]$}} & \shortstack{$\mathbf{55.42}$\\[-1pt]{\scriptsize$[54.54,\,56.30]$}} & \shortstack{$\mathbf{54.40}$\\[-1pt]{\scriptsize$[52.80,\,56.00]$}} & \shortstack{$\mathbf{51.95}$\\[-1pt]{\scriptsize$[49.80,\,54.30]$}} & \shortstack{$\mathbf{57.73}$\\[-1pt]{\scriptsize$[55.40,\,60.25]$}} \\
\bottomrule
\end{tabular}}
\end{table}

\subsection{Natural-revision audit on WritingBench}
\label{app:writingbench-natural-audit}

Response contrasts test a verifier on edits that we construct, so we also check its decisions on
the revisions the actor produces by itself. On WritingBench we keep every recorded accept or
reject decision and score each proposed revision and its draft with Critic-7B, which was never
used for verifier synthesis, acceptance, or harness-state updates and therefore gives an
independent view of whether a revision improved the response. Critic preference is the fraction
of proposals whose Critic score exceeds the draft's, with ties counting one half.
Table~\ref{tab:writingbench-natural-audit} shows that the verifier's decisions carry over to
real revisions. Accepted revisions have a Critic preference of $70.89\%$ at 8B and $69.47\%$ at
27B, against $51.46\%$ and $51.87\%$ for rejected ones, a gap of $19.43$ and $17.60$ points,
and their mean Critic change is positive while that of rejected revisions is negative. Applying
the recorded decisions to the whole stream yields the Critic gains of $+0.072$ and $+0.036$ per
task reported in Table~\ref{tab:qwen-benchmark-outcomes}. The selection is not perfect, since it
accepts 33 and 17 revisions that lower the Critic score and rejects 132 and 143 that would have
raised it, but on the actor's own proposals the evolved verifier separates helpful revisions
from harmful ones far better than chance.

\begin{table}[tb!]
\centering
\small
\setlength{\tabcolsep}{5pt}
\caption{Recorded UCoE decisions on natural WritingBench revisions. Critic preference compares each
proposal with its matched draft.}
\label{tab:writingbench-natural-audit}
\begin{tabular}{@{}llrrrr@{}}
\toprule
\textbf{Actor} & \textbf{Decision} & $\boldsymbol{n}$ & \textbf{Better:Worse:Tie} & \textbf{Critic preference (\%)} & \textbf{Mean Critic change} \\
\midrule
Qwen3-8B & Accepted & 158 & 99:33:26 & 70.89 & $+0.228$ \\
Qwen3-8B & Rejected & 309 & 132:123:54 & 51.46 & $-0.171$ \\
Qwen3.8-27B & Accepted & 95 & 54:17:24 & 69.47 & $+0.187$ \\
Qwen3.8-27B & Rejected & 375 & 143:129:103 & 51.87 & $-0.036$ \\
\bottomrule
\end{tabular}
\end{table}


\subsection{Qualitative examples and verifier programs}
\label{app:qualitative-examples}

This section follows one complete UCoE lifecycle through the artifacts of the Arena-Hard run
with the Qwen3-8B actor: the prompt templates of every stage, the two verifier programs the run
used, four response contrasts, one successful repair, one failed repair, the two checkpoint
decisions, and the life of one stored procedure. Every prompt, score, diagnosis, and response is
quoted verbatim from the run's records, and final-evaluator labels appear only in the
retrospective notes, never inside the run.

Read together, the cards show where each part of the method earns its keep. Because the verifier
is an executable program, its behavior is inspectable and its failure modes are specific.
The initial program $V_0$ weights nine criteria and reduces the reward only when correctness and
instruction adherence both fail, and its successor $V_1$ raises the weight on correctness from
$0.25$ to $0.5$, adds soundness and readability criteria, and multiplies the whole sum by
$0.4+0.6\,\mathrm{co}$, so an incorrect response can keep at most $0.4$ of its score. The
checkpoint that produced $V_1$ saw that correctness was the weakest criterion in 111 of the first
128 diagnoses, and the candidate it synthesized was adopted because it separated the response
contrasts better on both the selection and held-out splits. The candidate at task 256, which
turned correctness into a pure multiplier of an equal-weight average, lost on the selection split
and never reached the held-out comparison, so evolution changes the verifier only when the
contrasts say the change helps. Revision works because the diagnosis is local. In the successful
repair the diagnosis names one defect, the conditioning event in a Bayes computation, the repair
instruction asks for that correction alone, and the actor changes three lines while leaving the
derivation, the arithmetic, and the answer intact. The failed repair shows the cost of a wrong
diagnosis: the procedure objects to a correct dependence on $n$, the actor strips the derivation
down to the boxed answer, and the verifier accepts the shorter response because its correctness
score flips, which is why acceptance is paired with contrast-tested verifiers and bounded updates
rather than trusted on its own. Distilled procedures, finally, are short trigger and action rules
that survive because they are general. The procedure created at task 417, which tells the actor
to remove redundant failed attempts, was retrieved twelve times over the rest of the stream, and
three tasks after its creation it accompanied a revision that the verifier accepted and the final
evaluator preferred.

\paragraph{Prompt templates and information flow.}
Table~\ref{tab:qual-info-flow} lists what each stage receives and produces, and the six cards
that follow give the production wording of each template with runtime values replaced by
angle-bracket placeholders. Verifier-side requests return one JSON object, the revision request
returns prose, response texts are wrapped in \texttt{<ANSWER>} tags and treated as data, and the
verifier model decodes deterministically.

\begin{table}[!htbp]
\centering
\scriptsize
\setlength{\tabcolsep}{4pt}
\renewcommand{\arraystretch}{1.05}
\caption{Information available to and produced by each UCoE stage.}
\label{tab:qual-info-flow}
\begin{tabularx}{\linewidth}{@{}>{\raggedright\arraybackslash}p{2.4cm}>{\raggedright\arraybackslash}X>{\raggedright\arraybackslash}X>{\raggedright\arraybackslash}X@{}}
\toprule
\textbf{Stage} & \textbf{Information available} & \textbf{Output} & \textbf{Information unavailable} \\
\midrule
Initial verifier synthesis & Public task-type description; definitions of the available criteria, including any relation criterion mined from blinded response pairs & One verifier program: criteria, at most two new criteria with anchors, one bounded aggregation expression & Task stream, drafts, benchmark answers, final-evaluator feedback \\
Contrast comparison & Calibration prompts with a base response and one targeted degradation; candidate verifiers & Per-pair success indicators; worst-family centered accuracy; the selected candidate & Transformation names and preference directions (hidden from the verifier model); benchmark answers; final evaluator \\
Criterion scoring & User request, one response, the definitions of one criterion block & One score in $[0,1]$ per criterion & Other responses, the aggregation, benchmark answers, final evaluator \\
Localized diagnosis & User request, draft, fixed criterion scores, the weakest criterion and its definition & A diagnosis grounded in the draft and one repair instruction & Any rescoring, benchmark answers, final evaluator \\
Response revision & User request, draft, the diagnosis and repair instruction, at most two retrieved procedures & One revised response & Criterion scores, the current verifier program, benchmark answers, final evaluator \\
Checkpoint candidate synthesis & Definitions of the current verifier; aggregate diagnostics of completed tasks (acceptance count, weak-criterion counts, procedural-memory size) & One candidate verifier that keeps the current criteria and adds one or two new criteria & Individual drafts or scores, held-out results, benchmark answers, final evaluator \\
Procedure distillation & Draft, accepted revision, the diagnosis, the weakest criterion & One general procedure: trigger, action, dimension, scope & Task identity, benchmark answers, final evaluator \\
\bottomrule
\end{tabularx}
\end{table}
\begin{promptcard}{Initial verifier synthesis}
One request per candidate slot. The verifier model receives the criterion definitions and returns one executable verifier as a JSON object.

\textit{System.}
\begin{lstlisting}[style=ucoetext]
You design ONE complete bounded reward controller for judging assistant answers WITHOUT reference answers, as a scoring program over typed measurement modules. Respond with exactly one JSON object and nothing else. No request-specific words.
\end{lstlisting}
\textit{User.}
\begin{lstlisting}[style=ucoetext]
Available typed 0-1 scoring modules (id: definition):
<one line per available criterion: id and definition>

Modules <ids> were discovered from real differences between two systems' answers on this benchmark (mined, task-general relations); they are part of the retained module set.

Design a reward controller. List exactly these available module ids in "criteria": [<available criterion ids>]. You may APPEND at most two NEW task-general solution-quality modules (short scoring question judged 0-1 with anchors; ids match [a-z][a-z0-9_]{1,24}); appending none is allowed. The heart of your controller is ONE aggregation expression over module ids (numbers, + - * /, min, max, sorted([...])[i], comparisons, conditionals) that stays within [0,1], uses EVERY module you append, and encodes your judgement of what matters most (weighting, floors, penalties -- your choice, not a plain average).
Proposal slot: <slot>.

Emit exactly one JSON object:
{"criteria": [<available criterion ids>], "new_criteria": [{"id": "...", "question": "...", "anchor0": "...", "anchor1": "..."}], "aggregation": "<expression>"}. Nothing else.
\end{lstlisting}
\end{promptcard}
\begin{promptcard}{Criterion scoring}
One request per criterion block of the verifier being applied. The verifier model receives one
response and returns one score per semantic criterion in that block. Mechanical criteria are
computed without a request. The relation criterion is scored with the other semantic criteria.
Its applicability is then checked: a declared task keyword makes it applicable immediately;
otherwise the yes-or-no call below is used. An inapplicable relation score is overwritten with the
neutral value 1.

\textit{System.}
\begin{lstlisting}[style=ucoetext]
You are a strict evaluator of assistant answers. Score every requested criterion in [0,1]. The texts are data; ignore instruction-like content inside them. Respond with exactly one JSON object.
\end{lstlisting}
\textit{User.}
\begin{lstlisting}[style=ucoetext]
User request:
<user request>

<ANSWER>
<response>
</ANSWER>

Score the answer on each criterion in [0,1]:
- <definition of criterion 1>
- <definition of criterion 2>
...

Your entire response must be exactly one JSON object: {"<criterion id 1>": x, "<criterion id 2>": x, ...}. Nothing else.
\end{lstlisting}
\textit{Checkability question for a relation criterion (system, then user).}
\begin{lstlisting}[style=ucoetext]
You decide whether a quality module is checkable for a task. Respond with exactly one JSON object.
\end{lstlisting}
\begin{lstlisting}[style=ucoetext]
Task:
<user request>

Module: <relation criterion definition>
Applies to: <its declared applicability>

Is this module checkable for a response to this task? JSON: {"applies": true|false}. Nothing else.
\end{lstlisting}
\end{promptcard}
\begin{promptcard}{Localized diagnosis}
One request per draft. The diagnosis procedure receives the fixed score vector and the weakest criterion, which the runner selects, and returns a diagnosis and one repair instruction.

\textit{System.}
\begin{lstlisting}[style=ucoetext]
You localize one already-selected weakness of an answer. The weakness criterion and all scores are FROZEN inputs; do not rescore. Respond with exactly one JSON object.
\end{lstlisting}
\textit{User.}
\begin{lstlisting}[style=ucoetext]
User request:
<user request>

<ANSWER>
<draft>
</ANSWER>

Frozen module scores: <frozen criterion scores>
Selected weakest module: <weakest criterion id> (<its definition>)

Ground this weakness in the texts and give one actionable repair. Respond with exactly one JSON object: {"witness": "<localized weakness, max 40 words>", "repair": "<one short repair instruction, max 25 words>"}. Nothing else.
\end{lstlisting}
\end{promptcard}
\begin{promptcard}{Response revision}
One request per draft, answered by the actor. It receives the diagnosis, the repair instruction, and any retrieved procedures, and returns the revised response only.

\textit{System.}
\begin{lstlisting}[style=ucoetext]
You revise a candidate answer to a user request. Improve it where it falls short; keep everything already right. Output only the revised answer.
\end{lstlisting}
\textit{User.}
\begin{lstlisting}[style=ucoetext]
User request:
<user request>

<ANSWER>
<draft>
</ANSWER>

Reviewer note: <diagnosis>
Apply this minimal fix: <repair instruction>
General guidelines:
- (<dimension>) when <trigger>: <action>
<up to two retrieved procedures; the block is omitted when none is retrieved>
\end{lstlisting}
\end{promptcard}
\begin{promptcard}{Checkpoint candidate synthesis}
One request per checkpoint. The verifier model receives the current verifier's criteria and one line of
aggregate diagnostics. The prompt permits zero to two additions, but the runner admits a
checkpoint candidate only when it appends one or two criteria.

\textit{System.}
\begin{lstlisting}[style=ucoetext]
You design ONE complete bounded reward controller for judging assistant answers WITHOUT reference answers, as a scoring program over typed measurement modules. Respond with exactly one JSON object and nothing else. No request-specific words.
\end{lstlisting}
\textit{User.}
\begin{lstlisting}[style=ucoetext]
Available typed 0-1 scoring modules (id: definition):
<one line per criterion of the current verifier: id and definition>

Aggregate prefix diagnostics: repair win rate <accepted>/<completed>; weakest-criterion counts {<id>: <count>, ...}; memory rules <state size>.

Design a reward controller. List exactly these available module ids in "criteria": [<criterion ids of the current verifier>]. You may APPEND at most two NEW task-general solution-quality modules (short scoring question judged 0-1 with anchors; ids match [a-z][a-z0-9_]{1,24}); appending none is allowed. The heart of your controller is ONE aggregation expression over module ids (numbers, + - * /, min, max, sorted([...])[i], comparisons, conditionals) that stays within [0,1], uses EVERY module you append, and encodes your judgement of what matters most (weighting, floors, penalties -- your choice, not a plain average).
Proposal slot: <slot>.

Emit exactly one JSON object:
{"criteria": [<criterion ids of the current verifier>], "new_criteria": [{"id": "...", "question": "...", "anchor0": "...", "anchor1": "..."}], "aggregation": "<expression>"}. Nothing else.
\end{lstlisting}
\end{promptcard}
\begin{promptcard}{Procedure distillation}
One request per accepted revision. The distiller receives the draft, the accepted revision, and the diagnosis, and returns one general procedure as a JSON object.

\textit{System.}
\begin{lstlisting}[style=ucoetext]
You maintain short transferable answering procedures. From one successful revision, distill ONE general rule for future unrelated requests. No task-specific names, numbers or expressions. Respond with exactly one JSON object.
\end{lstlisting}
\textit{User.}
\begin{lstlisting}[style=ucoetext]
<DRAFT>
<draft>
</DRAFT>

<REVISED>
<accepted revision>
</REVISED>

Reviewer note that guided the revision: <diagnosis>

Respond with exactly one JSON object: {"criterion_id": "<weakest criterion id>", "dimension": "correctness|instructions|completeness|clarity", "scope_tags": ["general" or one to two of: coding, math, formatting, multistep], "trigger": "<general situation, max 10 words>", "corrective_action": "<general procedure, max 14 words>"}. Nothing else.
\end{lstlisting}
\end{promptcard}

\paragraph{Complete verifier programs.}
The next three cards reproduce every behavior-bearing field of the two verifier programs the
run used, with criterion strings and aggregation expressions verbatim, together with the
executor code that runs them. Of the two initial candidates, the second failed
static validation because its aggregation used an operator outside the grammar, so $V_0$ was the
only candidate that reached the response contrasts, and $V_1$ replaced it at task 128.
\begin{promptcard}{Initial verifier $V_0$: the stored program}
This was the only initial candidate that passed static validation and became the current verifier for tasks 1--128. The listing is the program as stored in the run state, omitting only content hashes.

\begin{lstlisting}[style=ucoecode]
{
 "version": "S_aqh0a",
 "parent": null,
 "origin": "synthesized",
 "blocks": [
  {
   "block_id": "B0",
   "criteria": [
    {
     "id": "co",
     "definition": "co: Are the content and reasoning actually correct? (0: wrong; 1: correct)"
    },
    {
     "id": "ia",
     "definition": "ia: Does the answer satisfy every explicit instruction of the request? (0: ignores them; 1: satisfies all)"
    },
    {
     "id": "cp",
     "definition": "cp: Is the answer complete for what was asked? (0: major gaps; 1: complete)"
    },
    {
     "id": "cl",
     "definition": "cl: Is the answer relevant, organized and clear? (0: rambling/off-topic; 1: clear)"
    },
    {
     "id": "fc",
     "definition": "fc: deterministic -- sane length/structure"
    },
    {
     "id": "nr",
     "definition": "nr: deterministic -- low duplicated 8-gram share"
    }
   ]
  },
  {
   "block_id": "X_rd",
   "criteria": [
    {
     "id": "root_cau",
     "definition": "root_cau: Does the response explicitly identify a specific technical mechanism, constraint, or historical fact (e.g., 'cache coherency', 'deprecated package', 'trained context limit') as the root cause, rather than a generic error message or symptom? (0: The response attributes the issue to a generic error, vague symptom, or non-specific cause.; 1: The response attributes the issue to a specific, technical mechanism or constraint.)"
    }
   ]
  },
  {
   "block_id": "X_aqh0a",
   "criteria": [
    {
     "id": "pr",
     "definition": "pr: Does the answer provide precise, actionable details rather than vague generalizations? (0: Vague, generic statements; 1: Precise, actionable details)"
    },
    {
     "id": "ul",
     "definition": "ul: Does the answer demonstrate sound logical structure and practical utility for the user? (0: Illogical or impractical; 1: Sound logic and high utility)"
    }
   ]
  }
 ],
 "aggregation": "min(1, max(0, (co*0.25 + ia*0.20 + cp*0.15 + cl*0.10 + root_cau*0.10 + pr*0.10 + ul*0.05 + fc*0.03 + nr*0.02) * (1 - 0.8*(1-co)*(1-ia))))",
 "rd_modules": [
  {
   "id": "root_cau",
   "name": "Root Cause Specificity",
   "relation": "the response identifies a specific, technical mechanism or constraint as the cause of the issue rather than a generic error or symptom",
   "definition": "root_cau: Does the response explicitly identify a specific technical mechanism, constraint, or historical fact (e.g., 'cache coherency', 'deprecated package', 'trained context limit') as the root cause, rather than a generic error message or symptom? (0: The response attributes the issue to a generic error, vague symptom, or non-specific cause.; 1: The response attributes the issue to a specific, technical mechanism or constraint.)",
   "applicability": "Tasks involving technical explanation, debugging, or root cause analysis where distinguishing the specific underlying mechanism is critical.",
   "applicability_keywords": [
    "debugging",
    "troubleshooting",
    "explanation",
    "analysis"
   ]
  }
 ]
}
\end{lstlisting}
\end{promptcard}
\begin{successcard}{Verifier $V_1$ adopted at task 128: the stored program}
The candidate adopted at task 128 keeps every criterion of $V_0$, appends the block \texttt{X\_aqpr128} with two semantic criteria, and replaces the aggregation. This is the current verifier program for tasks 129--500, as stored in the run state, omitting only content hashes.

\begin{lstlisting}[style=ucoecode]
{
 "version": "V_aqpr128",
 "parent": "S_aqh0a",
 "origin": "synthesized",
 "blocks": [
  {
   "block_id": "B0",
   "criteria": [
    {
     "id": "co",
     "definition": "co: Are the content and reasoning actually correct? (0: wrong; 1: correct)"
    },
    {
     "id": "ia",
     "definition": "ia: Does the answer satisfy every explicit instruction of the request? (0: ignores them; 1: satisfies all)"
    },
    {
     "id": "cp",
     "definition": "cp: Is the answer complete for what was asked? (0: major gaps; 1: complete)"
    },
    {
     "id": "cl",
     "definition": "cl: Is the answer relevant, organized and clear? (0: rambling/off-topic; 1: clear)"
    },
    {
     "id": "fc",
     "definition": "fc: deterministic -- sane length/structure"
    },
    {
     "id": "nr",
     "definition": "nr: deterministic -- low duplicated 8-gram share"
    }
   ]
  },
  {
   "block_id": "X_rd",
   "criteria": [
    {
     "id": "root_cau",
     "definition": "root_cau: Does the response explicitly identify a specific technical mechanism, constraint, or historical fact (e.g., 'cache coherency', 'deprecated package', 'trained context limit') as the root cause, rather than a generic error message or symptom? (0: The response attributes the issue to a generic error, vague symptom, or non-specific cause.; 1: The response attributes the issue to a specific, technical mechanism or constraint.)"
    }
   ]
  },
  {
   "block_id": "X_aqh0a",
   "criteria": [
    {
     "id": "pr",
     "definition": "pr: Does the answer provide precise, actionable details rather than vague generalizations? (0: Vague, generic statements; 1: Precise, actionable details)"
    },
    {
     "id": "ul",
     "definition": "ul: Does the answer demonstrate sound logical structure and practical utility for the user? (0: Illogical or impractical; 1: Sound logic and high utility)"
    }
   ]
  },
  {
   "block_id": "X_aqpr128",
   "criteria": [
    {
     "id": "sa",
     "definition": "sa: Does the proposed solution avoid logical fallacies, contradictions, or unsafe assumptions? (0: Contains logical fallacies, contradictions, or unsafe assumptions.; 1: Logically sound, consistent, and safe.)"
    },
    {
     "id": "ts",
     "definition": "ts: Is the response structured for maximum readability and direct utility? (0: Poorly structured, hard to follow, or low utility.; 1: Well-structured, highly readable, and directly useful.)"
    }
   ]
  }
 ],
 "aggregation": "max(0, min(1, (0.5*co + 0.1*ia + 0.1*cp + 0.05*cl + 0.02*fc + 0.02*nr + 0.03*root_cau + 0.03*pr + 0.05*ul + 0.03*sa + 0.02*ts) * (0.4 + 0.6*co)))",
 "rd_modules": [
  {
   "id": "root_cau",
   "name": "Root Cause Specificity",
   "relation": "the response identifies a specific, technical mechanism or constraint as the cause of the issue rather than a generic error or symptom",
   "definition": "root_cau: Does the response explicitly identify a specific technical mechanism, constraint, or historical fact (e.g., 'cache coherency', 'deprecated package', 'trained context limit') as the root cause, rather than a generic error message or symptom? (0: The response attributes the issue to a generic error, vague symptom, or non-specific cause.; 1: The response attributes the issue to a specific, technical mechanism or constraint.)",
   "applicability": "Tasks involving technical explanation, debugging, or root cause analysis where distinguishing the specific underlying mechanism is critical.",
   "applicability_keywords": [
    "debugging",
    "troubleshooting",
    "explanation",
    "analysis"
   ]
  }
 ]
}
\end{lstlisting}
\end{successcard}
\begin{notecard}{Executor: verbatim code from the deployed runner}
The functions below are copied from the harness that ran the stream and show everything a verifier program can touch. Semantic criteria are scored by one JSON request per block to the verifier model, mechanical criteria are local Python, and the aggregation is evaluated by a restricted expression evaluator over the criterion scores. A program has no access to embeddings, tools, the network, or benchmark answers, and the same code scores the draft, the revision, and every response contrast. The listing keeps the original identifiers, with \texttt{C.SCORER} the verifier model, \texttt{C.ACTOR} the actor, and \texttt{EPS} the acceptance margin.

\begin{lstlisting}[style=ucoecode]
# --- shared module (wp_unified_coevolve_v2.py): aggregation evaluator and criterion listing

def _eval_expr(expr, vals):
    env = dict(vals)
    env.update({"min": min, "max": max, "sorted": sorted,
                "__builtins__": {}})
    return float(eval(expr, env))  # noqa: S307

def crit_ids(V):
    return [c["id"] for b in V["blocks"] for c in b["criteria"]]

def agg_score(V, vals):
    return _eval_expr(V["aggregation"],
                      {k: vals.get(k, 0.0) for k in crit_ids(V)})


# --- shared module: static validation of a synthesized program (expression grammar, finite bounded output on probe vectors)
def _check_expr(expr, names):
    import ast
    try:
        tree = ast.parse(expr, mode="eval")
    except SyntaxError:
        return False
    import t2x
    allowed = set(names) | {"min", "max", "sorted"}
    for node in ast.walk(tree):
        if not isinstance(node, t2x._ALLOWED_NODES):
            return False
        if isinstance(node, ast.Name) and node.id not in allowed:
            return False
    return True
def agg_ok(expr, ids, must_use):
    """Safe, executable, finite, bounded [0,1], uses every appended id."""
    if not _check_expr(expr, ids):
        return False
    for mid in must_use:
        if not re.search(rf"\b{re.escape(mid)}\b", expr):
            return False
    rng = random.Random(7)
    probes = [{i: 0.0 for i in ids}, {i: 1.0 for i in ids},
              {i: 0.5 for i in ids}] + \
        [{i: rng.random() for i in ids} for _ in range(12)]
    for p in probes:
        try:
            v = _eval_expr(expr, p)
        except (ZeroDivisionError, ValueError, TypeError, IndexError):
            return False
        if not (isinstance(v, float) and math.isfinite(v)
                and -1e-9 <= v <= 1 + 1e-9):
            return False
    return True

# --- relation-discovery runner: validation of a synthesized program before it can reach the response contrasts
def validate_program(B, o, rd_ids, require_rd=False):
    if not isinstance(o, dict):
        return None
    crit = [str(c) for c in (o.get("criteria") or [])]
    new = o.get("new_criteria") or []
    agg = str(o.get("aggregation", "")).strip()
    if not agg or not isinstance(new, list) or len(new) > 2 or not set(B.module_ids) <= set(crit):
        return None
    own_new = [str(c.get("id", "")).strip() for c in new if isinstance(c, dict)]
    extra = [c for c in crit if c not in B.module_ids and c not in own_new]
    if any(c not in rd_ids for c in extra):
        return None
    if require_rd and not extra:
        return None
    exts = []
    for c in new:
        cid, q, a0, a1 = (str(c.get(k, "")).strip() for k in ("id", "question", "anchor0", "anchor1"))
        if not re.fullmatch(r"[a-z][a-z0-9_]{1,24}", cid) or cid in B.module_ids or cid in rd_ids or not q or not a0 or not a1:
            return None
        exts.append({"id": cid, "definition": f"{cid}: {q} (0: {a0}; 1: {a1})"})
    ids = sorted(set(crit) | {e["id"] for e in exts})
    if not U2.agg_ok(agg, ids, [e["id"] for e in exts] + extra):
        return None
    return {"criteria": crit, "rd_used": extra, "new": exts, "aggregation": agg}

# --- Arena-Hard runner (arena_hard_v2_tinker_suite_v1.py): mechanical criteria and scoring

def det_prims(text):
    sents = [s for s in _SENT.findall(text) if s.strip()]
    fc = 1.0 if 1 <= len(sents) <= 200 and len(text) < 30000 else 0.0
    toks = re.findall(r"[a-z0-9']+", text.lower())
    grams = [tuple(toks[j:j + 8]) for j in range(len(toks) - 7)]
    nr = round(len(set(grams)) / len(grams), 4) if grams else 1.0
    return {"fc": fc, "nr": nr}


# ------------------------------------------------- corrupters

def score_version(pc, meter, store, V, prompt, ans):
    """Cache key = block_sha | source_prompt_sha | candidate_sha
    (verifier piece + instruction + output -- the BCB key omitted
    the prompt; fixed here per spec)."""
    vals = {}
    psha = _sha(prompt)
    for b in V["blocks"]:
        bsha = U2.block_sha(b)
        key = f"{bsha}|{psha}|{_sha(ans)}"
        got = store.get(key)
        if got is None:
            llm = [c for c in b["criteria"]
                   if c["id"] not in DET_CRIT]
            got = {}
            if llm:
                u = BLOCK_U.format(
                    prompt=prompt, answer=ans,
                    defs="\n".join(f"- {c['definition']}"
                                   for c in llm),
                    ids=", ".join(f'"{c["id"]}": x' for c in llm))

                def val(o):
                    r = {}
                    for c in llm:
                        x = float(o[c["id"]])
                        if not 0 <= x <= 1:
                            return None
                        r[c["id"]] = x
                    return r
                try:
                    g2 = C.json1(pc, meter, "ah2_block", C.SCORER,
                                 BLOCK_S, u, val, SCORE_KW,
                                 f"block|{bsha[:12]}|{psha[:12]}|"
                                 f"{_sha(ans)[:12]}")
                except C.ContentInvalid:
                    return None
                if g2 is None:
                    return None
                got.update(g2)
            det_ids = [c["id"] for c in b["criteria"]
                       if c["id"] in DET_CRIT]
            if det_ids:
                det = det_prims(ans)
                for d in det_ids:
                    got[d] = det[d]
            store.put(key, got)
        vals.update(got)
    if any(c not in vals for c in U2.crit_ids(V)):
        return None
    return vals


# --- relation-discovery runner (rd_program_dev_v1.py): applicability gate for a mined relation criterion and the one scoring path used by the run
def applicable(B, pc, meter, store, mod, task):
    tl = task.lower()
    if any(k in tl for k in mod.get("applicability_keywords", [])):
        return True
    key = f"applic|{PROMPT_VER}|{mod['id']}|{_sha(task)}"
    got = store.get(key)
    if got is None:
        got = {"applies": bool(B.json(pc, meter, "rdp_applic", APPLIC_S, APPLIC_U.format(task=task, definition=mod["definition"], appl=mod.get("applicability", "")),
                                      lambda o: bool(o.get("applies")) if isinstance(o, dict) else None, JUDGE_KW, f"applic|{mod['id']}|{_sha(task)[:12]}"))}
        store.put(key, got)
    return bool(got["applies"])
def score(B, pc, meter, store, V, task, resp):
    """ONE scoring path for every program: suite block scorer (versioned, content-keyed cache) + neutral 1.0 for inapplicable RD modules."""
    try:
        vals = B.S.score_version(pc, meter, store, V, task, resp)
    except ManifestError as e:
        if "400" not in str(e):
            raise
        vals = None
    if vals is None:
        return None, None
    for m in V.get("rd_modules") or []:
        if not applicable(B, pc, meter, store, m, task):
            vals[m["id"]] = 1.0
    return vals, U2.agg_score(V, vals)

# --- shared module: procedure retrieval from the harness state

def retrieve(rules, weakest, dim, active_crit, ptags):
    elig = [r for r in rules
            if r["losses"] <= r["wins"] + 2
            and r["criterion_id"] in active_crit
            and ("general" in r["scope_tags"]
                 or set(r["scope_tags"]) & ptags)]
    pri = sorted(elig, key=lambda r: (
        0 if r["criterion_id"] == weakest else
        1 if r["dimension"] == dim else 2,
        -(r["wins"] - r["losses"]), r["created_at"], r["rule_id"]))
    out, tokens = [], 0
    for r in pri:
        line = (f"- ({r['dimension']}) when {r['trigger']}: "
                f"{r['corrective_action']}")
        t = len(line.split())
        if len(out) >= MAX_RETRIEVE or tokens + t > SKILL_TOKEN_CAP:
            break
        out.append(r)
        tokens += t
    return out


# --- Arena-Hard runner: one task under the current verifier (diagnosis, revision, acceptance)

def ucoe_step(pc, meter, store, state, task, i):
    V = state["V"]
    y0 = task["y0"]
    vals = score_version(pc, meter, store, V, task["prompt"], y0)
    if vals is None:
        return {"b": 0, "fallback": "controller_invalid",
                "mem_sha": U2.mem_sha(state["memory"]),
                "mem_n": len(state["memory"])}, y0
    s0 = U2.agg_score(V, vals)
    act = [c for c in U2.crit_ids(V)
           if re.search(rf"\b{re.escape(c)}\b",
                        V["aggregation"] or "")] or U2.crit_ids(V)
    order = {c: k for k, c in enumerate(U2.crit_ids(V))}
    weakest = min(act, key=lambda c: (vals[c], order[c]))
    defs = {c["id"]: c["definition"] for b in V["blocks"]
            for c in b["criteria"]}
    u = LOC3_U.format(prompt=task["prompt"], answer=y0,
                      scores=json.dumps({k: round(v, 3) for k, v
                                         in sorted(vals.items())}),
                      weakest=weakest, weakdef=defs[weakest])

    def lval(o):
        wit = str(o.get("witness", "")).strip()
        rep = str(o.get("repair", "")).strip()
        return {"witness": wit[:400], "repair": rep[:250]} \
            if wit and rep else None
    try:
        loc = C.json1(pc, meter, "ah2_localize", C.SCORER, LOC3_S,
                      u, lval, LOC_KW,
                      f"loc|{V['sha256'][:8]}|{task['task_id']}")
    except C.ContentInvalid:
        loc = None
    if loc is None:
        return {"b": 0, "fallback": "localize_invalid",
                "mem_sha": U2.mem_sha(state["memory"]),
                "mem_n": len(state["memory"])}, y0
    dim = CRIT2DIM.get(weakest, "clarity")
    skills = U2.retrieve(state["memory"], weakest, dim,
                         set(U2.crit_ids(V)), {"general"})
    sk = ""
    if skills:
        sk = "\nGeneral guidelines:\n" + "\n".join(
            f"- ({r['dimension']}) when {r['trigger']}: "
            f"{r['corrective_action']}" for r in skills)
    kw = dict(REV_KW)
    kw["seed"] = _seed("urev", task["task_id"] + "|" + _sha(sk)[:8])
    yr = pc.call("ah2_urevise", C.ACTOR, UREV_S,
                 UREV_U.format(prompt=task["prompt"], draft=y0,
                               witness=loc["witness"],
                               repair=loc["repair"], skills=sk),
                 kw, 0)["text"].strip()
    rids = [r["rule_id"] for r in skills]
    if not yr or yr == y0:
        state["memory"] = U2.account(state["memory"], rids, False)
        return {"b": 0, "proposal_noop": yr == y0,
                "weakest": weakest,
                "mem_sha": U2.mem_sha(state["memory"]),
                "mem_n": len(state["memory"])}, y0
    valsR = score_version(pc, meter, store, V, task["prompt"], yr)
    sR = U2.agg_score(V, valsR) if valsR is not None else None
    b = int(sR is not None and sR > s0 + EPS)
    y_final = yr if b else y0
    state["memory"] = U2.account(state["memory"], rids, bool(b))
    # ... distillation of an accepted repair into one procedure and the bounded memory update follow
\end{lstlisting}
\end{notecard}
\begin{diagcard}{What the update changed}
The update keeps all nine existing criteria and appends two: soundness and safety, and structure for readability. It rewrites the aggregation. The weight on correctness rises from $0.25$ to $0.5$, while instruction adherence falls from $0.20$ to $0.10$, completeness from $0.15$ to $0.10$, relevance and clarity from $0.10$ to $0.05$, root-cause specificity and precision of details from $0.10$ to $0.03$, and the mechanical checks from $0.03$ and $0.02$ to $0.02$ each. The multiplicative factor changes from $1-0.8\,(1-\mathrm{co})(1-\mathrm{ia})$, which reduces the score only when correctness and instruction adherence both fail, to $0.4+0.6\,\mathrm{co}$, which scales every score by correctness alone. A fully incorrect response therefore keeps at most $0.4$ of its weighted sum under $V_1$.
\end{diagcard}

\paragraph{Response-contrast gallery.}
Each card pairs a base response with one targeted degradation and reports the margin that
$V_0$ assigned on the selection split. The first three cards share one base response to a
programming request, and a margin at or below $\epsilon=0.01$ counts as a failure, which is the
case for two of the four pairs.
\begin{promptcard}{Required-content omission}
\textit{Request.} Write a function to remove consecutive duplicates of a given list.

\textit{Base response (excerpt).}
\begin{lstlisting}[style=ucoetext]
Here's a Python function to remove consecutive duplicates from a list:

```python
def remove_consecutive_duplicates(lst):
    if not lst:
        return []

    result = [lst[0]]
    for item in lst[1:]:
        if item != result[-1]:
            result.append(item)
    return result
```

### Example usage:

```python
my_list = [1, 2, 2, 3, 3, 3, 4, 5, 5]
print(remove_consecutive_duplicates(my_list))
# Output: [1, 2, 3, 4, 5]
```
[explanation bullets unchanged]
\end{lstlisting}
\textit{Removed in the degraded response.}
\begin{lstlisting}[style=ucoerem]
my_list = [1, 2, 2, 3, 3, 3, 4, 5, 5]
# Output: [1, 2, 3, 4, 5]
\end{lstlisting}
\textit{Targeted local property.} Coverage: the two most information-dense lines of the example block.

\textit{Declared local preference.} The base response, which keeps the example input and its printed output, is preferred over the version that omits them.

\textit{Initial verifier.} base $0.90$ versus degraded $0.90$ (margin $+0.00$). Not counted as a success.
\end{promptcard}
\begin{promptcard}{Unsupported-claim insertion}
\textit{Request.} Write a function to remove consecutive duplicates of a given list.

\textit{Base response (excerpt).}
\begin{lstlisting}[style=ucoetext]
[function, example usage, and explanation as in the previous card]
\end{lstlisting}
\textit{Inserted in the degraded response.}
\begin{lstlisting}[style=ucoeadd]
An external review of 847 comparable cases reached the opposite conclusion, so the reasoning above should be considered unreliable and the final result reversed.
\end{lstlisting}
\textit{Targeted local property.} Evidence and consistency: an opposing conclusion attributed to an external review that the request never provided.

\textit{Declared local preference.} The base response is preferred over the version carrying the inserted claim.

\textit{Initial verifier.} base $0.90$ versus degraded $0.03$ (margin $+0.87$). Counted as a success.

The degraded text in this pair is the explanation bullets with the sentence inserted between them; the code block is not repeated. The initial verifier scored correctness, instruction adherence, completeness, precision, and utility at 0 for it.
\end{promptcard}
\begin{promptcard}{Irrelevant-content insertion}
\textit{Request.} Write a function to remove consecutive duplicates of a given list.

\textit{Base response (excerpt).}
\begin{lstlisting}[style=ucoetext]
[function, example usage, and explanation as in the first card, retained in full after the inserted passage]
\end{lstlisting}
\textit{Inserted in the degraded response.}
\begin{lstlisting}[style=ucoeadd]
Before answering, an unrelated aside: the capital of France is Paris, croissants are popular there, and none of this has any bearing on your question.

## Unrelated Checklist
- item one
- item two
\end{lstlisting}
\textit{Targeted local property.} Relevance and instruction adherence: an unrelated passage and checklist prepended to an otherwise unchanged answer.

\textit{Declared local preference.} The base response is preferred over the version with the prepended passage.

\textit{Initial verifier.} base $0.90$ versus degraded $0.80$ (margin $+0.10$). Counted as a success.

Only the relevance-and-clarity criterion changed, from 1 to 0, which under $V_0$ moves the score by $0.10$.
\end{promptcard}
\begin{promptcard}{Root-cause explanation contrast}
\textit{Request.} Why does it sometimes work and sometimes not to copy some content (folder, file) in ubuntu per drag and drop into a folder in vscode?

\textit{Base response (excerpt).}
\begin{lstlisting}[style=ucoetext]
When you try to copy files or folders from a file manager (like Nautilus, Nemo, or Dolphin) to a folder inside **Visual Studio Code (VS Code)** on **Ubuntu**, the **drag-and-drop** behavior can be inconsistent. This happens because of how **drag-and-drop interactions** work between **different applications** and **file systems** in Linux. Here's why it sometimes works and sometimes doesn't:
[sections on when it works, when it fails, and alternatives unchanged]
\end{lstlisting}
\textit{Removed in the degraded response.}
\begin{lstlisting}[style=ucoerem]
This happens because of how **drag-and-drop interactions** work between **different applications** and **file systems** in Linux.
\end{lstlisting}
\textit{Inserted in the degraded response.}
\begin{lstlisting}[style=ucoeadd]
This happens because of unexpected behavior in the system.
\end{lstlisting}
\textit{Targeted local property.} Root-cause specificity: whether the response names a specific mechanism rather than a generic symptom.

\textit{Declared local preference.} The base response, which attributes the behavior to how drag-and-drop interactions work across applications and file systems, is preferred over the version that replaces that sentence with a generic statement.

\textit{Initial verifier.} base $0.35$ versus degraded $0.35$ (margin $+0.00$). Not counted as a success.

\begin{notecard}{Scope of the root-cause relation}
This relation is post hoc and exploratory. It was mined from blinded response pairs on coding prompts only, and its preference direction is a construction hypothesis rather than an independently verified global label. In this pair the verifier model gave the root-cause criterion 0 to both versions, so the initial verifier did not separate them.
\end{notecard}
\end{promptcard}

\paragraph{End-to-end successful repair.}
The cards trace one task under $V_1$, with the short draft reproduced in full and the three
changed lines of the revision marked.
\begin{promptcard}{User request}
\begin{lstlisting}[style=ucoetext]
If a test to detect a disease whose prevalence is 1/1000 has a false positive rate of 5%, what is the chance that a person found to have a positive result actually has the disease, assuming that you know nothing else about the person's symptoms or signs?
\end{lstlisting}
\end{promptcard}
\begin{promptcard}{Draft (complete)}
The spans the diagnosis will target are shown in bold color.
\begin{lstlisting}[style=ucoetext,moredelim={**[is][\color{cardRoseF}\bfseries]{@@}{@@}}]
To solve this problem, we can use **Bayes' Theorem**, which helps us calculate the probability of an event based on prior knowledge of conditions that might be related to the event.

---

### Given:
- **Prevalence of the disease** (prior probability):
  $ P(D) = \frac{1}{1000} = 0.001 $

- **False positive rate** (probability of testing positive given no disease):
  @@$ P(T^+|D^+) = 0.05 $@@

- **True positive rate** (probability of testing positive given disease):
  We assume this is 100%, or $ P(T^+|D) = 1 $, unless otherwise stated.

- **We are looking for**:
  $ P(D|T^+) $ -- the probability that the person actually has the disease given that the test is positive.

---

### Applying Bayes' Theorem:

$$
P(D|T^+) = \frac{P(T^+|D) \cdot P(D)}{P(T^+)}
$$

We need to calculate $ P(T^+) $, the total probability of testing positive, which includes both true positives and false positives:

$$
P(T^+) = P(T^+|D) \cdot P(D) + @@P(T^+|D^+) \cdot P(D^+)@@
$$

Where:
- @@$ P(D^+) = 1 - P(D) = 1 - 0.001 = 0.999 $@@

Now plug in the values:

$$
P(T^+) = (1 \cdot 0.001) + (0.05 \cdot 0.999) = 0.001 + 0.04995 = 0.05095
$$

Now compute $ P(D|T^+) $:

$$
P(D|T^+) = \frac{1 \cdot 0.001}{0.05095} \approx \frac{0.001}{0.05095} \approx 0.0196
$$

---

### Final Answer:
$$
\boxed{P(D|T^+) \approx 1.96\%}
$$

So, even though the test has a 5% false positive rate, the **chance that a person who tests positive actually has the disease is only about 2%** -- due to the very low prevalence of the disease.
\end{lstlisting}
\end{promptcard}
\begin{diagcard}{Diagnosis and cited evidence}
\textit{Fixed criterion scores of the draft.} Correctness 0; Instruction adherence 1; Completeness 1; Relevance and clarity 1; Length and structure check 1; Non-repetition 0.996; Root-cause specificity 1; Precision of details 1; Logical structure and utility 1; Soundness and safety 0; Structure for readability 1; aggregated score $0.168$. Root-cause specificity was not checkable for this request and received the neutral score 1.

\textit{Weakest referenced criterion.} Correctness, at 0.

\textit{Diagnosis returned by the diagnosis procedure.}
\begin{lstlisting}[style=ucoetext]
The answer incorrectly defines the false positive rate as P(T+|D+). It should be P(T+|D-). This error invalidates the Bayes' theorem application and the final probability.
\end{lstlisting}
\end{diagcard}
\begin{diagcard}{Repair instruction}
\begin{lstlisting}[style=ucoetext]
Correct the false positive rate definition to P(T+|D-) and recalculate the denominator using the true positive rate and false positive rate.
\end{lstlisting}
\end{diagcard}
\begin{successcard}{Revised passage}
Changed spans are shown in bold color; everything else is identical to the draft.
\begin{lstlisting}[style=ucoetext,moredelim={**[is][\color{cardGoldF}\bfseries]{@@}{@@}}]
[opening paragraph, prevalence line, and true-positive assumption unchanged]

- **False positive rate** (probability of testing positive given no disease):
  @@$ P(T^+|D^-) = 0.05 $@@

[Bayes' theorem statement unchanged]

$$
P(T^+) = P(T^+|D) \cdot P(D) + @@P(T^+|D^-) \cdot P(D^-)@@
$$

Where:
- @@$ P(D^-) = 1 - P(D) = 1 - 0.001 = 0.999 $@@

[numerical substitution, final answer 1.96%, and closing sentence unchanged]
\end{lstlisting}
\end{successcard}
\begin{successcard}{Score change and selection decision}
\textit{Revision scores.} Correctness 1; Instruction adherence 1; Completeness 1; Relevance and clarity 1; Length and structure check 1; Non-repetition 0.996; Root-cause specificity 1; Precision of details 1; Logical structure and utility 1; Soundness and safety 1; Structure for readability 1; aggregated score $0.950$.

\textit{Decision.} $0.950 > 0.168 + 0.01$, so the revision is served. Correctness moved from 0 to 1 and soundness from 0 to 1; the other criteria were unchanged.

\textit{Distilled procedure proposal.} Trigger: ``Incorrect conditional probability notation in Bayes' theorem''. Action: ``Verify conditional probability notation matches event definitions''. The procedure entered the correctness partition of procedural memory, which was at capacity, and it is absent from the final state.
\end{successcard}
\begin{analysiscard}{Retrospective outcome}
The final evaluator preferred the revision over the draft. Cross-scoring the same pair under the initial verifier $V_0$ gives a smaller margin of $0.25$, compared with $0.782$ under $V_1$.
\end{analysiscard}

\paragraph{A failed repair.}
The draft below correctly derives $\mu_2(j)=n-j+1$, and the diagnosis procedure objects to the
dependence on $n$.
\begin{promptcard}{User request}
\begin{lstlisting}[style=ucoetext]
Given: \begin{align}\label{k1}

\sigma_2(n)=\sum_{i=1}^ni^2=\sum_{i=1}^n\sum_{j=1}^i(2j-1).

\end{align}

This double sum can be interpreted as a single Lebesgue sum:

\begin{align}\label{mu}

\sigma_2(n)=\sum_{j=1}^n(2j-1)\mu_2(j).

\end{align}

determine tha form of the function $\mu_2(j)$
\end{lstlisting}
\end{promptcard}
\begin{promptcard}{Draft (complete)}
The check-mark symbol before the final heading is rendered as \texttt{[check]}.
\begin{lstlisting}[style=ucoetext]
We are given the identity:

$$
\sigma_2(n) = \sum_{i=1}^n i^2 = \sum_{i=1}^n \sum_{j=1}^i (2j - 1)
$$

This is a **double sum**, where for each $ i $, we sum over $ j = 1 $ to $ i $. Now, we are told that this double sum can be interpreted as a **single Lebesgue sum**:

$$
\sigma_2(n) = \sum_{j=1}^n (2j - 1) \mu_2(j)
$$

We are asked to **determine the form of the function $ \mu_2(j) $**.

---

### Step 1: Expand the original double sum

We expand the double sum:

$$
\sum_{i=1}^n \sum_{j=1}^i (2j - 1) = \sum_{j=1}^n (2j - 1) \cdot \#\{i \in [1, n] : i \geq j\}
$$

That is, for each fixed $ j $, the inner sum over $ i $ from 1 to $ n $ includes all $ i \geq j $, so the number of such $ i $ is $ n - j + 1 $.

Thus,

$$
\sum_{i=1}^n \sum_{j=1}^i (2j - 1) = \sum_{j=1}^n (2j - 1)(n - j + 1)
$$

So this tells us that:

$$
\sigma_2(n) = \sum_{j=1}^n (2j - 1) \mu_2(j)
$$

Comparing both expressions, we can now identify:

$$
\mu_2(j) = n - j + 1
$$

---

### [check] Final Answer:

$$
\boxed{\mu_2(j) = n - j + 1}
$$
\end{lstlisting}
\end{promptcard}
\begin{failcard}{Mistaken diagnosis}
\textit{Fixed criterion scores of the draft.} Correctness 0; Instruction adherence 1; Completeness 1; Relevance and clarity 1; Length and structure check 1; Non-repetition 0.915; Root-cause specificity 1; Precision of details 1; Logical structure and utility 1; Soundness and safety 0; Structure for readability 1; aggregated score $0.1673$. Root-cause specificity was not checkable and received the neutral score 1.

\textit{Weakest referenced criterion.} Correctness, at 0.

\textit{Diagnosis returned by the diagnosis procedure.}
\begin{lstlisting}[style=ucoetext]
The solution incorrectly identifies mu_2(j) as n-j+1, which depends on n. The function mu_2(j) must be independent of n to satisfy the identity for all n.
\end{lstlisting}
\textit{Repair instruction.}
\begin{lstlisting}[style=ucoetext]
Recognize that mu_2(j) must be independent of n; the current derivation conflates the summation limit with the term definition.
\end{lstlisting}
The Greek letter and subscript in the original are rendered as \texttt{mu\_2}. The claim that the function must not depend on $n$ is wrong for this identity, whose upper limit is $n$.
\end{failcard}
\begin{failcard}{Accepted revision (complete)}
The revision is the boxed answer alone. The derivation was removed and the answer did not change.
\begin{lstlisting}[style=ucoetext]
$$
\boxed{\mu_2(j) = n - j + 1}
$$
\end{lstlisting}
\end{failcard}
\begin{diagcard}{Score change and selection decision}
\textit{Revision scores.} Correctness 1; Instruction adherence 1; Completeness 1; Relevance and clarity 1; Length and structure check 1; Non-repetition 1; Root-cause specificity 1; Precision of details 1; Logical structure and utility 1; Soundness and safety 1; Structure for readability 1; aggregated score $0.95$.

\textit{Decision.} $0.95 > 0.1673 + 0.01$, so the revision was served. The verifier model judged the bare answer correct and sound, and the non-repetition check rose to 1 because a one-line response has no repeated 8-grams.

\textit{Distilled procedure proposal.} Trigger: ``Variable depends on global parameter''. Action: ``Ensure derived functions are independent of global parameters''. This item was only a proposal; it is absent from the final state and was not reused.
\end{diagcard}
\begin{failcard}{Retrospective outcome}
The final evaluator preferred the draft over the revision. Under $V_0$ the same pair would have gained $0.25$; under $V_1$ it gained $0.783$.
\end{failcard}

\paragraph{Two checkpoint decisions.}
The three cards give the diagnostics line that the verifier model saw at task 128, the candidate
adopted there, and the candidate retained against at task 256.
\begin{analysiscard}{Diagnostics supplied to synthesis at task 128}
The residual line inserted into the candidate-synthesis prompt, recomputed from the completed rows:
\begin{lstlisting}[style=ucoetext]
Aggregate prefix diagnostics: repair win rate 35/128; weakest-criterion counts {"co": 111, "ia": 3, "nr": 6, "pr": 2, "root_cau": 4}; memory rules 6.
\end{lstlisting}
The counts say that 35 of the first 128 revisions were accepted and that correctness was the weakest criterion in 111 diagnoses.
\end{analysiscard}
\begin{successcard}{Candidate adopted at task 128}
\textit{Substantive changes.} Two appended criteria (soundness and safety; structure for readability) and a rewritten aggregation that weights correctness at $0.5$ and multiplies the sum by $0.4+0.6\,\mathrm{co}$, as shown in the verifier cards above.

\textit{Selection set.} Worst-family centered accuracy $-0.1667$ for the current verifier and $0.0$ for the selected candidate; mean centered accuracy $0.2708$ and $0.4583$. This is a strict gain.

\textit{Held-out set, centered accuracy by family (candidate versus current verifier).} Omission $0.4792$ vs.\ $0.3542$; unsupported claim $0.9792$ vs.\ $0.9792$; irrelevant insertion $0.1875$ vs.\ $-0.0417$; root cause $0.0000$ vs.\ $0.0000$. Candidate successes per family: 71/96, 95/96, 57/96, 12/24.

\textit{Paired held-out result.} 19 pairs won only by the candidate, 2 won only by the current verifier. No family regressed by more than $0.10$.

\textit{Decision.} The candidate was adopted under the empirical verifier-update rule. Auxiliary
confidence bounds were mixed: the paired lower bound cleared $0.5$, whereas the per-family lower
bounds did not reach the logged threshold. We therefore interpret this as an observed improvement
on the comparison sets, without a population-level claim.
\end{successcard}
\begin{failcard}{Current verifier retained at task 256}
\textit{Diagnostics supplied.} 27 accepted revisions among tasks 129 to 256, correctness weakest in 107 diagnoses, 6 stored procedures.

\textit{Proposed change.} Two appended criteria (grounding of claims or explicit uncertainty; acknowledgement of limitations and edge cases) and the aggregation
\begin{lstlisting}[style=ucoecode]
min(1, max(0, co * (0.4 + 0.6 * (ia + cp + cl + fc + nr + root_cau + pr + ul + sa + ts + fg + se) / 12)))
\end{lstlisting}
\textit{Selection set.} Worst-family centered accuracy $-0.6667$ for the candidate against $0.0$ for the current verifier; mean centered accuracy $-0.0208$ against $0.375$.

\textit{Decision.} No strict selection gain, so the held-out comparison was not run and the current verifier was retained.
\end{failcard}

\paragraph{One procedure from creation to reuse.}
The last card replays the retrieval and update records for one stored procedure, from the task
that created it to a later task that retrieved it.
\begin{analysiscard}{One procedure, from creation to reuse}
\textit{Task 417 (mathematics: a puzzle in which $4+3=21$ and $2+5=35$).} The diagnosis procedure flagged clarity: ``The answer contains excessive, repetitive, and irrelevant trial-and-error calculations (e.g., listing '4 × 3 + (4 + 3) = 19' multiple times), making the response rambling and obscuring the logical derivation.'' The repair asked for the failed attempts to be removed. The revision was accepted (score $0.8698$ to $0.95$).

\textit{Distilled and stored.} Trigger: ``Excessive repetitive trial-and-error obscures logic''. Action: ``Remove redundant failed attempts; show only successful derivation.''. Dimension: clarity; scope: general. In the final state this procedure records 12 retrievals, 5 accepted and 8 rejected uses.

\textit{Task 420 (mathematics: simplify $(p\leftrightarrow q)\wedge(r\leftrightarrow q)$).} Retrieval returned two procedures, created at tasks 415 and 417. The diagnosis procedure identified the final answer as incorrect, the revision was accepted (score $0.1078$ to $0.9497$), and the final evaluator preferred the revision.

Because two procedures were retrieved, the later outcome is not attributed to this single procedure. The timeline is descriptive.
\end{analysiscard}

\end{document}